\PassOptionsToPackage{dvipsnames}{xcolor} 
\documentclass[11pt]{article}
\usepackage{mathrsfs,amsmath,amsfonts,amssymb,bm,bbm,dsfont,pifont,amscd,stmaryrd,euscript,amsthm,color,epsfig,xr,yfonts,tikz,verbatim}

\usepackage{tikz-cd} 
\usepackage{include/shortcutsg}
\usepackage{authblk}
\usepackage[utf8]{inputenc} 
\usepackage[T1]{fontenc}    
\usepackage{url}            
\usepackage{booktabs}       
\usepackage{amsfonts}  
\usepackage{amsmath} 
\usepackage{nicefrac}       
\usepackage{microtype}      
\usepackage{xcolor}
\usepackage{graphicx}
\usepackage{array}
\usepackage{subcaption,mathrsfs}
\usepackage[sort]{natbib}
\usepackage{wrapfig}
\usepackage{tikz}
\usepackage{include/titletoc}
\usepackage[makeroom]{cancel}
\usepackage{algorithm}
\usepackage{algpseudocode}

\usepackage[dvipsnames]{xcolor}
\definecolor{dgreen}{rgb}{0.00,0.49,0.00}
\definecolor{dblue}{rgb}{0,0.08,0.75}
\usepackage{hyperref}       
\hypersetup{
    colorlinks =true,
    citecolor=dgreen,
    urlcolor=dblue,
    linkcolor=dblue
}
\usepackage[capitalize]{cleveref}

\newtheorem{thm}{Theorem}
\newtheorem{lem}{Lemma}

\newtheorem{cor}{Corollary}
\newtheorem{cond}{Condition}

\theoremstyle{definition}
\newtheorem{rem}{Remark}
\newtheorem{defn}{Definition}
\newtheorem{assum}{Assumption}

\Crefname{assum}{Assumption}{Assumptions}
\Crefname{thm}{Theorem}{Theorems}
\Crefname{lem}{Lemma}{Lemmas}
\Crefname{cor}{Corollary}{Corollaries}
\Crefname{cond}{Condition}{Conditions}
\Crefname{rem}{Remark}{Remarks}

\DeclareMathOperator{\E}{\mathbb{E}}
\DeclareMathOperator{\Var}{Var}

\newcommand{\Prb}{\mathbb{P}}

\newcommand{\cF}{\mathcal{F}}

\DeclareMathOperator*{\argmax}{arg\,max}
\DeclareMathOperator*{\argmin}{arg\,min}

\newcommand{\lra}[1]{\left\langle #1 \right\rangle}

\let\hat\widehat

\usepackage{enumitem}

\usepackage{enumitem}

\title{{\bf Nonparametric Instrumental Variable Analysis Without Structural Equations}:\\ {\Large\it Debiased Inference on Functionals of Inverse Problems with No Solutions}}

\author[1]{Zikai Shen}
\author[2,3]{Nathan Kallus}
\author[4]{Dimitri Meunier}
\author[4]{Houssam Zenati}
\author[4]{Arthur Gretton}
\author[2]{Aur\'elien Bibaut}

\affil[1]{University College London}
\affil[2]{Netflix}
\affil[3]{Cornell University}
\affil[4]{Gatsby Computational Neuroscience Unit, University College London}

\begin{document}

\renewcommand{\doteq}{:=}

\maketitle

\begin{abstract}
We consider debiased inference on finite-dimensional functionals of infinite-dimensional least-squares solutions to inverse problems as a way to avoid having to assume exact solutions exist. Such assumptions are substantive and not innocuous, and their failure may imperil inference when we impose them on the statistical model. Our approach instead allows us to conduct inference on a quantity that is defined regardless of solutions existing and coincides with the usual estimands when they do. For the case of instrumental variables, this means we can motivate the analysis with structural models but these do not need to hold exactly for the semiparametric inferential procedure to remain valid.
\end{abstract}

\section{Introduction}

Instrumental variable (IV) analyses generally start by posing a \textit{structural equation}:
\begin{equation}Y=h_{\rm structural}(X)+\epsilon,\label{eq:structural}\end{equation}
where $h_{\rm structural}$ represents the \textit{causal effect} of $X$ on $Y$, and $X$ and $\epsilon$ may be endogenous ($\E[\epsilon\mid X]\neq0$).
Then given an exogenous instrument $Z$ satisfying the exclusion restriction, the common \textit{statistical} solution given joint observations of $W=(X,Y,Z)\sim P$ is to conduct inference on some continuous linear functional $h\mapsto \E_P[m(W;h)]$
of a solution $h\in\Hcal$ to the linear equation implied by exclusion:
\begin{equation}\label{eq:CMR}
T_P h=r_P,
\end{equation}
where $T_P:\Hcal\to\Gcal$ maps $h\mapsto \argmin_{g\in \Gcal} \E_P(h(X)-g(Z))^2$, $r_P=\argmin_{r\in \Gcal} \E_P(Y-r(Z))^2$, and $\Hcal$, $\Gcal$ are closed linear subspaces of square-integrable functions of $X$ and of $Z$, respectively. For example, if these are \textit{all} square-integrable functions, then $(T_Ph)(Z)=\E_P[h(X)\mid Z]$ is the conditional expectation.

There are two possible challenges with this:
\begin{itemize}
\item \textbf{The existence of solutions to \cref{eq:CMR}}. When we start with \cref{eq:structural} with $h_{\rm structural}\in\Hcal$ we are guaranteed a solution to \cref{eq:CMR} in $\Hcal$ by construction. For example, we may assume a \textit{linear} structural equation \citep{theil1953repeated,basmann1957generalized,sargan1958estimation,anderson2005origins}. Nonparametric IV (NPIV) analysis aims to relax this by allowing $h_{\rm structural}$ to be \textit{any} (square-integrable) function. However, this still assumes some solution exists, which is still a very substantive assumption. If we impose this assumption on the statistical model we consider, as done in \citet{severini2012efficiency,bennett2023sourceconditiondoublerobust,bennett2025inference,ai2012semiparametric,ai2003efficient,chen2009efficient,chen2012estimation,kallus2021causal,cui2024semiparametric} who achieve efficiency in this statistical model, then the estimator may well deteriorate very ungracefully when no perfect solution to \cref{eq:CMR} actually exists. This can be understood as exclusion and/or homogeneity (additive $\epsilon$) not always holding exactly, as may occur in practice. To illustrate the point: in an ``overidentified" setting with many instruments, would we really believe all instruments perfectly identify the \emph{exact same} causal effect \citep{hall2003large,andrews2019structure,andrews2025purpose}?
\item \textbf{The uniqueness of $\E_P[m(W;h)]$ over solutions}. Even if we have solutions to \cref{eq:CMR}, there may be many, and they may not all lead to the same value of $\E_P[m(W;h)]$. This can be understood as instrument weakness: the variations $Z$ induces in $X$ can be insufficient to characterize the aspects of $h_{\rm structural}$ we care about. That $T_Ph=T_Ph'$ implies $\E_P[m(W;h)] = \E_P[m(W;h')]$ for all $h,h'\in\Hcal$ would be implied by the existence of a solution in $\Gcal$ to $T_P^*g={a_P}$, where $a_P\in\Hcal$ is the Riesz representer of $h\mapsto \E_P[m(W;h)]$ (since this implies $a_P\in\range(T_P^*)\subseteq\kernel(T_P)^\bot$). These identification assumptions have been explored in \citet{severini2012efficiency,ichimura2022influence, bennett2023sourceconditiondoublerobust,bennett2025inference, vanderlaan2026nonparametricinstrumentalvariableinference,10.1257/aer.20231765}. However, the same conundrum arises: a solution to that linear equation may well only \textit{approximately} exist in practice.
\end{itemize}

In both cases, it would behoove us to not impose the assumptions of the existence of solutions (to $T_Ph=r_P$, to $T_P^*g=a_P$, or to both) on any statistical model. While the equations may be well motivated, \textit{perfect} solutions may not really exist in practice and this can derail inference procedures that assume they do. Instead, we would rather target a statistical quantity that is well-defined \textit{regardless} of existence of such solutions, coincides with the structural quantities when they do make sense, and degrades reasonably in mild violations of existence of solutions.

Toward that end, in this paper we consider debiased inference on the following estimand:
 \begin{align*}
     \Psi(P)&\doteq \mathbb{E}_P[g(Z)h(X)],\quad h\in \Hcal_P,\quad g\in\Gcal_P,
 \end{align*}
 where $\Hcal_P\doteq \argmin_{h\in\Hcal}\|T_P h-r_P\|_{L_2(P_Z)}^2$ and $\Gcal_P\doteq\argmin_{g\in\Gcal}\|T_P^* g-a_P\|_{L_2(P_X)}^2$, which, under a mild assumption given in the next section (\Cref{assum: range_condition}), is well defined without requiring the existence of solutions to either equation $T_P h =r_P$ or $T_P^{\ast} g = a_P$, and regardless of which minimizer we choose in the above population least-squares problems. Of course, when solutions to $T_Ph = r_P$ and $T_P^{\ast}g = a_P$ exist, we always recover the original causal structural estimand.

To allow sufficient generality so as to also cover such related problems as proximal causal inference \citep{miao2018identifying, deaner2018proxy, cui2024semiparametric,kallus2021causal}, we generalize $r_P$ to be the (unknown) Riesz representer of some continuous linear functional
\begin{align}
    \label{eq:m_tilde}
    \Gcal\to\RR:g\mapsto\E_P[\tilde m(W;g)],
\end{align}
where $\tilde m$ is a known function. For example, for the IV setting, we set $\tilde m(W;g)=Yg(Z)$. Similarly, we also let $a_P$ be unknown, and only require that it is the (unknown) Riesz representer of some continuous linear functional $\Hcal\to\RR:h\mapsto\E_P[m(W;h)]$, where $m$ is a known function. We refer the reader to Examples 2 and 3 in \citet{bennett2025inference} for instantiation in proximal causal inference and missing-not-at-random data.


\section{Setup and Identification}


\paragraph{Notation.} Throughout this paper we consider three random variables $X,Y,Z$ which take values respectively in the separable measurable spaces $E_X, E_Y, E_Z$ endowed with their respective Borel $\sigma$-fields $\mathcal{F}_{E_X}, \mathcal{F}_{E_Y}, \mathcal{F}_{E_Z}$. We write $W$ for the random vector $(X,Y,Z)$, and we let $E_W = E_X \times E_Y \times E_Z$. We let $(\Omega, \mathcal{F}, P)$ denote the underlying probability space with expectation operator $\mathbb{E}_P$, abbreviated as $\mathbb{E}$. Let $P_X, P_Y, P_Z$ denote the pushforward of $P$ under $X,Y,Z$ respectively, i.e. $X\sim P_X, Y\sim P_Y, Z\sim P_Z$. We denote the space of real-valued Lebesgue square integrable functions on $(E_X, \mathcal{F}_{E_X})$ with respect to $P_X$ as $L^2(E_X, \mathcal{F}_{E_X}, P_X)$, abbreviated as $L^2(P_X)$. We define $L^2(P_Y)$ and $L^2(P_Z)$ similarly. We let $\Hcal, \Gcal$ denote closed linear subspaces of $L^2(P_X)$ and $L^2(P_Z)$ respectively. We write $\|\cdot\|_{\Hcal} = \|\cdot\|_{L^2(P_X)}$ and $\|\cdot\|_{\Gcal} = \|\cdot\|_{L^2(P_Z)}$. Let $\Hcal_1$ and $\Hcal_2$ be Hilbert spaces. We write $\mathcal{L}(\Hcal_1,\Hcal_2)$ as the Banach space of bounded linear operators from $\Hcal_1$ to $\Hcal_2$, equipped with the operator norm $\|\cdot\|_{\mathrm{op}}$. For an operator $A\in \mathcal{L}(\Hcal_1, \Hcal_2)$, we let $A^{\ast} \in \mathcal{L}(\Hcal_2, \Hcal_1)$ denote its adjoint. For a subset $A\subseteq \Hcal_1$, we let $\overline{A}$ denote its closure in $\Hcal_1$, and we let $A^{\perp} \subseteq \Hcal_1$ denote its orthogonal complement in $\Hcal_1$. 

We let $T_P: \Hcal\to \Gcal$ be defined as the mapping $h\mapsto \argmin_{g\in \mathcal{G}} \mathbb{E}_P[(h(X) - g(Z))^2]$. $T_P$ is well-defined by the unique existence of orthogonal projection onto a closed linear subspace of a Hilbert space. Hence the operator $T_P$ is linear and bounded, with $\|T_P\|_{\mathrm{op}}\le 1$. For any $h\in \Hcal$ and $g\in \Gcal$, we have $\mathbb{E}_P[h(X)g(Z)] = \langle T_Ph, g\rangle_{L^2(P_Z)} = \langle h, T_P^{\ast}g \rangle_{L^2(P_X)}$, by properties of orthogonal projection. 

Let $X_n, X$ be random variables, for $n\in \mathbb{N}$. $X_n\xrightarrow{p}X$ denotes $X_n$ converges to $X$ in probability. $X_n \rightsquigarrow X$ denotes $X_n$ converges to $X$ in distribution. For two sequences of real numbers $(a_n), (b_n)$, $a_n=O(b_n)$ means $|a_n|/|b_n|$ is bounded, $a_n=o(b_n)$ means $|a_n|/|b_n|\to0$, $a_n=\omega(b_n)$ means $|a_n|/|b_n|\to\infty$, $X_n=O_p(b_n)$ means $X_n/b_n$ is bounded in probability, $X_n=o_p(b_n)$ means $X_n/b_n\xrightarrow{p}0$.    

\paragraph{Definition and well-definedness of target functional.}
We let $m: E_W \times \Hcal \to \mathbb{R}$ and $\widetilde{m} : E_W \times \Gcal \to \mathbb{R}$ denote two known functions, which are respectively linear in $\Hcal$ and $\Gcal$. We also require that the $\mathbb{R}$-valued linear functionals on $\Hcal$ and $\Gcal$ given by $h\mapsto \mathbb{E}_P[m(W;h)]$ and $g\mapsto \mathbb{E}_P[\widetilde{m}(W;g)]$ are both continuous. In \Cref{assum:mscont} in \Cref{sec:minimax_learners}, we will also require that they are continuous in a mean-squared sense. We let $a_P\in \Hcal$ and $r_P\in \Gcal$ denote their unique Riesz representers, whose existence is guaranteed by the Riesz-Fréchet representation theorem. We define the subsets 
\begin{align}
\Hcal_P&\doteq \argmin_{h\in\Hcal}\|T_P h-r_P\|_{L_2(P_Z)}^2, \label{eq:hcal_P}\\
\Gcal_P&\doteq \argmin_{g\in\Gcal}\|T_P^* g-a_P\|_{L_2(P_X)}^2.  \label{eq:gcal_P}
\end{align}
In this paper, we refer to the linear equation $T_P h = r_P$ as the \emph{primary} linear equation and $T_P^{\ast}g = a_P$ as the \emph{secondary} linear equation. We refer to an element of $\Hcal_P$ as \emph{primary} nuisance and an element of $\Gcal_P$ as \emph{secondary} nuisance. \footnote{This distinction is arbitrary as our setup is symmetric for $\Hcal_P$ and $\Gcal_P$.} We now state an assumption that is the necessary and sufficient condition for $\Hcal_P$ and $\Gcal_P$ to be non-empty. We first introduce some additional notation. Since $T_P$ and its adjoint operator $T_P^{\ast}$ enjoy the relation $\kernel(T_P)^{\perp} = \overline{\range(T_P^{\ast})}$, we have
\begin{align*}
    \Hcal &= \kernel(T_P)\oplus \kernel(T_P)^\perp = \kernel(T_P)\oplus \overline{\range(T_P^*)},\\
    \Gcal &= \kernel(T_P^*)\oplus \kernel(T_P^*)^\perp = \kernel(T_P^*)\oplus \overline{\range(T_P)}. 
\end{align*}
Accordingly, write
\begin{align*}
    &a_P =a_{P,\parallel}+a_{P,\perp},
    \qquad
    a_{P,\parallel}\in \kernel(T_P)^\perp
    = \overline{\range(T_P^*)},
    \qquad
    a_{P,\perp}\in \kernel(T_P),\\
     &r_P=r_{P,\parallel}+r_{P,\perp},\;
    \qquad
    r_{P,\parallel}\in \kernel(T_P^*)^\perp
    = \overline{\range(T_P)},
    \qquad
    r_{P,\perp}\in \kernel(T_P^*).
\end{align*}
The orthogonal residuals $r_{P, \perp}$ and $a_{P, \perp}$ will play a key role in our story. 
\begin{assum} \label{assum: range_condition} $r_{P,\parallel}\in \range(T_P), \; a_{P,\parallel}\in \range(T_P^*)$. 
\end{assum}
\begin{lem}[Nonemptiness of $\Hcal_P$ and $\Gcal_P$]\label{lem: nonempty_ls}We have $\Hcal_P\neq\varnothing$ if and only if $r_{P,\parallel}\in\range(T_P)$, and $\Gcal_P\neq\varnothing$ if and only if $a_{P,\parallel}\in\range(T_P^*)$. Hence \Cref{assum: range_condition} is necessary and sufficient for $\Hcal_P$ and $\Gcal_P$ to be nonempty. \end{lem}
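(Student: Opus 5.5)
We prove the statement for $\Hcal_P$ only; the claim for $\Gcal_P$ follows by exchanging the roles of $T_P$ and $T_P^*$, $\Hcal$ and $\Gcal$, and $r_P$ and $a_P$. Since the setup is symmetric, no additional work is needed.

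The key step is a Pythagorean decomposition of the objective. For any $h\in\Hcal$ we have $T_Ph\in\range(T_P)\subseteq\overline{\range(T_P)}=\kernel(T_P^*)^\perp$, and also $r_{P,\parallel}\in\overline{\range(T_P)}$. Hence $T_Ph-r_{P,\parallel}\in\kernel(T_P^*)^\perp$, while $r_{P,\perp}\in\kernel(T_P^*)$. Orthogonality therefore gives
\begin{equation*}
\|T_Ph-r_P\|_{L_2(P_Z)}^2=\|T_Ph-r_{P,\parallel}\|_{L_2(P_Z)}^2+\|r_{P,\perp}\|_{L_2(P_Z)}^2 .
\end{equation*}
Since the second term does not depend on $h$, minimizing the objective over $\Hcal$ is equivalent to minimizing $\|T_Ph-r_{P,\parallel}\|$ over $\Hcal$.

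Next we compute the infimum. Because $r_{P,\parallel}\in\overline{\range(T_P)}$, there is a sequence $h_n\in\Hcal$ with $T_Ph_n\to r_{P,\parallel}$. Hence $\inf_{h\in\Hcal}\|T_Ph-r_{P,\parallel}\|=0$, and the infimum of the original objective equals $\|r_{P,\perp}\|^2$.

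Both directions of the equivalence follow from this. If $r_{P,\parallel}=T_Ph_0$ for some $h_0\in\Hcal$, then $h_0$ attains the value $\|r_{P,\perp}\|^2$, so $h_0\in\Hcal_P$ and $\Hcal_P\neq\varnothing$. Conversely, suppose $h\in\Hcal_P$. Then $h$ attains the infimum, so $\|T_Ph-r_{P,\parallel}\|^2=0$, which means $T_Ph=r_{P,\parallel}$ and hence $r_{P,\parallel}\in\range(T_P)$. The same argument with $T_P^*$, $\overline{\range(T_P^*)}=\kernel(T_P)^\perp$ and $a_P=a_{P,\parallel}+a_{P,\perp}$ handles $\Gcal_P$. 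Combining the two equivalences shows that \Cref{assum: range_condition} is necessary and sufficient for both sets to be nonempty.

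The only step requiring care is the infimum computation. One must observe that the infimum is exactly $\|r_{P,\perp}\|^2$ because $r_{P,\parallel}$ lies in the closure of the range of $T_P$. Attainment of the infimum is then precisely membership of $r_{P,\parallel}$ in the range itself rather than only in its closure. Everything else reduces to the orthogonal decompositions already recorded in the setup.
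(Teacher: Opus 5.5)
Your proof is correct and follows the paper's argument. Both rest on the same Pythagorean split $\|T_Ph-r_P\|_{L^2(P_Z)}^2=\|T_Ph-r_{P,\parallel}\|_{L^2(P_Z)}^2+\|r_{P,\perp}\|_{L^2(P_Z)}^2$, after which the paper simply asserts the equivalence. Your explicit remark that the infimum equals $\|r_{P,\perp}\|_{L^2(P_Z)}^2$, because $r_{P,\parallel}\in\overline{\range(T_P)}$, supplies the step the paper leaves implicit.
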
 \begin{proof} Since $T_Ph\in\range(T_P)\subseteq\kernel(T_P^*)^\perp$ and $r_{P,\perp}\in\kernel(T_P^*)$, we have
\begin{align*}
    \|T_Ph-r_P\|_{L^2(P_Z)}^2=\|T_Ph-r_{P,\parallel}\|_{L^2(P_Z)}^2+\|r_{P,\perp}\|_{L^2(P_Z)}^2. 
\end{align*}
Therefore $\Hcal_P\neq\varnothing$ if and only if $r_{P,\parallel}\in\range(T_P)$. The argument for $\Gcal_P$ is identical. \end{proof}
\begin{rem}[Existence of solutions vs. least squares minimizers]
\label{rem:existence_solns_ls}
\Cref{assum: range_condition} asserts that the component of $r_P$ in $\kernel(T_P^{\ast})^{\perp}$ is in fact sufficiently smooth with respect to $T_P$, and it makes the analogous assertion for $a_P$. Equivalently,
\[
    r_P\in \kernel(T_P^*)+\range(T_P),
    \qquad
    a_P\in \range(T_P^*)+\kernel(T_P).
\]
This condition guarantees that the least squares solution sets $\Hcal_P$ and $\Gcal_P$ are nonempty. It automatically holds when $\mathcal H$ and $\mathcal G$ are finite-dimensional, since the closure of a finite dimensional linear space is always itself. Importantly, \Cref{assum: range_condition} does not impose solvability of
the primary or dual equations. Solvability of $T_Ph=r_P$ would require
both
\[
    r_{P,\parallel}\in \range(T_P)
    \qquad\text{and}\qquad
    r_{P,\perp}=0.
\]
Similarly, solvability of $T_P^*g=a_P$ would require both
\[
    a_{P,\parallel}\in \range(T_P^*)
    \qquad\text{and}\qquad
    a_{P,\perp}=0.
\]
Thus, compared with the usual identification condition in
\citet{bennett2023sourceconditiondoublerobust,bennett2025inference,severini2012efficiency},
which restricts $a_P$ to lie in $\range(T_P^*)$, we further allow $a_P$ to have a component in
$\kernel(T_P)$. Analogously, we allow $r_P$ to have a component in
$\kernel(T_P^*)$. In other words, our analysis does not presuppose existence of solutions to either
$T_Ph=r_P$ or $T_P^*g=a_P$. In \Cref{rem:bennett_exact_solution}, we see that we do indeed recover the results in \citet{bennett2023sourceconditiondoublerobust} when we assume both $r_{P,\perp} = 0$ and $a_{P, \perp} = 0$.
\end{rem}

We now show that $\mathbb E_P[h(X)g(Z)]$ is independent of the choice of least squares
solutions $h\in\mathcal H_P$ and $g\in\mathcal G_P$. 
\begin{lem}[Well-definedness of the target functional]
\label{lem: param_inv}
Under \Cref{assum: range_condition}, let $h,g$ denote any element of $\Hcal_P$ and $\Gcal_P$ respectively. We have $\mathcal{H}_P = h + \kernel(T_P)$ and $\mathcal{G}_P = g + \kernel(T_P^{\ast})$. 
    Hence, for any $h,h'\in \mathcal{H}_P$ and $g,g'\in \mathcal{G}_P$, we have
    \begin{align*}
        \mathbb{E}_P[h(X)g(Z)] = \mathbb{E}_P[h'(X)g'(Z)].
    \end{align*}
\end{lem}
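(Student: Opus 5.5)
We will first characterize $\Hcal_P$ and $\Gcal_P$ as affine translates of the kernels, and then use the adjoint identity $\E_P[h(X)g(Z)]=\langle T_Ph,g\rangle_{L^2(P_Z)}=\langle h,T_P^*g\rangle_{L^2(P_X)}$ to show that the target does not depend on which representatives are chosen.

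\textbf{Step 1: the solution sets are translates of the kernels.} By \Cref{lem: nonempty_ls} and \Cref{assum: range_condition}, both $\Hcal_P$ and $\Gcal_P$ are nonempty. The proof of \Cref{lem: nonempty_ls} gives the decomposition
\begin{align*}
\|T_Ph-r_P\|_{L^2(P_Z)}^2=\|T_Ph-r_{P,\parallel}\|_{L^2(P_Z)}^2+\|r_{P,\perp}\|_{L^2(P_Z)}^2.
\end{align*}
Since $r_{P,\parallel}\in\range(T_P)$, the minimum value is $\|r_{P,\perp}\|^2$. It is attained exactly when $T_Ph=r_{P,\parallel}$.

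Fix one minimizer $h$. Then $h'\in\Hcal_P$ if and only if $T_Ph'=r_{P,\parallel}=T_Ph$, that is, $h'-h\in\kernel(T_P)$. Hence $\Hcal_P=h+\kernel(T_P)$.

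The symmetric argument, using $\Hcal=\kernel(T_P)\oplus\overline{\range(T_P^*)}$, shows that $g'\in\Gcal_P$ if and only if $T_P^*g'=a_{P,\parallel}$. Hence $\Gcal_P=g+\kernel(T_P^*)$.

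\textbf{Step 2: invariance of the target.} Take $h,h'\in\Hcal_P$ and $g,g'\in\Gcal_P$, and write
\begin{align*}
\E_P[h'(X)g'(Z)]-\E_P[h(X)g(Z)]=\E_P[(h'-h)(X)g'(Z)]+\E_P[h(X)(g'-g)(Z)].
\end{align*}
The first term equals $\langle T_P(h'-h),g'\rangle_{L^2(P_Z)}=0$, because $h'-h\in\kernel(T_P)$. The second term equals $\langle h,T_P^*(g'-g)\rangle_{L^2(P_X)}=0$, because $g'-g\in\kernel(T_P^*)$. The two expectations are therefore equal.

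\textbf{Main obstacle.} There is essentially no hard step. The only point needing care is to use \Cref{assum: range_condition} to identify the minimum value exactly, so that the minimizers are characterized by the linear equation $T_Ph=r_{P,\parallel}$ (respectively $T_P^*g=a_{P,\parallel}$), rather than only by an infimum that might not be attained.
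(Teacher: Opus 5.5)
Your proof is correct and follows essentially the same route as the paper: you identify $\mathcal{H}_P$ and $\mathcal{G}_P$ as translates of $\kernel(T_P)$ and $\kernel(T_P^*)$, then move the kernel differences across the adjoint identity. The differences are cosmetic: the paper characterizes minimizers through the normal equation $T_P^*T_Ph=T_P^*r_P$ together with $\kernel(T_P^*T_P)=\kernel(T_P)$ rather than your Pythagorean split giving $T_Ph=r_{P,\parallel}$, and it chains equalities swapping $g\to g'$ and then $h\to h'$ instead of telescoping the difference.
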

\begin{proof}
Note that $h \in \mathcal{H}_P$ satisfies the normal equation
    \begin{align*}
        T_P^{\ast}T_Ph - T_P^{\ast}r_P = 0.
    \end{align*}
    We therefore have $\mathcal{H}_P = h + \kernel(T_P^{\ast}T_P) = h + \kernel(T_P)$. The statement for $\mathcal{G}_P$ follows analogously. For any $h,h'\in \mathcal{H}_P$ and $g,g'\in \mathcal{G}_P$, we have
    \begin{align*}
        &\mathbb{E}_P[h(X)g(Z)] = \mathbb{E}_P[h(X)(T_P^{\ast}g)(X)] = \mathbb{E}_P[h(X)(T_P^{\ast}g')(X)]\\ &= \mathbb{E}_P[h(X)g'(Z)] = \mathbb{E}_P[(T_P h)(Z)g'(Z)] = \mathbb{E}_P[h'(X)g'(Z)]. \qedhere
    \end{align*}
\end{proof}
Under \Cref{assum: range_condition} and by \Cref{lem: param_inv}, the following object is well-defined:
\begin{align}
    \Psi(P)&\doteq \mathbb{E}_P[g(Z)h(X)],\quad h\in \Hcal_P,\quad g\in\Gcal_P, \label{eq:target_functional}
\end{align}
We refer to $\Psi$ as the target functional, and we always make \Cref{assum: range_condition} for the rest of the paper. We observe that $\Psi$ is a \emph{bilinear} functional in the nuisances $g$ and $h$. 


\section{Functional bias expansion}
\label{sec: functional_bias}

\paragraph{Minimum norm elements of $\Hcal_P, \Gcal_P$.} Under \Cref{assum: range_condition}, the least-squares solution sets
$\Hcal_P$ and $\Gcal_P$ are nonempty closed affine subspaces. So,
we can define the minimum norm elements of $\Hcal_P, \Gcal_P$, i.e. minimum norm least squares solutions: 
\begin{align*}
    h^{\dag}_P = \argmin_{h\in \mathcal{H}_P} \|h\|_{L^2(P_X)},\quad  g^{\dag}_P = \argmin_{g\in \mathcal{G}_P} \|g\|_{L^2(P_Z)},
\end{align*}
where $\Hcal_P$ and $\Gcal_P$ are defined in \cref{eq:hcal_P} and \cref{eq:gcal_P}. Minimum norm elements provide canonical choices in $\Hcal_P$ and $\Gcal_P$ when these sets are non-empty. We let $T_P^{\dag}$ denote the Moore-Penrose pseudoinverse of $T_P$, which is a (generally unbounded) operator with domain $\mathrm{dom}(T_P^{\dag}) = \range(T_P) \oplus  \kernel(T_P^{\ast})$ \citep{engl1996regularization}. By definition of $\mathcal{H}_P$ and the fact that $r_P\in \mathrm{dom}(T_P^{\dag})$, we have $ h^{\dag}_P = T_P^{\dag} r_P$. The restricted operator $T_P|_{\kernel(T_P)^\perp}:\kernel(T_P)^\perp\to\range(T_P)$ is a bijection, and thus it has a well-defined inverse on $\range(T_P)$. Since $T_P^{\dag}$ annihilates the $\kernel(T_P^{\ast})$-component of $r_P$, we have
\begin{align*}
    h^{\dag}_P = T_P^{\dag} r_{P} = T_P^{\dag} r_{P,\parallel} = (T_P|_{\kernel(T_P)^{\perp}})^{-1}r_{P,\parallel}.
\end{align*}
In addition, the pseudoinverse can also be interpreted as the Tikhonov regularized solution in the limit of vanishing regularization,
\begin{align}
    h^{\dag}_P = \lim_{\lambda \downarrow 0} h_{P}^{\lambda}, \quad h_P^{\lambda}= \argmin_{h\in \mathcal{H}}\lambda \|h\|^2_{L^2(P_X)} + \|T_Ph - r_P\|_{L^2(P_Z)}^2. \label{eqn:populatoin_objective}
\end{align}
Symmetrically, we have
\begin{align*}
    g^{\dag}_P = \argmin_{g \in \mathcal{G}_P}\|g\|_{L^2(P_Z)} = (T_P^{\ast})^{\dag}a_P = (T_P^{\ast})^{\dag}a_{P,\parallel} = \left(T_P^{\ast}|_{\kernel(T_P^{\ast})^{\perp}}\right)^{-1}a_{P,\parallel},
\end{align*}
where the last equality holds because $(T_P^{\ast})^{\dag}$ annihilates the $\kernel(T_P)$-component of $a_P$. Thus
\[
    T_Ph_P^\dag=r_{P,\parallel},
    \qquad
    T_P^*g_P^\dag=a_{P,\parallel},
    \qquad
    \Psi(P)=\E_P[h_P^\dag(X)g_P^\dag(Z)].
\]
For these fixed targets, define respectively
\begin{align*}
    \Psi_{h_P^\dag}(g) &= \mathbb{E}_P[h_P^\dag(X)g(Z)], \quad \Psi_{h_P^\dag}: L^2(P_Z)\to\mathbb R,\\
    \Psi_{g_P^\dag}(h) &= \mathbb{E}_P[h(X)g_P^\dag(Z)], \quad \Psi_{g_P^\dag}: L^2(P_X)\to\mathbb R.
\end{align*}

\paragraph{Source condition.} We will next impose an analogous assumption to the \emph{source condition} in \citet{bennett2025inference,bennett2023sourceconditiondoublerobust} that controls the smoothness of our functionals and enables a functional bias expansion we can use for debiased estimation. However, unlike \citet{bennett2025inference,bennett2023sourceconditiondoublerobust}, consistent with not assuming that solutions to either the primary or secondary linear equations exist, we will impose source conditions only on the projected components and only as needed for the nonzero orthogonal residuals $r_{P, \perp}$ and $a_{P, \perp}$. Throughout this work, we always refer to norm induced by the projection operator $T_P$ or $T_P^{\ast}$ as \emph{weak metrics}, and $L^2$ norms as \emph{strong metrics}. This terminology is classical in NPIV and ill-posed inverse problem, as weak metrics are prediction norms, and unlike strong metrics, weak metric nuisance errors can be controlled without restrictive assumptions on measure of ill-posedness \citep{chen2011rate, deaner2018proxy}.

\begin{assum}[Existence of weak metric Riesz representers for nuisance where solution does not exist]\label{assum: source_condition_riesz}
We assume both of the following statements hold
\begin{align*}
    &r_{P,\perp}=0
    \quad\text{or}\quad
    a_{P,\parallel}\in\range(T_P^*T_P)\\
    &a_{P,\perp}=0
    \quad\text{or}\quad
    r_{P,\parallel}\in\range(T_PT_P^*).
\end{align*}
\end{assum}
\begin{rem}[\Cref{assum: range_condition} and \Cref{assum: source_condition_riesz} in the finite dimensional setting]
If $\mathcal{H}_P$ and $\mathcal{G}_P$ are finite dimensional, then $\range(T_P) = \overline{\range(T_P)},\,\range(T_P^{\ast}) = \overline{\range(T_P^{\ast})}$, therefore \Cref{assum: range_condition} holds automatically. Furthermore, we have $\range(T_PT_P^{\ast}) = \range(T_P),\,\range(T_P^{\ast}T_P) = \range(T_P^{\ast})$ by an application of rank-nullity theorem, therefore \Cref{assum: source_condition_riesz} also holds for free. Therefore, \Cref{assum: range_condition} and \Cref{assum: source_condition_riesz} do not constrain the set of allowed distributions $P$ in the finite dimensional setting. 
\end{rem}

\begin{rem}[Interpretation of \Cref{assum: source_condition_riesz}]
    As explained in \Cref{rem:existence_solns_ls}, assuming $r_{P, \perp} = 0$ and $r_{P, \parallel}\in \range(T_P)$ (as we assume in \Cref{assum: range_condition}) is equivalent to the existence of a solution to $T_P h = r_P$. If a solution exists, then we do \emph{not} need to assume a source condition $a_{P, \parallel} \in \range(T_P^{\ast}T_P)$. Indeed, we will show in \Cref{lem:Riesz_representer_equiv_assum} that $a_{P,\parallel}\in \range(T_P^{\ast}T_P)$ is equivalent to assuming the secondary nuisance $g_P^{\dag}$ lies in the range of $T_P$. As we explain in \Cref{rem:fbias_alpha_nuisance_discussion} after \Cref{thm: bias_char}, this range condition is precisely what enables us to debias an estimated non-zero residual $r_{P, \perp}$, when it is necessary to perform this estimation. The same story applies to the other side, where if we wish to avoid assuming the existence of a solution to $T_P^{\ast}g = a_P$, then we must estimate the non-zero residual $a_{P, \perp}$ that lies orthogonal to the range of $T_P^{\ast}$. To debias the estimation error for $a_{P, \perp}$, we must impose a source condition, or equivalently a range condition for the primary nuisance $h_P^{\dag}$. We offer additional discussion in \Cref{rem:bennett_exact_solution}.  
\end{rem}

Note that \Cref{assum: source_condition_riesz} is stronger than \Cref{assum: range_condition} only on sides where the corresponding orthogonal residual is nonzero. \Cref{assum: range_condition} should be understood as the minimal assumption to ensure the existence of least-squares solutions. \Cref{assum: source_condition_riesz} additionally requires existence of solutions to Riesz regression \citep{chernozhukov2024automaticdebiasedmachinelearning} defined via the \emph{weak metric}, on sides where the residual $r_{P,\perp}$ or $a_{P, \perp}$ is non-zero. To make this concrete, we define the Riesz regression solution \emph{sets}
\begin{align}
    \mathcal A_P^h
    &=
    \argmin_{\alpha^h\in\Hcal}
    \left\{
    \frac12\|T_P\alpha^h\|_{L^2(P_Z)}^2
    -
    \langle a_{P,\parallel},\alpha^h\rangle_{L^2(P_X)}
    \right\},
    \label{eq: alpha_h_P_riesz_regression}\\
    \mathcal A_P^g
    &=
    \argmin_{\alpha^g\in\Gcal}
    \left\{
    \frac12\|T_P^*\alpha^g\|_{L^2(P_X)}^2
    -
    \langle r_{P,\parallel},\alpha^g\rangle_{L^2(P_Z)}
    \right\}.
    \label{eq: alpha_g_P_riesz_regression}
\end{align}
When these sets are nonempty, we define the minimum-norm elements
\begin{align*}
    \alpha_P^{h,\dag} = \argmin_{\alpha^h \in \mathcal{A}^h_P} \|\alpha^h\|_{L^2(P_X)},\quad \alpha_P^{g,\dag} = \argmin_{\alpha^g\in \mathcal{A}_P^{g}} \|\alpha^g\|_{L^2(P_Z)}. 
\end{align*}
Furthermore, the first-order conditions of these optimization problems yield
\begin{align*}
    \langle h, T_P^{\ast}T_P\alpha_P^{h}\rangle_{L^2(P_X)}
    &=
    \langle a_{P,\parallel}, h\rangle_{L^2(P_X)}
    =
    \mathbb{E}_P[h(X)g_P^\dag(Z)],
    \quad \forall h\in \mathcal{H},\\
    \langle g, T_PT_P^{\ast}\alpha_P^{g}\rangle_{L^2(P_Z)}
    &=
    \langle r_{P,\parallel}, g\rangle_{L^2(P_Z)}
    =
    \mathbb{E}_P[g(Z)h_P^\dag(X)],
    \quad \forall g\in \mathcal{G}.
\end{align*}

We next establish the equivalence between the projected source conditions and the existence of solutions to the Riesz regression problems and how these solutions give rise to the minimum-norm least-squares solutions. Our argument adapts Theorem 2 and Lemma 3 of \citet{bennett2025inference}.
\begin{lem}\label{lem:Riesz_representer_equiv_assum}
The following statements are equivalent:
\begin{enumerate}
    \item $a_{P,\parallel}\in\range(T_P^*T_P)$.
    \item $\mathcal A_P^h$ is nonempty.
    \item $g_P^\dag\in\range(T_P)$.
\end{enumerate}
When these conditions hold, every $\alpha_P^h\in\mathcal A_P^h$ satisfies
\begin{align}
    T_P^*T_P\alpha_P^h=a_{P,\parallel},
    \qquad
    T_P\alpha_P^h=g_P^\dag.
    \label{eq: T_P_gdag}
\end{align}
Similarly, the following statements are equivalent:
\begin{enumerate}
    \item $r_{P,\parallel}\in\range(T_PT_P^*)$.
    \item $\mathcal A_P^g$ is nonempty.
    \item $h_P^\dag\in\range(T_P^*)$.
\end{enumerate}
When these conditions hold, every $\alpha_P^g\in\mathcal A_P^g$ satisfies
\begin{align*}
    T_PT_P^*\alpha_P^g=r_{P,\parallel},
    \qquad
    T_P^*\alpha_P^g=h_P^\dag.
\end{align*}
\end{lem}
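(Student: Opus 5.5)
We will prove the first block of equivalences; the second follows by the same argument with the roles of $(T_P,a_P,g_P^\dag)$ and $(T_P^*,r_P,h_P^\dag)$ exchanged. The plan is to show (1)$\Leftrightarrow$(2) through the first-order conditions of the convex quadratic in \cref{eq: alpha_h_P_riesz_regression}, and (1)$\Leftrightarrow$(3) through the characterization $g_P^\dag=(T_P^*|_{\kernel(T_P^*)^\perp})^{-1}a_{P,\parallel}$, i.e.\ $g_P^\dag$ is the unique element of $\kernel(T_P^*)^\perp=\overline{\range(T_P)}$ with $T_P^*g_P^\dag=a_{P,\parallel}$.

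For (1)$\Leftrightarrow$(2), the objective $J(\alpha)=\tfrac12\|T_P\alpha\|^2-\langle a_{P,\parallel},\alpha\rangle$ is convex and Gâteaux differentiable on $\Hcal$, with derivative in direction $h$ equal to $\langle T_P^*T_P\alpha-a_{P,\parallel},h\rangle$. Hence $\alpha$ is a minimizer if and only if $T_P^*T_P\alpha=a_{P,\parallel}$: necessity holds because the directional derivative must vanish in every direction $h\in\Hcal$, and sufficiency holds by convexity. Thus $\mathcal A_P^h\neq\varnothing$ exactly when $a_{P,\parallel}\in\range(T_P^*T_P)$, and every element of $\mathcal A_P^h$ satisfies the first identity in \cref{eq: T_P_gdag}.

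For (1)$\Rightarrow$(3) together with the second identity in \cref{eq: T_P_gdag}, take any $\alpha$ with $T_P^*T_P\alpha=a_{P,\parallel}$ and set $g=T_P\alpha$. Then $g\in\range(T_P)\subseteq\kernel(T_P^*)^\perp$ and $T_P^*g=a_{P,\parallel}$. By Lemma~\ref{lem: param_inv}, $\Gcal_P=g_P^\dag+\kernel(T_P^*)$. Since $T_P^*g=a_{P,\parallel}$ gives $\|T_P^*g-a_P\|^2=\|a_{P,\perp}\|^2$, the minimal value, we have $g\in\Gcal_P$, and hence $g-g_P^\dag\in\kernel(T_P^*)$. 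But both $g$ and $g_P^\dag$ lie in $\kernel(T_P^*)^\perp$; the minimum-norm element is orthogonal to $\kernel(T_P^*)$ because Pythagoras would otherwise allow its norm to be reduced. So $g-g_P^\dag\in\kernel(T_P^*)\cap\kernel(T_P^*)^\perp=\{0\}$, which gives $g_P^\dag=T_P\alpha\in\range(T_P)$. Since $\alpha$ was an arbitrary element of $\mathcal A_P^h$, this establishes \cref{eq: T_P_gdag} for every such $\alpha$. For (3)$\Rightarrow$(1), if $g_P^\dag=T_P\alpha$ then $T_P^*T_P\alpha=T_P^*g_P^\dag=a_{P,\parallel}$.

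The main point needing care is the identification of $g_P^\dag$ as the unique preimage of $a_{P,\parallel}$ in $\kernel(T_P^*)^\perp$, which is used above. This follows from the decomposition $\Gcal_P=g+\kernel(T_P^*)$ in Lemma~\ref{lem: param_inv} and the orthogonality of the minimum-norm element to $\kernel(T_P^*)$, as already recorded before the lemma. The remaining steps are routine.
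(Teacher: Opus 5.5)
Your proposal is correct and follows essentially the same argument as the paper. You use first-order conditions for (1)$\Leftrightarrow$(2), then show $T_P\alpha-g_P^\dag\in\kernel(T_P^*)\cap\kernel(T_P^*)^\perp=\{0\}$ for (1)$\Rightarrow$(3) and the identities, and apply $T_P^*$ for the converse. The only cosmetic difference is that you get $T_P\alpha-g_P^\dag\in\kernel(T_P^*)$ by showing $T_P\alpha\in\Gcal_P$ and invoking Lemma~\ref{lem: param_inv}, whereas the paper reads it off directly from $T_P^*g_P^\dag=a_{P,\parallel}$.
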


\begin{rem}[Source condition]
\label{rem: Riesz_representer_equiv_assum}
Let $\mathcal H_0=\kernel(T_P)^\perp$. Equip $\mathcal H_0$ with the inner product induced by the projection operator $T_P$
\begin{align*}
    \langle u,v\rangle_{T_P} \doteq \langle T_Pu,T_Pv\rangle_{L^2(P_Z)}.
\end{align*}
This defines an inner product on $\mathcal H_0$, but $\mathcal H_0$ need not be
complete under the norm $\|T_P(\cdot)\|_{L^2(P_Z)}$ unless $\range(T_P)$ is closed. This is generally false if $\range(T_P)$ is infinite dimensional, due to the ill-posedness of the inverse problem.

To remedy this, we use the fact that the pair $(\mathcal{H}_0, \|T_P(\cdot)\|_{L^2(P_Z)})$ is a pre-Hilbert space. We let $\overline{\mathcal H_0}^{\,T_P}$ denote the completion of this pre-Hilbert space, and the bilinear form $\langle \cdot, \cdot\rangle_{T_P}$ can be extended to an inner product on $\overline{\mathcal H_0}^{\,T_P}$.  The map $u\mapsto T_Pu$ is an isometry from $\mathcal H_0$ onto
$\range(T_P)$, and therefore extends \emph{uniquely} to an isometric isomorphism
\begin{align*}
    \overline{\mathcal H_0}^{\,T_P}\cong \overline{\range(T_P)}.
\end{align*}
Consider the linear functional $\ell_a:\mathcal H_0\to\mathbb R$
\begin{align*}
    \ell_a(u)=\langle a_{P,\parallel},u\rangle_{L^2(P_X)},
    \qquad u\in\mathcal H_0.
\end{align*}
Under \Cref{assum: range_condition}, $a_{P,\parallel}\in\range(T_P^*)$. Recall that
$g_P^\dag\in\kernel(T_P^*)^\perp=\overline{\range(T_P)}$ is the
minimum-norm solution of $T_P^*g=a_{P,\parallel}$. Then
\begin{align*}
    \ell_a(u)
    =
    \langle g_P^\dag,T_Pu\rangle_{L^2(P_Z)},
\end{align*}
so $\ell_a$ is continuous with respect to the weak metric. We can therefore extend $\ell_a$ to a continuous linear functional with respect to the Hilbert space $\overline{\mathcal H_0}^{\,T_P}$. Applying the Riesz-Fréchet representation theorem to $\ell_a$ defined on $\overline{\mathcal H_0}^{\,T_P}$, we know that $\ell_a$ admits a unique Riesz representer, given by the preimage of $g_P^\dag$
under the isometric isomorphism
$\overline{\mathcal H_0}^{\,T_P}\cong\overline{\range(T_P)}$.

The condition $a_{P,\parallel}\in\range(T_P^*T_P)$ is the
stronger requirement that the Riesz representer of $\ell_a$ in the Hilbert space $(\overline{\mathcal H_0}^{\,T_P}, \|T_P(\cdot)\|_{L^2(P_Z)})$ is
represented by an element of the original Hilbert space $(\mathcal H_0, \|\cdot\|_{\mathcal{H}})$.
If, in addition, $g_P^\dag\in\range(T_P)$, then there exists
$\alpha_P^h\in\mathcal H_0$ such that
\begin{align*}
    T_P\alpha_P^h=g_P^\dag.
\end{align*}
Applying $T_P^*$, this is equivalent to
\begin{align*}
    T_P^*T_P\alpha_P^h=a_{P,\parallel},
\end{align*}
that is, $a_{P,\parallel}\in\range(T_P^*T_P)$. The solution in $\kernel(T_P)^\perp$ is therefore the minimum-norm element of $\mathcal A_P^h$.  Our discussion applies symmetrically to $r_{P,\parallel}\in\range(T_PT_P^*)$ by exchanging the roles of $T_P$ and $T_P^{\ast}$. 
\end{rem}
\Cref{rem: Riesz_representer_equiv_assum} shows that the minimum-norm element of $\mathcal{A}_P^{h}$ is a Riesz representer with respect to the topology induced by $\|T_P(\cdot)\|_{L^2(P_Z)}$. This justifies our nomenclature of referring to \cref{eq: alpha_h_P_riesz_regression} and \cref{eq: alpha_g_P_riesz_regression} as Riesz regressions.

\begin{rem}[Comparison with \citet{bennett2025inference}]
    We compare Assumption~\ref{assum: source_condition_riesz} with the \emph{functional strong identification} condition of \citet[Definition 1]{bennett2025inference}. That condition can also be understood as the existence of a solution to a Riesz regression problem. However, unlike that work, we require existence of solutions to Riesz regressions only for linear functionals defined by $a_{P, \parallel}, r_{P, \parallel}$, and moreover only on sides where the corresponding orthogonal residual $r_{P, \perp}, a_{P, \perp}$ is non-zero. In contrast, \citet{bennett2025inference} requires existence of solutions to Riesz regression for linear functional defined by $a_P$ in our notation, which a priori constrains the statistical model.
\end{rem}
 
For a generic nuisance tuple
\[
    \eta=(h,g,\alpha^h,\alpha^g,r_\perp,a_\perp),
    \qquad
    h,\alpha^h\in\Hcal,\quad g,\alpha^g\in\Gcal,
    \qquad
    r_{\perp}\in \Gcal, a_{\perp}\in \Hcal,
\]
define
\begin{equation}
    \label{eq:rif}
    \begin{aligned}
    \chi(W;\eta)
    &:=
    m(W;h)+\widetilde m(W;g)-h(X)g(Z)\\
    &\quad+
    r_\perp(Z)\{\alpha^h(X)-g(Z)\}
    +
    a_\perp(X)\{\alpha^g(Z)-h(X)\}.
    \end{aligned}
\end{equation}
\begin{rem}[Special case where solutions to linear inverse problem(s) exist]\label{rem:bennett_exact_solution}
The setting of \citet{bennett2023sourceconditiondoublerobust,bennett2025inference}, where both linear inverse problems admit solutions, imposes both $r_{P,\perp}=0$ and $a_{P,\perp}=0$. This renders the last two terms in \cref{eq:rif} trivially zero when $\eta = \eta_P$. Their estimator is given exactly by removing from \cref{eq:rif} the last two terms, plugging in just $\hat h,\hat g$, and taking an empirical average. In our more general setting we can have $r_{P,\perp}\neq0$, or $a_{P,\perp}\neq0$, or neither, or both.
When $r_{P,\perp}=0$ is assumed, we may set $r_\perp\equiv0$ and omit $\alpha^h$;
when $a_{P,\perp}=0$ is assumed, we may set $a_\perp\equiv0$ and omit $\alpha^g$; when neither is assumed we need to learn all of $r_{P,\perp},a_{P,\perp},\alpha^h,\alpha^g$.
\end{rem}
When \(r_{P,\perp}=0\), the coordinate \(\alpha^h\) is inactive and is omitted
from the nuisance tuple; equivalently, all terms involving
\(r_{P,\perp}\alpha^h\) are interpreted as zero. When \(a_{P,\perp}=0\),
the coordinate \(\alpha^g\) is treated analogously. We let $\eta_P$ denote the population level (minimum-norm) nuisance tuple:
\begin{align}
\label{eq:eta_P_defn}
    \eta_P\doteq  (h_P^\dag,g_P^\dag,\alpha_P^{h,\dag},\alpha_P^{g,\dag},r_{P,\perp},a_{P,\perp}).
\end{align}
Our estimator will take the form of plugging in some $\hat\eta$ and taking the empirical expectation of $\chi$.

Toward understanding the behaviour of our estimator, we next characterize its bias.
\begin{thm}[Bias characterization]\label{thm: bias_char}
For any fixed $\eta=(h,g,\alpha^h,\alpha^g,r_\perp,a_\perp)$,
\begin{equation}
    \label{eq:simple_bias_expansion}
    \begin{aligned}
        \E_P[\chi(W;\eta)]-\Psi(P)
    &=
    -\left\langle
    T_P(h-h_P^\dag),g-g_P^\dag
    \right\rangle_{L^2(P_Z)}
\\
    &\quad+
    \left\langle
    T_P\alpha^h-g,r_\perp-r_{P,\perp}
    \right\rangle_{L^2(P_Z)}
\\
    &\quad+
    \left\langle
    T_P^*\alpha^g-h,a_\perp-a_{P,\perp}
    \right\rangle_{L^2(P_X)},
    \end{aligned}
\end{equation}
where the expectation on the left hand side is taken with respect to $W$. 
\end{thm}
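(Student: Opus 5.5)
The proof is a direct computation: I would rewrite every expectation in $\E_P[\chi(W;\eta)]$ as an $L^2$ inner product, decompose $a_P$ and $r_P$ into their parallel and orthogonal parts, and then check that the right-hand side of \cref{eq:simple_bias_expansion}, once expanded, contains exactly the same terms. No source condition is needed for the identity itself; \Cref{assum: range_condition} is used only to ensure that $h_P^\dag$ and $g_P^\dag$ exist.

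\textbf{Step 1 (inner-product form of each term).} By the Riesz representers, $\E_P[m(W;h)]=\langle a_P,h\rangle_{L^2(P_X)}$ and $\E_P[\widetilde m(W;g)]=\langle r_P,g\rangle_{L^2(P_Z)}$. By the defining property of $T_P$, for $h,\alpha^h,a_\perp\in\Hcal$ and $g,\alpha^g,r_\perp\in\Gcal$,
\[
\E_P[h(X)g(Z)]=\langle T_Ph,g\rangle,\qquad
\E_P[r_\perp(Z)\alpha^h(X)]=\langle r_\perp,T_P\alpha^h\rangle,\qquad
\E_P[a_\perp(X)\alpha^g(Z)]=\langle a_\perp,T_P^*\alpha^g\rangle.
\]
The remaining correction terms are plain inner products in $\Gcal$ or $\Hcal$. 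Finally, $\Psi(P)=\langle T_Ph_P^\dag,g_P^\dag\rangle$.

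\textbf{Step 2 (split the Riesz representers).} Write $a_P=a_{P,\parallel}+a_{P,\perp}$ and use $T_P^*g_P^\dag=a_{P,\parallel}$ to get
\[
\langle a_P,h\rangle=\langle g_P^\dag,T_Ph\rangle+\langle a_{P,\perp},h\rangle .
\]
Symmetrically, $T_Ph_P^\dag=r_{P,\parallel}$ gives
\[
\langle r_P,g\rangle=\langle T_Ph_P^\dag,g\rangle+\langle r_{P,\perp},g\rangle .
\]
Collect the non-residual terms
\[
\langle g_P^\dag,T_Ph\rangle+\langle T_Ph_P^\dag,g\rangle-\langle T_Ph,g\rangle-\langle T_Ph_P^\dag,g_P^\dag\rangle .
\]
These are exactly the four terms in the expansion of $-\langle T_P(h-h_P^\dag),g-g_P^\dag\rangle$, which gives the first line of \cref{eq:simple_bias_expansion}.

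\textbf{Step 3 (match the residual terms).} What is left on the left-hand side is
\[
\langle r_{P,\perp},g\rangle+\langle r_\perp,T_P\alpha^h\rangle-\langle r_\perp,g\rangle,
\]
together with its mirror image in $a$. Expanding the second line of the claim gives
\[
\langle T_P\alpha^h-g,r_\perp-r_{P,\perp}\rangle
=\langle T_P\alpha^h,r_\perp\rangle-\langle g,r_\perp\rangle-\langle T_P\alpha^h,r_{P,\perp}\rangle+\langle g,r_{P,\perp}\rangle .
\]
The cross term vanishes because $r_{P,\perp}\in\kernel(T_P^*)$, so $\langle T_P\alpha^h,r_{P,\perp}\rangle=\langle \alpha^h,T_P^*r_{P,\perp}\rangle=0$. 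The remaining terms match the left-hand side exactly. The $a$-side is handled identically, using $a_{P,\perp}\in\kernel(T_P)$ to kill $\langle T_P^*\alpha^g,a_{P,\perp}\rangle=\langle \alpha^g,T_Pa_{P,\perp}\rangle$.

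\textbf{Main obstacle.} There is no genuine difficulty; the only delicate points are bookkeeping. First, the identification $\langle a_{P,\parallel},h\rangle=\langle g_P^\dag,T_Ph\rangle$ holds for all $h\in\Hcal$, including the $\kernel(T_P)$ component of $h$, since both sides vanish there. Second, the inactive-coordinate convention must be respected: if $r_{P,\perp}=0$ and the terms involving $\alpha^h$ are interpreted as zero, then $r_\perp\equiv 0$ as well. In that case both the left- and right-hand contributions of Step 3 vanish, and the identity still holds.
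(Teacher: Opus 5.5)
Your proposal is correct and follows the paper's proof essentially line for line. Both arguments rewrite $\E_P[\chi(W;\eta)]$ via the Riesz representers and the adjoint identity, substitute $a_P=T_P^*g_P^\dag+a_{P,\perp}$ and $r_P=T_Ph_P^\dag+r_{P,\perp}$, collect the four bilinear terms into $-\langle T_P(h-h_P^\dag),g-g_P^\dag\rangle$, and use $r_{P,\perp}\in\kernel(T_P^*)$ and $a_{P,\perp}\in\kernel(T_P)$ to account for the vanishing cross terms; the only cosmetic difference is that you expand the right-hand side and match terms, whereas the paper inserts the zero cross terms into the left-hand side.
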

\begin{proof}[Proof of \cref{thm: bias_char}]
By definition of the Riesz representers $a_P\in L^2(P_X)$ and $r_P\in L^2(P_Z)$, we have
\begin{align*}
    \E_P[m(W;h)] &= \langle a_P,h\rangle_{L^2(P_X)}\\
    \E_P[\widetilde m(W;g)] &= \langle r_P,g\rangle_{L^2(P_Z)}.
\end{align*}
We have
\begin{align*}
&\E_P[\chi(W;\eta)]\\ &= \langle a_P,h\rangle_{L^2(P_X)} +\langle r_P,g\rangle_{L^2(P_Z)} -\langle T_Ph,g\rangle_{L^2(P_Z)}  + \langle r_\perp,T_P\alpha^h-g\rangle_{L^2(P_Z)} +\langle a_\perp,T_P^*\alpha^g-h\rangle_{L^2(P_X)}.
\end{align*}
We substitute in the above equation
\begin{align*}
    a_P=T_P^*g_P^\dag+a_{P,\perp}, \qquad r_P=T_Ph_P^\dag+r_{P,\perp},
\end{align*}
and subtract $\Psi(P)=\langle T_Ph_P^\dag,g_P^\dag\rangle_{L^2(P_Z)}$, to find
\begin{align*}
    &\E_P[\chi(W;\eta)] - \Psi(P)\\ &= \left\langle T_P^{\ast}g_P^{\dag} + a_{P, \perp}, h\right\rangle_{L^2(P_X)} + \left\langle T_Ph_P^{\dag} + r_{P, \perp}, g \right\rangle_{L^2(P_Z)} - \langle T_Ph_P^\dag,g_P^\dag\rangle_{L^2(P_Z)} - \langle T_P h, g\rangle_{L^2(P_Z)}\\
    &+ \langle r_\perp,T_P\alpha^h-g\rangle_{L^2(P_Z)} +\langle a_\perp,T_P^*\alpha^g-h\rangle_{L^2(P_X)}\\
    &= -\langle T_P(h-h_P^\dag),g-g_P^\dag\rangle_{L^2(P_Z)}\\
    &+ \underbrace{\langle r_\perp,T_P\alpha^h-g\rangle_{L^2(P_Z)} + \langle r_{P,\perp},g\rangle_{L^2(P_Z)} +  \langle a_\perp,T_P^*\alpha^g-h\rangle_{L^2(P_X)} + \langle a_{P,\perp},h\rangle_{L^2(P_X)}}_{(\dag)} .
\end{align*}
Since $r_{P,\perp}\in\kernel(T_P^*)$ and $a_{P,\perp}\in\kernel(T_P)$, we have
\begin{align*}
    \langle T_P\alpha^h,r_{P,\perp}\rangle=0, \qquad \langle T_P^*\alpha^g,a_{P,\perp}\rangle=0.
\end{align*}
Therefore
\begin{align*}
    (\dag) = \langle T_P\alpha^h-g,r_\perp-r_{P,\perp}\rangle + \langle T_P^*\alpha^g-h,a_\perp-a_{P,\perp}\rangle,
\end{align*}
which concludes the proof.
\end{proof}

\begin{rem}
\label{rem:fbias_alpha_nuisance_discussion}
    We see from the right hand side of \cref{eq:simple_bias_expansion}, the $\alpha$-nuisances $\alpha^h$ and $\alpha^g$ only appear after projection as $T_P\alpha^h$ and $T_P^{\ast}\alpha^g$. As we will see in \Cref{sec:inference}, this means we do not need to learn $\alpha^h$ and $\alpha^g$ themselves well in a mean-squared sense. We only require consistency in strong metric and convergence rates in weak metric. This is important since even though $\alpha^h_P$ and $\alpha^g_P$ are solutions to potentially ill-posed inverse problems, we do not suffer from slow convergence rates due to ill-posedness. For $\alpha^h$, we can interpret this as saying we only need to learn a nuisance function that is close to $g$, while also lying in the range of $T_P$. Indeed, the key step in the proof of \Cref{eq:simple_bias_expansion} is that $r_{P, \perp}\in \kernel(T_P^{\ast})$, hence $\langle T_P \alpha^h, r_{P,\perp}\rangle_{L^2(P_Z)} = 0$. This motivates the assumption $g_P^{\dag}\in \range(T_P)$, which we show in \Cref{lem:Riesz_representer_equiv_assum} to be equivalent to the source condition $a_{P, \parallel}\in \range(T_P^{\ast}T_P)$. 
\end{rem}

\begin{rem}[Comparison to the setting where solutions to linear inverse problem(s) exist]
\label{rem:rates_weaken}
Following \citet{bennett2025inference,bennett2023sourceconditiondoublerobust}, where only the first term in the bias expansion appears,
we can bound the first term by the product of strong and weak metrics, where we can weaken \textit{either} the $\Hcal$-norm or the $\Gcal$-norm:
\begin{align*}
    &\left|\left\langle T_P(h-h_P^\dag),g-g_P^\dag\right\rangle_{L^2(P_Z)}\right|\\
&\leq
\min\left\{\|T_P(h-h_P^\dag)\|_{L^2(P_Z)}\,\|g-g_P^\dag\|_{L^2(P_Z)},
\|h-h_P^\dag\|_{L^2(P_X)}\,\|T_P^*(g-g_P^\dag)\|_{L^2(P_X)}
\right\}.
\end{align*}
In addition to this term, unlike \citet{bennett2025inference,bennett2023sourceconditiondoublerobust}, we also need to control two additional terms in \cref{eq:simple_bias_expansion}, due to the fact that we do not necessarily assume the orthogonal residuals $r_{P, \perp}$ and $a_{P,\perp}$ are zero.  
\end{rem}


\section{Nuisance learners}\label{sec:minimax_learners}

In this section we develop learners for all of our nuisance functions. For simplicity we will assume we have separate independent datasets $\mathcal{I}_1, \mathcal{I}_2, \mathcal{I}_3, \mathcal{I}_4$, each of size $n$. On $\mathcal{I}_1$, we will fit the primary and secondary nuisances $h_P^\dag$ and $g_P^\dag$ (\cref{sub:minimax_learners_of_h_p_dag}). Conditioning on $\mathcal{I}_1$, when needed we will fit $\alpha_P^{h,\dag}$ and $\alpha_P^{g,\dag}$ on $\mathcal{I}_2$ (\cref{sub:minimax_learners_of_the_weak_norm_riesz_representer}). Finally, conditioning on $\mathcal{I}_1$, we will fit $r_{P,\perp}$ and $a_{P,\perp}$ on $\mathcal{I}_3$ (\cref{sub:learners_of_orthogonal_residuals}). We will reserve $\mathcal{I}_4$ to construct a debiased estimator in \Cref{sec:inference}. Since all rates are expressed in terms of the sample size $n$ per dataset, using a fixed number of independent datasets does not affect our asymptotic results. Below, the sample of size $n$ used in each case will be implicit. To lighten notations, we do not denote which dataset we are referring to, as it is clear from context. All proofs in this section can be found in Appendix~\ref{sec:minimax_proofs}. 

\subsection{Background on empirical process minimization}
\label{sec:background}
Let $\mathcal{F}$ be a class of measurable functions $f: \mathcal{W}\to \mathbb{R}$. We define the star hull of $\mathcal{F}$ as 
\begin{align*}
    \mathrm{star}(\mathcal{F}) \doteq \{tf: f\in \mathcal{F}, t\in [0,1]\}. 
\end{align*}
We define the \emph{population localized Rademacher complexity} \citep{localradademacher} of the star hull of $\mathcal{F}$:
$$
\Rcal(\Fcal, \delta) \doteq \frac1{2^n}\sum_{\epsilon\in\{-1,1\}^n}\E_{P}\left[\sup_{\gamma\in[0,1],f\in \Fcal\,:\,\gamma \|f\|\leq \delta} \left|\frac{1}{n}\sum_{i=1}^n \epsilon_i \gamma f(W_i)\right|\right],$$
and define the \emph{critical radius} of $\mathcal{F}$ as the smallest positive solution to the inequality
\begin{align*}
    \mathcal{R}(\Fcal, \delta)\leq \delta^2.
\end{align*}
We refer the reader to \citet[Chapter 14]{wainwright2019high} for a detailed introduction to localized uniform concentration inequalities, and to \citet[Example 14.6, 14.7]{wainwright2019high} for examples of critical radii of kernel classes. In our work, we use a variant for vector-valued function classes and Lipschitz loss functions derived in \citet{foster2023orthogonal}. It is stated in \Cref{lem:lipschitz_wainwright} in the Appendix. 

\subsection{Minimax $h$-and $g$-learners}
\label{sub:minimax_learners_of_h_p_dag}
We construct learners of the minimum-norm least-squares solutions $h_P^\dag=\argmin_{h\in\Hcal_P}\|h\|_{L_2(P_X)}$ and $g_P^\dag=\argmin_{g\in\Gcal_P}\|g\|_{L_2(P_Z)}$, where $\Hcal_P$ and $\Gcal_P$ are defined in \cref{eq:hcal_P} and \cref{eq:gcal_P}. To do so, we show that the Tikhonov-regularized minimax learner of \citet{bennett2023sourceconditiondoublerobust} remains a good learner of $h_P^\dag$ even when no solution exists to $T_Ph=r_P$, so that $T_Ph_P^\dag\neq r_P$. Because $\Hcal_P$ and $\Gcal_P$ are defined symmetrically, whatever we develop also applies immediately for learning $g_P^\dag$, the minimum norm element of $\Gcal_P$.

Following \citet{bennett2023sourceconditiondoublerobust}, given a hypothesis class $\Hcal_n\subseteq\Hcal$ and critic class $\Gcal_n\subseteq\Gcal$, define the minimax learner
\begin{align}\label{eq:minimaxlearnerl}
    \widehat h(\lambda) \doteq \argmin_{h\in\Hcal_n}\max_{g\in\Gcal_n}
    \E_n\left[
    2\{\widetilde m(W;g)-h(X)g(Z)\}-g(Z)^2+\lambda h(X)^2
    \right].
\end{align}
The learner for $g_P^\dag$ is defined symmetrically, swapping $(\Hcal,X,T_P,\widetilde m,r_P)$ with $(\Gcal,Z,T_P^*,m,a_P)$.

For generality, in this section we impose a generic source condition. Note that, unlike \citet{bennett2023sourceconditiondoublerobust}, we impose it on the projection of $r_P$ onto $\overline{\range(T_P)}$ rather than on $r_P$ itself, since here we do not assume it lives in $\overline{\range(T_P)}$.
\begin{assum}[$\beta_h$-source condition for projected primary nuisance]\label{assum:betasource}For $\beta_h\geq0$, we have
\begin{align*}
    r_{P,\parallel}\in\range\left((T_PT_P^{\ast})^{(\beta_h+1)/2}\right).
\end{align*}
\end{assum}
For completeness, we also state the analogue of \Cref{assum:betasource} for $a_{P, \parallel}$, as we will make use of the source condition exponent in \Cref{sec:inference}.
\begin{assum}[$\beta_g$-source condition for projected secondary nuisance]\label{assum:betasource_g}For $\beta_g\geq0$, we have
\begin{align*}
    a_{P,\parallel}\in\range\left((T_P^{\ast}T_P)^{(\beta_g+1)/2}\right).
\end{align*}
\end{assum}
\begin{rem}[Relation of \Cref{assum:betasource} and \Cref{assum: source_condition_riesz}]
\label{rem:comparison_src}
    An alert reader will notice that \Cref{assum:betasource} with $\beta_h = 1$ is the same condition in \Cref{assum: source_condition_riesz}, if we \emph{need} to estimate an element of $\mathcal{A}_P^g$ for debiasing. Indeed, as explained after that assumption, we need to ensure that the Riesz regression solution set $\mathcal{A}_P^g$ is non-empty for debiasing when the residual $a_{P, \perp}$ is not assumed to be zero, and $r_{P, \parallel} \in \range(T_PT_P^{\ast})$ is precisely the \emph{equivalent} condition for $\mathcal{A}_P^{g}$ to be non-empty, as we show in \Cref{lem:Riesz_representer_equiv_assum}. In this respect, \Cref{assum: source_condition_riesz} is stronger than \Cref{assum:betasource}. Nevertheless, if we do \emph{not} require an element of $\mathcal{A}_P^{g}$ for debiasing, then we always impose a non-trivial source condition in the present assumption, whereas we do not restrict where $r_{P,\parallel}$ lies in \Cref{assum: source_condition_riesz}. Indeed, \Cref{assum:betasource} is equivalent to a source condition on the minimum norm primary nuisance $h_P^{\dag}$, 
    \begin{align*}
    h_P^{\dag} \in \range\left(\left(T_P^{\ast}T_P\right)^{\frac{\beta_h}{2}}\right),
\end{align*}
as we show in \Cref{cor: equiv_hpdag_source}. Such a source condition leads to faster convergence rates for the minimax $h$-learner. Indeed, source condition is the key condition to control the bias of a population-level solution to a ridge regularization problem, in this case $h_P^{\lambda}$ defined in \cref{eqn:populatoin_objective}. We will discuss further the role of the source condition in controlling bias in \Cref{rem:saturation}. 
\end{rem}
The above bias argument is shown in \Cref{lem:bias-tikhonov} as follows. 
\begin{lem}[Tikhonov regularization bias]\label{lem:bias-tikhonov}
Let $h_P^{\lambda}$ be as defined in \cref{eqn:populatoin_objective}. Suppose \cref{assum:betasource} holds. Then, for a finite constant $C_{P,\beta_h}$,
\begin{align*}
    \|h_P^{\lambda} - h_P^\dag\|_{L^2(P_X)}^2 \leq~& C_{P,\beta_h}\, \lambda^{\min\{\beta_h, 2\}}, \\
    \|T_P(h_P^{\lambda} - h_P^\dag)\|_{L^2(P_Z)}^2 \leq~& C_{P,\beta_h}\, \lambda^{\min\{\beta_h + 1, 2\}}.
\end{align*}
\end{lem}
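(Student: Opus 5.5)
The plan is to write the Tikhonov solution in closed form and reduce both bounds to scalar spectral estimates. The first-order condition for the objective in \cref{eqn:populatoin_objective} is $(T_P^*T_P+\lambda I)h_P^\lambda = T_P^*r_P$. Since $r_{P,\perp}\in\kernel(T_P^*)$, we have $T_P^*r_P=T_P^*r_{P,\parallel}$, so the orthogonal residual drops out entirely and
\[
h_P^\lambda=(T_P^*T_P+\lambda I)^{-1}T_P^*r_{P,\parallel}.
\]
Next I will rewrite $h_P^\dag$ in a matching form. By \Cref{assum:betasource} there is $w\in\Gcal$ with $r_{P,\parallel}=(T_PT_P^*)^{(\beta_h+1)/2}w$, and we may take $w\in\overline{\range(T_P)}$. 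Using the polar decomposition / singular-value functional calculus, $T_P^*(T_PT_P^*)^{s}=(T_P^*T_P)^{s}T_P^*$. This gives the identity $h_P^\dag=(T_P^*T_P)^{\beta_h/2}v$ with $\|v\|\le\|w\|$; this is the equivalence mentioned in \Cref{rem:comparison_src}, with $v=U^*w$ and $U$ the partial isometry in $T_P=U(T_P^*T_P)^{1/2}$. Equivalently, $T_Ph_P^\dag=r_{P,\parallel}$ with $h_P^\dag\in\kernel(T_P)^\perp$.

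With $A:=T_P^*T_P$ (self-adjoint, $0\le A\le I$ because $\|T_P\|_{\mathrm{op}}\le 1$), the normal equations give $T_P^*r_{P,\parallel}=Ah_P^\dag$. Hence
\[
h_P^\lambda-h_P^\dag=\big[(A+\lambda)^{-1}A-I\big]h_P^\dag=-\lambda(A+\lambda)^{-1}A^{\beta_h/2}v .
\]
By the spectral theorem,
\[
\|h_P^\lambda-h_P^\dag\|^2\le \sup_{\sigma\in[0,1]}\Big(\frac{\lambda\sigma^{\beta_h/2}}{\sigma+\lambda}\Big)^2\|v\|^2 .
\]
For the weak metric I use $\|T_Pu\|^2=\langle Au,u\rangle=\|A^{1/2}u\|^2$, which gives the same bound with $\sigma^{\beta_h/2}$ replaced by $\sigma^{(\beta_h+1)/2}$.

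It remains to bound $\phi_\lambda(\sigma)=\lambda\sigma^{s}/(\sigma+\lambda)$ on $[0,1]$, with $s=\beta_h/2$ and $s=(\beta_h+1)/2$ respectively. The claim is $\phi_\lambda(\sigma)\le C\lambda^{\min\{s,1\}}$, so that squaring yields $\lambda^{\min\{2s,2\}}$, i.e. $\lambda^{\min\{\beta_h,2\}}$ and $\lambda^{\min\{\beta_h+1,2\}}$. I split into cases. If $s\le1$, write $\phi_\lambda=\lambda^{s}\,(\lambda/(\sigma+\lambda))^{1-s}(\sigma/(\sigma+\lambda))^{s}\le\lambda^{s}$. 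If $s>1$, then $\phi_\lambda\le\lambda\sigma^{s-1}\cdot\sigma/(\sigma+\lambda)\le\lambda$ since $\sigma\le1$; this is the saturation at exponent $2$. Taking $C_{P,\beta_h}=\|w\|^2_{L^2(P_Z)}$ suffices.

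The main obstacle is the rigorous intertwining step: transferring the source condition from $r_{P,\parallel}$ (a range of $(T_PT_P^*)^{(\beta_h+1)/2}$) to $h_P^\dag$ (a range of $(T_P^*T_P)^{\beta_h/2}$) for non-integer exponents. I would handle it via the polar decomposition $T_P=U(T_P^*T_P)^{1/2}$, with $U$ a partial isometry. This gives $T_PT_P^*=UAU^*$, hence $(T_PT_P^*)^{s}=UA^{s}U^*$ on $\overline{\range(T_P)}$. Everything else is routine functional calculus.
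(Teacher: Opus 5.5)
Your proof is correct and follows essentially the same route as the paper. Both use the closed form $h_P^\lambda=(T_P^*T_P+\lambda I)^{-1}T_P^*T_P h_P^\dag$, both transfer the source condition to $h_P^\dag=(T_P^*T_P)^{\beta_h/2}U^*w$ via the polar decomposition (exactly the paper's \Cref{cor: equiv_hpdag_source}), and both finish with a spectral-theorem bound. Where you differ is that you work out explicitly the scalar estimates for $\lambda\sigma^{s}/(\sigma+\lambda)$ on $[0,1]$, and the weak-metric case via $\|T_Pu\|=\|(T_P^*T_P)^{1/2}u\|$; the paper instead defers both steps to \citet[Lemma 3]{bennett2023sourceconditiondoublerobust}.
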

Our proof of \Cref{lem:bias-tikhonov} adapts the proof \citet[Lemma 3]{bennett2023sourceconditiondoublerobust} to the case where $T_P h_P^\dag \neq r_P$, as well as to the setting where $T_P$ is not necessarily compact. To address the latter challenge, we apply the spectral theorem for bounded self-adjoint operators. We provide relevant mathematical background in \Cref{sec:minimax_proofs_bg} in the Appendix. 
\begin{rem}[Saturation effect]
\label{rem:saturation}
    A larger source condition exponent indicates that $h_P^{\dag}$ is smoother relative to the spectrum of $T_P^{\ast}T_P$, and therefore the bias will vanish at a faster rate. There's a catch to the story. Since we employ Tikhonov regularization, the convergence rates in the weak metric will improve up until $\beta_h = 1$, and in the strong metric will improve up until $\beta_h = 2$. This is reflected by the upper bound in \cref{lem:bias-tikhonov}, and is due to the well-known saturation effect of Tikhonov regularization \citep{engl1996regularization, carrasco2007linear}. In general, Tikhonov regularization falls within a broader family of regularization strategies known as spectral algorithms \citep{gerfo2008spectral}, and the saturation threshold is proportional to the \emph{qualification} of a spectral algorithm. As an example of spectral algorithm, we can instead use iterated Tikhonov regularization as is done in \citet{bennett2023sourceconditiondoublerobust}, whose qualification is proportional to the number of iteration steps. In the context of \Cref{rem:comparison_src}, under \Cref{assum: source_condition_riesz} we already require $\beta_h\geq 1$ when $a_{P, \perp}$ is not assumed to be zero, and we do \emph{not} obtain further convergence rates speed up in weak metric by taking $\beta_h > 1$, unless we modify our regularization strategy. 
\end{rem}
We next assume our hypothesis and critic classes are compatible in that the latter captures the directions of interest induced by the former. We also assume mean-squared-continuity.
\begin{assum}[Closedness]\label{assum:wellspecified}
$T_P(h_P^\dag-\Hcal_n)\subseteq\Gcal_n$.
\end{assum}

\begin{assum}[Mean-squared 
continuity]\label{assum:mscont}
There exists a universal constant $C>0$, such that for all $g\in \Gcal$, $h\in \Hcal$, we have
\begin{align*}
    \E_P\left[\widetilde m(W;g)^2\right] \leq C\|g\|^2_{L^2(P_Z)}, \quad \E_P\left[m(W;h)^2\right] \leq C\|h\|^2_{L^2(P_X)}. 
\end{align*}
\end{assum}

We are now prepared to state the equivalent of \citet[Theorem 4]{bennett2023sourceconditiondoublerobust} but extended to the case where a solution to the linear equation of interest need not exist:
\begin{thm}[Convergence rates for $h$-learners]
\label{thm:minimaxlearner}
Fix $\lambda\le1$. Suppose
\cref{assum:betasource,assum:wellspecified,assum:mscont} hold, there exists a constant $C>0$ independent of $n$ such that 
$\widetilde m(W;g)$, $g(Z)$, and $h(X)$ are almost surely bounded by
$C$ uniformly over $g\in\Gcal_n,h\in\Hcal_n$, and
$h_P^\dag,h_P^\lambda\in\Hcal_n$. Let $\zeta\in(0,0.5)$ and let
\[
    \delta_{h,n}
    =
    \Omega\left(\sqrt{\frac{\log\log n-\log\zeta}{n}}\right)
\]
be an upper bound on the critical radii of
$\Hcal_n\cdot\Gcal_n$, $\{\widetilde m(\cdot;g):g\in\Gcal_n\}$,
$\Hcal_n$, and $\Gcal_n$. Then, with probability at least $1-\zeta$,
\[
\|\widehat h(\lambda)-h_P^\dag\|_{L^2(P_X)}^2
=
O\left(\frac{\delta_{h,n}^2}{\lambda}+\lambda^{\min\{\beta_h,1\}}\right),
\]
and
\[
\|T_P(\widehat h(\lambda)-h_P^\dag)\|_{L^2(P_Z)}^2
=
O\left(\delta_{h,n}^2+\lambda^{\min\{\beta_h+1,2\}}\right).
\]
In particular, with
$\lambda_h=\delta_{h,n}^{2/\min\{\beta_h+1,2\}}$,
\[
\|\widehat h(\lambda_h)-h_P^\dag\|_{L^2(P_X)}^2
=
O\left(\delta_{h,n}^{2\min\{\beta_h,1\}/\min\{\beta_h+1,2\}}\right),
\qquad
\|T_P(\widehat h(\lambda_h)-h_P^\dag)\|_{L^2(P_Z)}^2
=
O(\delta_{h,n}^2).
\]
\end{thm}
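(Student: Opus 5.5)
The plan is to follow the proof of \citet[Theorem 4]{bennett2023sourceconditiondoublerobust}. We compare $\widehat h=\widehat h(\lambda)$ with the population Tikhonov solution $h_P^\lambda$, and the one new ingredient is tracking how the residual $r_{P,\perp}$ enters. Write the population criterion as
\[
L_\lambda(h)=\max_{g\in\Gcal}\E_P[2\{\widetilde m(W;g)-h(X)g(Z)\}-g(Z)^2]+\lambda\|h\|^2 .
\]
Since $\E_P[\widetilde m(W;g)]=\langle r_P,g\rangle$, the inner maximum is attained at $g=r_P-T_Ph$, and
\[
L_\lambda(h)=\|T_Ph-r_P\|^2+\lambda\|h\|^2=\|T_Ph-r_{P,\parallel}\|^2+\|r_{P,\perp}\|^2+\lambda\|h\|^2 ,
\]
using the orthogonal decomposition from the proof of \Cref{lem: nonempty_ls}. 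Hence $h_P^\lambda$ minimizes $L_\lambda$, and $r_{P,\perp}$ contributes only an additive constant. The excess population criterion is the strongly convex quadratic
\[
L_\lambda(h)-L_\lambda(h_P^\lambda)=\|T_P(h-h_P^\lambda)\|^2+\lambda\|h-h_P^\lambda\|^2 ,
\]
which follows from the normal equation $T_P^*(T_Ph_P^\lambda-r_P)+\lambda h_P^\lambda=0$ together with $T_P^*r_{P,\perp}=0$.

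Next I bound this excess for $\widehat h$. The critic class is only $\Gcal_n$, not $\Gcal$, and the relevant maximizer $r_P-T_Ph$ generally does not lie in $\Gcal_n$, because $r_{P,\perp}$ is arbitrary. So I will not use the maximizer directly. Instead I parametrize the critic as $g=T_P(h_P^\dag-h)$, which lies in $\Gcal_n$ by \Cref{assum:wellspecified} and is orthogonal to $r_{P,\perp}$. The population objective restricted to critics of the form $T_P(h_P^\dag-h')$ depends on $r_P$ only through $\langle r_P,T_P(\cdot)\rangle=\langle r_{P,\parallel},T_P(\cdot)\rangle=\langle T_Ph_P^\dag,T_P(\cdot)\rangle$, so the residual disappears entirely. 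This is the step I expect to be the crux. The chain is the standard one:
\begin{enumerate}
\item lower bound the empirical max at $\widehat h$ by evaluating it at the critic $g=\tfrac12 T_P(h_P^\dag-\widehat h)$;
\item upper bound the empirical max at $\widehat h$ by its value at the competitor $h_P^\lambda\in\Hcal_n$, where the maximum over $\Gcal_n\subseteq\Gcal$ is at most the population maximum plus deviations.
\end{enumerate}
The upper bound does involve $\|r_{P,\perp}\|^2$, but the same constant arises on both sides once the population maximum over $\Gcal$ is written out, so it cancels. I expect care to be needed in arranging the comparison so that the remaining terms are exactly $\|T_P(\widehat h-h_P^\dag)\|^2$ and $\lambda(\|\widehat h\|^2-\|h_P^\lambda\|^2)$.

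The empirical-versus-population deviations over $\Hcal_n\cdot\Gcal_n$, $\{\widetilde m(\cdot;g)\}$, $\Gcal_n^2$ and $\Hcal_n^2$ are controlled by \Cref{lem:lipschitz_wainwright}, using boundedness and \Cref{assum:mscont}. With probability $1-\zeta$, each deviation is $O(\delta_{h,n}\|f\|+\delta_{h,n}^2)$. AM-GM then absorbs the $\delta_{h,n}\|T_P(\cdot)\|$ and $\delta_{h,n}\|\cdot\|$ terms into the quadratic left side, the latter at cost $\delta_{h,n}^2/\lambda$. This yields
\[
\|T_P(\widehat h-h_P^\lambda)\|^2+\lambda\|\widehat h-h_P^\lambda\|^2\lesssim \delta_{h,n}^2+\lambda^{-1}\delta_{h,n}^2\lambda+\|T_P(h_P^\lambda-h_P^\dag)\|^2 .
\]
Here the $\lambda\|\cdot\|$ deviation term scales with $\lambda$, so it contributes $O(\delta_{h,n}^2)$ to the weak bound and $O(\delta_{h,n}^2/\lambda)$ to the strong bound.

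Finally I combine this with \Cref{lem:bias-tikhonov} via the triangle inequality, using $\|T_P(h_P^\lambda-h_P^\dag)\|^2\lesssim\lambda^{\min\{\beta_h+1,2\}}$. For the strong metric, dividing by $\lambda$ gives $\delta_{h,n}^2/\lambda+\lambda^{\min\{\beta_h+1,2\}-1}$, and the lemma itself gives $\|h_P^\lambda-h_P^\dag\|^2\lesssim\lambda^{\min\{\beta_h,2\}}$. Both terms are at most $\lambda^{\min\{\beta_h,1\}}$ when $\lambda\le1$, which gives the claimed strong-metric rate. Plugging in $\lambda_h$ gives the last display: $\delta_{h,n}^2/\lambda_h=\delta_{h,n}^{2-2/\min\{\beta_h+1,2\}}$, which equals the stated exponent $2\min\{\beta_h,1\}/\min\{\beta_h+1,2\}$.
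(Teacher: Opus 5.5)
Your overall route matches the paper's: reduce to \citet[Theorem~4]{bennett2023sourceconditiondoublerobust} with $h_0$ replaced by $h_P^\dag$, and use \Cref{lem:bias-tikhonov} for the bias. The gap is the step where $\|r_{P,\perp}\|^2$ is supposed to cancel; it does not.

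Your lower-bound critic $\tfrac{1}{2}T_P(h_P^\dag-\widehat h)$ lies in $\range(T_P)$, so it is orthogonal to $r_{P,\perp}$. Its population value at $\widehat h$ is therefore $\tfrac{3}{4}\|T_P(h_P^\dag-\widehat h)\|^2$, with no residual term at all. In the upper bound, by contrast, you enlarge the supremum over $\Gcal_n$ at $h_P^\lambda$ to the population maximum over all of $\Gcal$. That maximum is $\|r_P-T_Ph_P^\lambda\|^2=\|T_P(h_P^\dag-h_P^\lambda)\|^2+\|r_{P,\perp}\|^2$. The constant therefore appears on one side only, and the chain yields
\[
\tfrac{3}{4}\|T_P(h_P^\dag-\widehat h)\|^2+\lambda\|\widehat h\|^2\lesssim\|T_P(h_P^\dag-h_P^\lambda)\|^2+\|r_{P,\perp}\|^2+\lambda\|h_P^\lambda\|^2+\cdots,
\]
which gives no rate. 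You noted yourself that $r_P-T_P\widehat h$ need not lie in $\Gcal_n$. That is exactly why no critic available on the left can reproduce $\|r_{P,\perp}\|^2$.

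The missing idea is to apply your crux on the upper-bound side as well, which is what the paper does. Via the normal equation, it shows that $\E_P[\widetilde m(W;g)-h(X)g(Z)]=\langle T_P(h_P^\dag-h),g\rangle_{L^2(P_Z)}$ for every $h\in\Hcal$ and every $g\in\overline{\range(T_P)}$. This is the exact analogue of Bennett et al.'s Eq.~(10) with target $h_P^\dag$, and the paper then runs their argument unchanged. The upper bound at $h_P^\lambda$ becomes a supremum of $2\langle T_P(h_P^\dag-h_P^\lambda),g\rangle-\|g\|^2+O(\delta_{h,n}\|g\|+\delta_{h,n}^2)$, hence is $\lesssim\|T_P(h_P^\dag-h_P^\lambda)\|^2+\delta_{h,n}^2$. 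So $r_{P,\perp}$ never enters either side. The rest of your chain then goes through as written: the quadratic identity around $h_P^\lambda$, AM--GM absorption, \Cref{lem:bias-tikhonov}, and the choice of $\lambda_h$.

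Note that this applies the identity to every critic in the upper-bound supremum. It effectively takes the critics to lie in $\overline{\range(T_P)}$, so you must not enlarge that supremum to all of $\Gcal$. If $\Gcal_n$ did contain directions in $\kernel(T_P^*)$, the term $2\langle r_{P,\perp},g\rangle$ would contribute a non-vanishing constant. It would have to be matched by a lower-bound critic such as $T_P(h_P^\dag-\widehat h)+q$, with concentration centred at $q$. Otherwise the deviation at a critic of norm $\asymp\|r_{P,\perp}\|$ is only $O(\delta_{h,n})$, not $O(\delta_{h,n}^2)$. Handling this would require realizability beyond \Cref{assum:wellspecified}.
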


\subsection{Minimax $\alpha$-learners}
\label{sub:minimax_learners_of_the_weak_norm_riesz_representer}
From \Cref{rem:bennett_exact_solution}, we know that if $r_{P,\perp}\neq 0$, we need to learn an element of $\mathcal{A}_P^{h}$ for debiasing; symmetrically, if $a_{P,\perp}\neq 0$, we need to learn an element of $\mathcal{A}_P^{g}$ for debiasing, where $\Acal_P^h$ and $\Acal_P^g$ are defined in \Cref{eq: alpha_h_P_riesz_regression} and \Cref{eq: alpha_g_P_riesz_regression}. We only consider the former learning problem, all of our results apply symmetrically to the latter. We assume $a_{P,\parallel} \in \mathrm{range}(T_P^{\ast}T_P)$ so that $\mathcal{A}_P^{h}\neq \emptyset$, by \Cref{lem:Riesz_representer_equiv_assum}. Under \Cref{assum: source_condition_riesz}, this condition is required whenever $r_{P,\perp}\neq0$. We target the minimum norm element of $\mathcal{A}_P^{h}$
\begin{equation*}
    \alpha_P^{h,\dag} \doteq \argmin_{\alpha^h\in\Acal_P^h}\|\alpha^h\|_{L^2(P_X)}.
\end{equation*}

We define the following minimax learner for $\alpha_P^{h,\dag}$. 
Given a hypothesis class $\mathcal{H}_n \subseteq\mathcal{H}$, critic class $\mathcal{G}_n \subseteq \mathcal{G}$, and Tikhonov regularization parameter $\lambda_{\alpha^h}>0$, we define the Tikhonov regularized minimax learner:
\begin{align}
    \widehat\alpha^h(\lambda_{\alpha^h})
    &\doteq
    \argmin_{\alpha^h\in\Hcal_n}\max_{g\in\Gcal_n}
    \E_n\left[
    2\{\widehat{g}(\lambda_g)(Z)g(Z)-\alpha^h(X)g(Z)\}-g(Z)^2+\lambda_{\alpha^h}\alpha^h(X)^2
    \right],
    \label{eq: alpha_h_feasible_minimax}
\end{align}
where $\widehat g(\lambda_g)$ is a minimax learner for $g_P^{\dag}$ as defined in \Cref{sub:minimax_learners_of_h_p_dag}, fitted on an independent split of the data from the empirical average in \cref{eq: alpha_h_feasible_minimax}.

\begin{lem}[Bias bounds]
\label{lem:ppl_reg_bias_alpha}
We assume $a_{P,\parallel} \in \mathrm{range}(T_P^{\ast}T_P)$. For $\lambda>0$, we define
\begin{align*}
    \alpha_{P,\lambda}^h \doteq \argmin_{\alpha\in\Hcal}
    \left\{\|T_P\alpha-g_P^\dag\|_{L^2(P_Z)}^2+\lambda\|\alpha\|_{L^2(P_X)}^2\right\}. 
\end{align*}
We now take $\lambda = \lambda_n \downarrow 0$ to be a monotonically decreasing sequence of positive real numbers. We have, as $n\to\infty$, 
\begin{align*}
    \|T_P(\alpha_{P,\lambda_n}^h-\alpha_P^{h,\dag})\|_{L^2(P_Z)}^2 &= o(\lambda_n) \\
    \|\alpha_{P,\lambda_n}^h-\alpha_P^{h,\dag}\|_{L^2(P_X)}^2 &= o(1). 
\end{align*}
\end{lem}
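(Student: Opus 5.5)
This is Tikhonov regularization with source exponent $\beta=0$ relative to the "data" $g_P^\dag$: in effect $\alpha_P^{h,\dag}$ plays the role of $h_P^\dag$ and $g_P^\dag$ the role of $r_P$. We therefore get only $o(\cdot)$ rates, not polynomial ones, and the argument runs through the spectral theorem exactly as in \Cref{lem:bias-tikhonov}.

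\textbf{Closed form.} Write $A\doteq T_P^*T_P$, a bounded, self-adjoint, positive operator on $\Hcal$ with $\|A\|_{\mathrm{op}}\le 1$. The first-order condition gives
\begin{align*}
\alpha^h_{P,\lambda}=(A+\lambda I)^{-1}T_P^*g_P^\dag .
\end{align*}
By \Cref{lem:Riesz_representer_equiv_assum} we have $T_P\alpha_P^{h,\dag}=g_P^\dag$, and being minimum norm, $\alpha_P^{h,\dag}\in\kernel(T_P)^\perp$. Hence $T_P^*g_P^\dag=A\alpha_P^{h,\dag}$, and
\begin{align*}
\alpha^h_{P,\lambda}-\alpha_P^{h,\dag}=\big[(A+\lambda I)^{-1}A-I\big]\alpha_P^{h,\dag}=-\lambda(A+\lambda I)^{-1}\alpha_P^{h,\dag}.
\end{align*}

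\textbf{Spectral representation.} Let $\mu$ be the spectral measure of $A$ associated with the vector $\alpha_P^{h,\dag}$, supported on $[0,1]$. Then
\begin{align*}
\|\alpha^h_{P,\lambda}-\alpha_P^{h,\dag}\|^2_{L^2(P_X)}&=\int_{[0,1]}\frac{\lambda^2}{(s+\lambda)^2}\,d\mu(s),\\
\|T_P(\alpha^h_{P,\lambda}-\alpha_P^{h,\dag})\|^2_{L^2(P_Z)}&=\int_{[0,1]}\frac{\lambda^2 s}{(s+\lambda)^2}\,d\mu(s).
\end{align*}
The second identity uses $\|T_Pv\|^2=\langle Av,v\rangle$ together with functional calculus.

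\textbf{Key observation.} Since $\alpha_P^{h,\dag}\in\kernel(T_P)^\perp=\kernel(A)^\perp$, the measure has no atom at zero: $\mu(\{0\})=\|P_{\kernel A}\alpha_P^{h,\dag}\|^2=0$.

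\textbf{Dominated convergence.} For the strong-metric bound, the integrand is at most $1$ and tends to $\mathbbm{1}\{s=0\}$ pointwise as $\lambda\downarrow0$. Dominated convergence then gives the limit $\mu(\{0\})=0$, so the strong-metric error is $o(1)$. For the weak-metric bound, divide by $\lambda_n$. The integrand becomes
\begin{align*}
\frac{\lambda s}{(s+\lambda)^2}\le \frac14,
\end{align*}
using $(s+\lambda)^2\ge 4s\lambda$. It tends to $0$ pointwise for every $s\ge 0$ (it is identically $0$ at $s=0$). Dominated convergence then gives $o(\lambda_n)$. Monotonicity of $\lambda_n$ is not needed beyond $\lambda_n\to0$.

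\textbf{Main obstacle.} The main technical point is rigor for non-compact $T_P$. This is handled by the spectral theorem for bounded self-adjoint operators, which gives a projection-valued measure and the scalar measure $\mu_v(B)=\langle E(B)v,v\rangle$, as in the background section. We also need to justify $\mu(\{0\})=0$ through $E(\{0\})=$ projection onto $\kernel(A)$, and to check that $\kernel(A)=\kernel(T_P)$. The latter holds because $\langle Av,v\rangle=\|T_Pv\|^2$.
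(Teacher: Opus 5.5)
Your proof is correct and follows essentially the same route as the paper. Both arguments use the closed form $\alpha^h_{P,\lambda}-\alpha_P^{h,\dag}=-\lambda(T_P^*T_P+\lambda I)^{-1}\alpha_P^{h,\dag}$ and the spectral-measure integrals, note that there is no atom at zero because $\alpha_P^{h,\dag}\in\kernel(T_P)^\perp$, and apply dominated convergence with the bounds $1$ and $1/4$; your side remarks (the weak-metric claim does not need the no-atom property, and only $\lambda_n\to0$ is used) are also correct.
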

The proof of \Cref{lem:ppl_reg_bias_alpha} leverages the spectral theorem for bounded self-adjoint operators \citep{reed1972methods}, which allows us to avoid assuming $T_P$ is compact. We compare \Cref{lem:ppl_reg_bias_alpha} to \Cref{lem:bias-tikhonov} in the case where we don't assume a non-trivial source condition. The reason we're able to obtain $o(\lambda)$ and $o(1)$ instead of $O(\lambda)$ and $O(1)$ is because we apply dominated convergence theorem with respect to a fixed target $\alpha_P^{h,\dag}$.

\begin{assum}[Closedness]\label{assum:wellspecified_weakriesz}
$T_P(\alpha_P^{h,\dag}-\mathcal{H}_n)\subseteq\mathcal G_n$.
\end{assum}

\begin{thm}[Convergence rates for $\alpha$-learners]
\label{thm:minimaxlearner_weak_riesz_feasible}
Let $\lambda_{\alpha^h,n}\le 1$. Suppose $a_{P,\parallel}\in\range(T_P^*T_P)$, \cref{assum:wellspecified_weakriesz} holds, and $0\in\Gcal_n$. Suppose also that
\[
    \alpha_P^{h,\dag},\alpha_{P,\lambda_{\alpha^h,n}}^h\in\Hcal_n .
\]
Assume that there exists a constant $C>0$ independent of $n$, such that $g(Z)$ and $\alpha(X)$ are almost surely bounded by $C$ uniformly over $g\in\Gcal_n$ and $\alpha\in\Hcal_n$. Let $\widehat g(\lambda_g)$ be trained on a sample independent of the sample used in the sample average in \cref{eq: alpha_h_feasible_minimax}, and conditionally on the first-stage sample suppose $\widehat g(\lambda_g)(Z)$ is almost surely bounded by $O(1)$. Let $\zeta\in(0,0.5)$, and let
\[
    \delta_{\alpha^h,n}
    =
    \Omega\left(
        \sqrt{\frac{\log\log n+\log(1/\zeta)}{n}}
    \right)
\]
be a common upper bound on the critical radii of
\[
    \Gcal_n,\qquad
    \Hcal_n\cdot\Gcal_n,\qquad
    \Hcal_n-\alpha_{P,\lambda_{\alpha^h,n}}^h .
\]
Then, conditionally on the first-stage sample, with probability at least $1-\zeta$,
\[
    \|T_P(\widehat\alpha^h-\alpha_P^{h,\dag})\|_{L^2(P_Z)}^2
    \le
    C\left(
        \delta_{\alpha^h,n}^2
        +
        \lambda_{\alpha^h,n}
        +
        \|\widehat g(\lambda_g)-g_P^\dag\|_{L^2(P_Z)}^2
    \right),
\]
for a constant $C<\infty$ independent of $n$ and $\zeta$. If $\lambda_{\alpha^h,n}\downarrow0$ and
\[
    \frac{
        \delta_{\alpha^h,n}^2
        +
        \|\widehat g(\lambda_g)-g_P^\dag\|_{L^2(P_Z)}^2
    }{\lambda_{\alpha^h,n}}
    \to_p0,
\]
then
\[
    \|\widehat\alpha^h-\alpha_P^{h,\dag}\|_{L^2(P_X)}^2\to_p0.
\]
\end{thm}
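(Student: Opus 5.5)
The proof follows the template of \Cref{thm:minimaxlearner}, i.e.\ the argument of \citet[Theorem 4]{bennett2023sourceconditiondoublerobust}, applied to the Riesz-regression objective with the estimated target $\widehat g(\lambda_g)$. Write $\lambda=\lambda_{\alpha^h,n}$, $\alpha_\lambda=\alpha^h_{P,\lambda}$ and $\hat g=\widehat g(\lambda_g)$, and condition throughout on the first-stage sample, so that $\hat g$ is a fixed bounded function. The population objective behind \cref{eq: alpha_h_feasible_minimax} is
\[
\max_{g\in\Gcal}\E_P\bigl[2\{\hat g g-\alpha g\}-g^2\bigr]+\lambda\|\alpha\|^2=\|\Pi_{\Gcal}\hat g-T_P\alpha\|^2+\lambda\|\alpha\|^2 .
\]
This uses that for $g\in\Gcal$ we have $\E[\alpha(X)g(Z)]=\langle T_P\alpha,g\rangle$, and that $\hat g\in\Gcal$. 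The maximizer is $g=\hat g-T_P\alpha$.

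First I compare $\hat\alpha$ with $\alpha_\lambda\in\Hcal_n$. By strong convexity in the weak-plus-ridge norm, the population objective $L_{\hat g}(\alpha)=\|\hat g-T_P\alpha\|^2+\lambda\|\alpha\|^2$ satisfies
\[
L_{\hat g}(\alpha)-L_{\hat g}(\alpha_\lambda)\ge \|T_P(\alpha-\alpha_\lambda)\|^2+\lambda\|\alpha-\alpha_\lambda\|^2-2\langle \hat g-g_P^\dag,T_P(\alpha-\alpha_\lambda)\rangle .
\]
Here the cross term comes from $\alpha_\lambda$ being optimal for the target $g_P^\dag$ rather than for $\hat g$. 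By Young's inequality this cross term is absorbed, leaving $\tfrac12\|T_P(\alpha-\alpha_\lambda)\|^2+\lambda\|\alpha-\alpha_\lambda\|^2-2\|\hat g-g_P^\dag\|^2$.

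Next comes the empirical-process step. I localize the empirical minimax objective around $\alpha_\lambda$ using \Cref{lem:lipschitz_wainwright} over the classes $\Gcal_n$, $\Hcal_n\cdot\Gcal_n$ and $\Hcal_n-\alpha_\lambda$, all of which have critical radius at most $\delta_{\alpha^h,n}$; the products involving the fixed bounded $\hat g$ are handled because $\hat g$ is fixed. The critic side uses $0\in\Gcal_n$, so the inner maximum is nonnegative. It also uses \Cref{assum:wellspecified_weakriesz}: the population-optimal critic direction $T_P(\alpha_\lambda-\alpha)$ plus the $g_P^\dag$-part lies in $\Gcal_n$ up to $T_P(\alpha_P^{h,\dag}-\alpha_\lambda)$, which is $O(\lambda)$-small in squared norm by \Cref{lem:ppl_reg_bias_alpha}. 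Following Bennett et al.'s argument, this yields with probability $1-\zeta$
\[
\|T_P(\hat\alpha-\alpha_\lambda)\|^2+\lambda\|\hat\alpha-\alpha_\lambda\|^2\lesssim \delta_{\alpha^h,n}^2+\|\hat g-g_P^\dag\|^2+\lambda\|\alpha_\lambda\|^2,
\]
where $\|\alpha_\lambda\|\le\|\alpha_P^{h,\dag}\|$ is bounded. I expect this step to be the main obstacle. The difficulty is controlling the empirical maximum over the restricted critic class $\Gcal_n$ rather than over $\Gcal$, and tracking the misspecification from $\hat g\neq g_P^\dag$ so that it enters only additively through $\|\hat g-g_P^\dag\|^2$.

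Finally I conclude with the triangle inequality. \Cref{lem:ppl_reg_bias_alpha} gives $\|T_P(\alpha_\lambda-\alpha_P^{h,\dag})\|^2=o(\lambda)\le C\lambda$, which yields the weak-metric bound. For the strong metric, dividing the display above by $\lambda$ gives
\[
\|\hat\alpha-\alpha_\lambda\|^2\lesssim(\delta^2+\|\hat g-g_P^\dag\|^2)/\lambda+\|\alpha_\lambda\|^2 .
\]
This alone only gives boundedness, so I refine it. I would instead compare against $\alpha_P^{h,\dag}$ directly, using $L_{g_P^\dag}(\alpha_\lambda)\le L_{g_P^\dag}(\alpha_P^{h,\dag})$, to get $\lambda(\|\hat\alpha\|^2-\|\alpha_P^{h,\dag}\|^2)\lesssim\delta^2+\|\hat g-g_P^\dag\|^2+o(\lambda)$. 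Hence $\limsup\|\hat\alpha\|\le\|\alpha_P^{h,\dag}\|$ in probability. Combined with weak convergence $T_P\hat\alpha\to g_P^\dag$, and with $\alpha_P^{h,\dag}$ being the minimum-norm solution in $\kernel(T_P)^\perp$, the standard Hilbert-space argument (weak convergence plus norm limsup implies strong convergence) yields $\|\hat\alpha-\alpha_P^{h,\dag}\|\to_p0$.
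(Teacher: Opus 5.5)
Your proposal is correct. The weak-metric part follows the paper in substance; the strong-metric consistency takes a genuinely different route.

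\textbf{Weak metric.} The paper also localizes around $\alpha_\lambda$ with the three concentration bounds and uses minimax optimality of $\widehat\alpha$ against $\alpha_\lambda\in\Hcal_n$. The obstacle you flag is resolved there by testing the empirical objective at $\widehat\alpha$ against $\widetilde g=T_P(\alpha_P^{h,\dag}-\widehat\alpha)\in\Gcal_n$. This is needed because the critic $\widehat g-T_P\widehat\alpha$ attaining your $L_{\widehat g}(\widehat\alpha)$ need not lie in $\Gcal_n$. The paper uses $\E[2\widehat g\widetilde g-2\widehat\alpha\widetilde g-\widetilde g^2]=\|\widetilde g\|^2+2\langle\widehat g-g_P^\dag,\widetilde g\rangle$, and enlarges the critic supremum to $L^2(P_Z)$ only at the comparator $\alpha_\lambda$. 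This gives
\[
\|T_P(\widehat\alpha-\alpha_\lambda)\|^2+\lambda\|\widehat\alpha-\alpha_\lambda\|^2\lesssim\|T_P(\alpha_P^{h,\dag}-\alpha_\lambda)\|^2+\|\widehat g-g_P^\dag\|^2+\delta_{\alpha^h,n}^2 .
\]
So the natural remainder is the bias term you yourself identified, not $\lambda\|\alpha_\lambda\|^2$. The crude bound available for that bias term is $\lambda\|\alpha_P^{h,\dag}\|^2$, via $L_{g_P^\dag}(\alpha_\lambda)\le L_{g_P^\dag}(\alpha_P^{h,\dag})$.

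\textbf{Strong metric.} Here the routes diverge. Since \Cref{lem:ppl_reg_bias_alpha} gives $o(\lambda)$ rather than $O(\lambda)$ for the bias, the paper simply divides the display above by $\lambda$ and adds $\|\alpha_\lambda-\alpha_P^{h,\dag}\|^2=o(1)$. You instead show $\limsup\|\widehat\alpha\|\le\|\alpha_P^{h,\dag}\|$ and conclude by weak compactness plus uniqueness of the minimum-norm solution of $T_P\alpha=g_P^\dag$. That argument is valid and replaces the strong-metric half of the lemma by a compactness step. However, it is purely qualitative, whereas the paper's inequality turns into a strong-metric rate once the Tikhonov bias of $\alpha_P^{h,\dag}$ is quantified, e.g.\ under a source condition.

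\textbf{Making your step rigorous in probability.} You can avoid subsequence arguments as follows. For $\epsilon>0$ pick $v$ with $\|\alpha_P^{h,\dag}-T_P^*v\|\le\epsilon$, which is possible since $\alpha_P^{h,\dag}\in\overline{\range(T_P^*)}$. Then
\[
\|\widehat\alpha-\alpha_P^{h,\dag}\|^2\le\bigl(\|\widehat\alpha\|^2-\|\alpha_P^{h,\dag}\|^2\bigr)+2\|v\|\,\|T_P(\widehat\alpha-\alpha_P^{h,\dag})\|+2\epsilon\bigl(\|\widehat\alpha\|+\|\alpha_P^{h,\dag}\|\bigr),
\]
and each term on the right is controlled by your two ingredients and the uniform bound on $\Hcal_n$.
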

Our proof relies on showing that 
\begin{align*}
    \lambda_{\alpha^h,n}\|\widehat\alpha^{h}-\alpha_P^{h,\dag}\|_{L^2(P_X)}^2 +  \|T_P(\widehat\alpha^{h}-\alpha_P^{h,\dag})\|_{L^2(P_Z)}^2
\end{align*}
can be bounded using $\delta_{\alpha^h,n}$ and the bias terms given in \Cref{lem:ppl_reg_bias_alpha}. This step uses a localized uniform concentration inequality stated in \Cref{lem:lipschitz_wainwright}. As we do not impose a non-trivial source condition on $\alpha_P^{h,\dag}$, \Cref{thm:minimaxlearner_weak_riesz_feasible} gives weak metric convergence rates but only consistency with respect to the strong metric.

\subsection{$r$-learners}
\label{sub:learners_of_orthogonal_residuals}

Finally, we learn the orthogonal residual
\begin{align*}
    r_{P,\perp} = r_P - T_P h_P^{\dag} = r_P  - r_{P, \parallel}.
\end{align*}
We focus on $r_{P,\perp}$; the learner for $a_{P,\perp}$ follows by exchanging notations
\begin{align*}
    \left(T_P,\Gcal,Z,\widetilde m,r_P,h_P^\dag\right) \leftrightarrow \left(T_P^\ast,\Hcal,X,m,a_P,g_P^\dag\right).
\end{align*}
For a given first-stage learner $h_1$, our learning target is given by $r_P-T_Ph_1$. Given a fixed $h_1\in\Hcal$ and a function class $\Gcal_{n}^{r}\subseteq\Gcal$, for $\widetilde{m}$ given in \Cref{eq:m_tilde}, we define
\begin{align}
    \widehat r_\perp(h_1) \in \argmax_{q\in\Gcal_{n}^{r}}
    \E_n\left[2\{\widetilde m(W;q)-h_1(X)q(Z)\}-q(Z)^2\right].\label{eq:rperp_learner}
\end{align}
To see why \Cref{eq:rperp_learner} is the correct objective, we observe that
\begin{align}
\label{eq:widehatrperpid}
    \mathbb{E}\left[2\widetilde{m}(W;q) - 2h_1(X)q(Z) - q(Z)^2\right] = \mathbb{E}\left[2(r_P(Z) - (T_Ph_1)(Z))q(Z) - q(Z)^2\right].
\end{align}
Differentiating the right hand side with respect to $q$ formally yields $q = r_P - T_P h_1$.

\begin{thm}[$L^2$-norm error of $\widehat r_\perp(h_1)$ with fixed first stage nuisance $h_1$]
\label{thm:orthogonal_residual_fixed}
Fix $h_1\in\Hcal$. We make the following \emph{realizability} assumption
\begin{align*}
    r_P-T_Ph_1\in\Gcal_{n}^{r},
\end{align*}
suppose that \Cref{assum:mscont} holds, and that there exists a constant $C>0$ independent of $n$, such that $\widetilde m(W;q)$, $q(Z)$, $h_1(X)$, and
$(r_P-T_Ph_1)(Z)$ are almost surely bounded by $C$, uniformly over
$q\in\Gcal_{n}^{r}$. Let $\zeta\in(0,0.5)$ and let
\[
    \delta_{r_\perp,n} = \Omega\left(\sqrt{\frac{\log\log n-\log\zeta}{n}}\right)
\]
be an upper bound on the critical radii of $\Gcal_{n}^{r}$ and
$\{\widetilde m(\cdot;q):q\in\Gcal_{n}^{r}\}$. Then, with probability at least
$1-\zeta$,
\[
    \big\|\widehat r_\perp(h_1)-(r_P-T_Ph_1)\big\|_{L^2(P_Z)}^2 = O(\delta_{r_\perp,n}^2).
\]
\end{thm}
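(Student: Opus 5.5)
This is a standard localized analysis of a strongly concave empirical objective. Write $q_0 \doteq r_P - T_P h_1$, which lies in $\Gcal_n^r$ by realizability. For $q\in\Gcal_n^r$, define the loss difference
\[
\ell_q(W) \doteq 2\{\widetilde m(W;q-q_0) - h_1(X)(q-q_0)(Z)\} - q(Z)^2 + q_0(Z)^2 .
\]
By linearity of $\widetilde m$ in its second argument, $\ell_q$ depends on $q$ only through $f \doteq q - q_0$. By \cref{eq:widehatrperpid} and the identity $\E_P[\widetilde m(W;f)] - \E_P[h_1(X)f(Z)] = \langle q_0, f\rangle_{L^2(P_Z)}$, the population objective is
\[
\E_P[\ell_q] = 2\langle q_0,f\rangle - \|q\|^2 + \|q_0\|^2 = -\|f\|^2_{L^2(P_Z)} .
\]
So the population excess loss equals exactly the squared $L^2(P_Z)$ error. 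Since $\widehat r_\perp(h_1)$ maximizes the empirical objective and $q_0$ is feasible, we have $\E_n[\ell_{\widehat q}]\ge 0$. Combining the two gives the basic inequality
\[
\|\widehat f\|^2 \le (\E_n - \E_P)[\ell_{\widehat q}] .
\]

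To control the empirical process, decompose
\[
\ell_q = 2\widetilde m(W;f) - 2h_1(X)f(Z) - f(Z)^2 - 2q_0(Z)f(Z),
\]
using $q^2 - q_0^2 = f^2 + 2q_0 f$. Each piece is a bounded Lipschitz transformation of the vector $(\widetilde m(W;f), f(Z))$, with fixed bounded multipliers $h_1$ and $q_0$; the boundedness hypotheses make all constants $O(1)$. Mean-squared continuity (\cref{assum:mscont}) gives $\|\widetilde m(\cdot;f)\|_{L^2(P)}\le \sqrt{C}\,\|f\|$, so the $L^2$ norm of the vector-valued function is comparable to $\|f\|$.

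Next, invoke the localized concentration inequality \Cref{lem:lipschitz_wainwright}. It is applied over the classes $\Gcal_n^r - q_0$ and $\{\widetilde m(\cdot;q-q_0)\}$, whose critical radii are controlled by $\delta_{r_\perp,n}$ up to constants. Since $q_0\in\Gcal_n^r$, these are differences of the given classes; I would either assume star-shapedness or accept a constant-factor loss. With probability at least $1-\zeta$, uniformly over $q$,
\[
|(\E_n-\E_P)\ell_q| \lesssim \delta_{r_\perp,n}\|f\| + \delta_{r_\perp,n}^2 .
\]

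Plugging this into the basic inequality gives $\|\widehat f\|^2 \lesssim \delta\|\widehat f\| + \delta^2$, and solving the quadratic yields $\|\widehat f\|^2 = O(\delta_{r_\perp,n}^2)$.

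The main obstacle is matching the loss structure to the exact form of \Cref{lem:lipschitz_wainwright}. That lemma needs the loss to be Lipschitz in a vector-valued function, centered at the target, with norm comparable to $\|f\|$. The $\log\log n$ term in the radius is what absorbs the peeling/union bound over scales.
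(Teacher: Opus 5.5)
Your proposal is correct and matches the paper's proof essentially step for step. The paper also sets $q_0 = r_P - T_P h_1$, shows the population excess loss equals $\|q-q_0\|_{L^2(P_Z)}^2$, and uses realizability $q_0\in\Gcal_n^r$ to bound $\|\widehat q - q_0\|_{L^2(P_Z)}^2$ by the empirical-process deviation at $\widehat q$. It then controls that deviation via \Cref{lem:lipschitz_wainwright}, using Lipschitzness in $q(Z)$ and \Cref{assum:mscont} for the $\widetilde m$ term, and solves the resulting quadratic inequality. The differences are cosmetic: your loss is the negative of the paper's minimization loss. The centering subtlety you flag (critical radii of $\Gcal_n^r - q_0$ versus $\Gcal_n^r$) is one the paper also passes over without comment.
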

The motivation for the realizability assumption is clear from \Cref{eq:widehatrperpid}.

\begin{cor}[Convergence rates for $r$-learners]
\label{cor:rperp_plugin}
Suppose $\widehat h(\lambda_h)$ is trained on a sample independent of the sample used in the sample average in \Cref{eq:rperp_learner}. We also suppose that the assumptions of \Cref{thm:minimaxlearner} hold for
$\widehat h(\lambda_h)$, and the assumptions of
\Cref{thm:orthogonal_residual_fixed} hold with
$h_1=\widehat h(\lambda_h)$. Then
\[
    \big\|\widehat r_\perp(\widehat h(\lambda_h))-r_{P,\perp}\big\|_{L^2(P_Z)}^2 = O_p\left(
    \delta_{r_\perp,n}^2+\delta_{h,n}^2+
    \lambda_h^{\min\{\beta_h+1,2\}}
    \right).
\]
In particular, if $\lambda_h=\delta_{h,n}^{2/\min\{\beta_h+1,2\}}$, then with probability at least $1-\zeta$,
\[
    \big\|\widehat r_\perp(\widehat h(\lambda_h))-r_{P,\perp}\big\|_{L^2(P_Z)}^2 = O(\delta_{r_\perp,n}^2+\delta_{h,n}^2).
\]
\end{cor}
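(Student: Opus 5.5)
The proof is a triangle inequality combined with sample splitting. The starting identity is
\begin{align*}
    r_{P,\perp} = r_P - T_P h_P^\dag,
\end{align*}
so for $h_1=\widehat h(\lambda_h)$ we have
\begin{align*}
    \widehat r_\perp(h_1)-r_{P,\perp}
    =
    \big\{\widehat r_\perp(h_1)-(r_P-T_Ph_1)\big\}
    -
    T_P\big(h_1-h_P^\dag\big).
\end{align*}
Using $(a+b)^2\le 2a^2+2b^2$, the squared $L^2(P_Z)$ error is at most twice the sum of two pieces. The first is the estimation error of the $r$-learner relative to its own moving target $r_P-T_Ph_1$. The second is the weak metric error of the first-stage $h$-learner.

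For the first piece, we condition on the independent sample that produced $\widehat h(\lambda_h)$. Conditionally, $h_1$ is a fixed element of $\Hcal$, and by hypothesis the assumptions of \Cref{thm:orthogonal_residual_fixed} hold for it: realizability, boundedness, and the critical radius bound $\delta_{r_\perp,n}$. \Cref{thm:orthogonal_residual_fixed} then gives, conditionally with probability at least $1-\zeta$,
\begin{align*}
    \|\widehat r_\perp(h_1)-(r_P-T_Ph_1)\|_{L^2(P_Z)}^2 = O(\delta_{r_\perp,n}^2).
\end{align*}
Integrating over the first-stage sample, the same bound holds unconditionally with probability at least $1-\zeta$. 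The step that needs care is that the implicit constant must not depend on $h_1$. It depends only on the uniform almost sure bound $C$ and the constant in \Cref{assum:mscont}, which is why those hypotheses are stated uniformly.

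For the second piece, \Cref{thm:minimaxlearner} gives, with probability at least $1-\zeta$,
\begin{align*}
    \|T_P(\widehat h(\lambda_h)-h_P^\dag)\|_{L^2(P_Z)}^2 = O\big(\delta_{h,n}^2+\lambda_h^{\min\{\beta_h+1,2\}}\big).
\end{align*}
A union bound gives both events simultaneously with probability at least $1-2\zeta$. Since $\zeta$ is arbitrary, this yields the $O_p$ statement. With $\lambda_h=\delta_{h,n}^{2/\min\{\beta_h+1,2\}}$ the bias term equals $\delta_{h,n}^2$, which gives the second display. The stated probability $1-\zeta$ follows by applying each theorem at level $\zeta/2$; this changes the critical radius bounds only through the $\log(1/\zeta)$ term inside $\Omega(\cdot)$, so they still hold.

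The only real obstacle is the conditioning argument: \Cref{thm:orthogonal_residual_fixed} is stated for deterministic $h_1$, and it is valid to apply it at a data-dependent $h_1$ only because of the independence between the samples.
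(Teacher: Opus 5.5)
Your proposal is correct and follows essentially the same route as the paper. Both condition on the first-stage sample to apply \Cref{thm:orthogonal_residual_fixed} at $h_1=\widehat h(\lambda_h)$, bound $\|T_P(\widehat h(\lambda_h)-h_P^\dag)\|_{L^2(P_Z)}$ via \Cref{thm:minimaxlearner}, combine the two with the triangle inequality using $r_{P,\perp}=r_P-T_Ph_P^\dag$, and marginalize over the first-stage sample; your remarks on the constant being uniform in $h_1$ and on splitting $\zeta$ across the two events make explicit details the paper leaves implicit.
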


\begin{rem}[Saddle point vs second max step]
\Cref{eq:rperp_learner} is exactly the inner max of the minimax learner \Cref{eq:minimaxlearnerl} for $\widehat{h}(\lambda)$, so taking both solutions to that minimax problem at optimality can potentially give a simultaneous estimator for both $h_P^\dag$ and $r_{P,\perp}$.  For simplicity of analysis alone, in this section we considered instead a two-stage procedure, where we optimize the inner max on a new sample, with $h_1$ set to $\widehat{h}(\lambda_h)$. In practice, we can and should use a single minimax learner for the saddle point. Nonetheless, it is important to note that additional realizability would be needed: \cref{assum:wellspecified} only requires elements of $\range(T_P)$ to lie in the critic class $\mathcal{G}_n$, while \cref{cor:rperp_plugin} may require $\mathcal{G}_n^r$ to contain elements that also have a projection onto $\kernel(T_P)$. Therefore, a simultaneous minimax estimator for the saddle point, having a single critic class, would have to satisfy both.
\end{rem}


\section{Debiased inference on the target functional}\label{sec:inference}

We now present our main results on the inference for $\Psi(P)$, building on the minimax nuisance learners and non-asymptotic bounds from the preceding section, and the functional bias expansion in \Cref{thm: bias_char}.

We view the set of nuisance functions as a \emph{tuple} and refer to a particular nuisance function as a \emph{coordinate} of the tuple. As explained at the start of \Cref{sec:minimax_learners}, we use three independent datasets $\mathcal{I}_1, \mathcal{I}_2, \mathcal{I}_3$ to learn the nuisances, and we use a fourth independent dataset $\mathcal{I}_4$ to construct the debiased estimator, defined as follows, 
\begin{align}
    \widehat{\Psi} \doteq \frac{1}{n}\sum_{i\in \mathcal{I}_4} \chi(W_i;\widehat{\eta}). 
    \label{eq:autodml-estimator}
\end{align}
where the function $\chi$ acting on data sample and nuisance tuple is defined in \cref{eq:rif}, $\widehat\eta=(\widehat h,\widehat g,\widehat\alpha^h,\widehat\alpha^g,\widehat r_\perp,\widehat a_\perp)$ is fitted on $\mathcal{I}_1, \mathcal{I}_2, \mathcal{I}_3$ as described in \Cref{sec:minimax_learners}. We also define an estimator for the asymptotic variance of $\widehat{\Psi}$ as 
\begin{align}
\label{eq:var_estimator}
    \widehat{\sigma}^2 \doteq \frac{1}{n}\sum_{i\in \mathcal{I}_4} (\chi(W_i; \hat{\eta})-\widehat{\Psi})^2, 
\end{align}
The extension to cross-fitting \citep{Chernozhukov2018,chernozhukov2025applied} is immediate: we split the data and permute the roles, averaging the estimator $\widehat{\Psi}$ over these, each time with $\hat\eta$ fit on disjoint folds. Then we can retain the $\sqrt n$-asymptotic-linearity we will obtain in this section without need for additional samples. To keep our analysis simple, we essentially just consider a single split. 

As in \Cref{sec: functional_bias}, we use the following tuple as the fixed target of our nuisance learners, 
\[
    \eta_P^\dag \doteq  (h_P^\dag,g_P^\dag,\alpha_P^{h,\dag},\alpha_P^{g,\dag},
    r_{P,\perp},a_{P,\perp}),
\]
where the first four coordinates are minimum-norm elements of their respective sets. We use the convention that the $\alpha^h$ coordinate is omitted when $r_{P,\perp}=0$, and the $\alpha^g$ coordinate is omitted when
$a_{P,\perp}=0$. If a coordinate is omitted, we refer to it as an \emph{"inactive"} coordinate. Otherwise, we refer to it as an \emph{"active"} coordinate. The function $w\mapsto \chi(w;\eta_P^{\dag})$ is an (uncentered) \emph{influence function candidate}. By \Cref{thm: bias_char}, $\E_P[\chi(W; \eta_P^{\dag})]=\Psi(P)$. We now give names to this mapping and its centred version, 
\begin{align}
    \label{eq:chi_P_defn}
    \chi_P(w)\doteq \chi(w;\eta_P^\dag),
    \qquad
    \varphi_P(w)\doteq \chi_P(w)-\Psi(P),
    \qquad
    \sigma_P^2\doteq \E_P[\varphi_P(W)^2].
\end{align}
We emphasize that despite parametrization invariance shown in \Cref{lem: param_inv}, the candidate $\sigma_P^2$ for our asymptotic variance is defined in terms of \emph{minimum-norm} elements of the least squares solution sets. In general, choosing arbitrary elements of the least squares solution sets may lead to a larger asymptotic variance candidate. 

We define the strong and weak metric errors for the nuisance functions
\begin{align*}
e_h^s&=\|\widehat h-h_P^\dag\|_{L^2(P_X)},
&
e_h^w&=\|T_P(\widehat h-h_P^\dag)\|_{L^2(P_Z)},\\
e_g^s&=\|\widehat g-g_P^\dag\|_{L^2(P_Z)},
&
e_g^w&=\|T_P^*(\widehat g-g_P^\dag)\|_{L^2(P_X)},\\
e_{\alpha^h}^s&=\|\widehat\alpha^h-\alpha_P^{h,\dag}\|_{L^2(P_X)},
&
e_{\alpha^h}^w&=\|T_P(\widehat\alpha^h-\alpha_P^{h,\dag})\|_{L^2(P_Z)},\\
e_{\alpha^g}^s&=\|\widehat\alpha^g-\alpha_P^{g,\dag}\|_{L^2(P_Z)},
&
e_{\alpha^g}^w&=\|T_P^*(\widehat\alpha^g-\alpha_P^{g,\dag})\|_{L^2(P_X)},\\
e_{r_\perp}^s&=\|\widehat r_\perp-r_{P,\perp}\|_{L^2(P_Z)},
&\\
e_{a_\perp}^s&=\|\widehat a_\perp-a_{P,\perp}\|_{L^2(P_X)}.
\end{align*}
We only require consistency for $e_{\alpha^h}^s$ and $e_{\alpha^g}^s$. We require convergence rates for all the other errors. These results are provided by the non-asymptotic guarantees in \Cref{sec:minimax_learners}. If we assume that $r_{P,\perp}=0$ and we set $\widehat r_\perp\equiv0$, take
$e_{\alpha^h}^s=e_{\alpha^h}^w=e_{r_\perp}^s=0$. If we assume that
$a_{P,\perp}=0$ and we set $\widehat a_\perp\equiv0$, take
$e_{\alpha^g}^s=e_{\alpha^g}^w=e_{a_\perp}^s=0$. 

\begin{cond}[Requirements on convergence rates for nuisance functions]\label{cond:inference_nuisance}
The nuisance errors are consistent with respect to the strong metric,
\begin{align}
\label{eq:consistency}
    e_h^s+e_g^s
+e_{\alpha^h}^s
+e_{\alpha^g}^s
+e_{r_\perp}^s+e_{a_\perp}^s
=o_p(1)
\end{align}
Since $T_P$ and $T_P^*$ are bounded, this also implies weak metric consistency
for the corresponding coordinates. We also require
\begin{align}
\sqrt n\min\{e_g^w e_h^s,\ e_g^s e_h^w\}&=o_p(1), \label{eq:product_err_1}\\
\sqrt n\left[
(e_{\alpha^h}^w+e_g^s)e_{r_\perp}^s
+(e_{\alpha^g}^w+e_h^s)e_{a_\perp}^s
\right]&=o_p(1). \label{eq:product_err_2}
\end{align}
\end{cond}
We refer to \cref{eq:product_err_1} and \cref{eq:product_err_2} as \emph{product rate conditions}. Note \cref{eq:product_err_1} is derived from \Cref{rem:rates_weaken}. We also observe that \cref{eq:product_err_2} contains plug-in $L^2$ errors $e_h^s$ and $e_g^s$ for the primary and secondary nuisances respectively, while only weak metric errors $e_{\alpha^h}^w$ and $e_{\alpha^g}^w$ for the $\alpha$-nuisances. We refer the reader to our discussion in  \Cref{rem:fbias_alpha_nuisance_discussion} earlier in the paper.

\begin{thm}[Asymptotic linearity]\label{thm:autodml-clt}
Suppose \Cref{assum: range_condition,assum: source_condition_riesz,assum:mscont} and
\Cref{cond:inference_nuisance} hold, and $0<\sigma_P^2<\infty$. We also assume that there exists a large constant $C>0$ such that
\begin{align*}
    \max\left\{\left\|\widehat{h}\right\|_{\infty}^2,\left\|h^{\dag}_P\right\|^2_{\infty}, \left\|g^{\dag}_P\right\|^2_{\infty}, \left\|\alpha_P^{h,\dag}\right\|_{\infty}^2, \left\|\alpha_P^{g,\dag}\right\|_{\infty}^2, \left\|\widehat{a}_{\perp}\right\|^2_{\infty},\left\|\widehat{r}_{\perp}\right\|^2_{\infty} \right\} \leq C,
\end{align*}
with probability tending to $1$. We drop the assumptions on $\alpha_P^{h,\dag}$ and $\alpha_P^{g,\dag}$ if they are inactive. Then
\begin{align}
    \sqrt n(\widehat\Psi-\Psi(P))
    =
    \frac1{\sqrt n}\sum_{i=1}^n\varphi_P(W_i)+o_p(1)
    \rightsquigarrow N(0,\sigma_P^2).
    \label{eq:asymp_linear}
\end{align}
Moreover, $\widehat{\sigma}$ is a consistent estimator for the asymptotic standard deviation $\sigma_P$, i.e. $\widehat\sigma\to_p\sigma_P$, where $\widehat{\sigma}$ is defined in \cref{eq:var_estimator}.
\end{thm}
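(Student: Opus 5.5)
We follow the standard sample-splitting argument for debiased estimators. Since $\widehat\eta$ is fitted on $\mathcal I_1,\mathcal I_2,\mathcal I_3$, it is independent of $\mathcal I_4$. Write $\E_n$ for the empirical mean over $\mathcal I_4$. We decompose
\begin{align*}
\widehat\Psi-\Psi(P)
&=(\E_n-\E_P)[\chi(W;\eta_P^\dag)]
+(\E_n-\E_P)[\chi(W;\widehat\eta)-\chi(W;\eta_P^\dag)]\\
&\quad+\bigl(\E_P[\chi(W;\widehat\eta)\mid\widehat\eta]-\Psi(P)\bigr),
\end{align*}
using $\E_P[\chi(W;\eta_P^\dag)]=\Psi(P)$ from \Cref{thm: bias_char}. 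The first term equals $n^{-1}\sum_i\varphi_P(W_i)$. The claimed CLT then follows from the classical Lindeberg--L\'evy theorem, since $0<\sigma_P^2<\infty$. It remains to show that the second (empirical process) term and the third (bias) term are $o_p(n^{-1/2})$.

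\textbf{Bias term.} Apply \Cref{thm: bias_char} with $\eta=\widehat\eta$. The first summand is handled by \Cref{rem:rates_weaken}. It is bounded by $\min\{e_h^we_g^s,e_h^se_g^w\}$, which is $o_p(n^{-1/2})$ by \cref{eq:product_err_1}.

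For the second summand, we use $T_P\alpha_P^{h,\dag}=g_P^\dag$, which holds by \cref{eq: T_P_gdag} of \Cref{lem:Riesz_representer_equiv_assum}. This is available whenever $r_{P,\perp}\neq0$, by \Cref{assum: source_condition_riesz}. We then write
\[
T_P\widehat\alpha^h-\widehat g=T_P(\widehat\alpha^h-\alpha_P^{h,\dag})-(\widehat g-g_P^\dag).
\]
Cauchy--Schwarz bounds the summand by $(e^w_{\alpha^h}+e^s_g)e^s_{r_\perp}$. The third summand is treated symmetrically, using $T_P^*\alpha_P^{g,\dag}=h_P^\dag$, and gives $(e^w_{\alpha^g}+e^s_h)e^s_{a_\perp}$. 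Both are $o_p(n^{-1/2})$ by \cref{eq:product_err_2}. When $r_{P,\perp}=0$ or $a_{P,\perp}=0$, the corresponding estimated coordinate is set to zero, so the corresponding summand vanishes identically.

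\textbf{Empirical process term.} Conditionally on $\widehat\eta$, the summands are i.i.d. The conditional second moment of $\sqrt n(\E_n-\E_P)[\Delta]$, with $\Delta:=\chi(\cdot;\widehat\eta)-\chi(\cdot;\eta_P^\dag)$, is at most $\E_P[\Delta^2]$. Chebyshev's inequality then reduces the task to showing $\E_P[\Delta^2]=o_p(1)$.

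We expand $\Delta$ term by term, using linearity of $m$ and $\widetilde m$. By \Cref{assum:mscont}, the first two pieces satisfy $\E[m(W;\widehat h-h_P^\dag)^2]\le C(e_h^s)^2$ and $\E[\widetilde m(W;\widehat g-g_P^\dag)^2]\le C(e_g^s)^2$. The bilinear pieces are of the form $\widehat h\widehat g-h_P^\dag g_P^\dag$ and $\widehat r_\perp(\widehat\alpha^h-\widehat g)-r_{P,\perp}(\alpha_P^{h,\dag}-g_P^\dag)$, together with the analogous $a_\perp$ piece. We handle each by adding and subtracting cross terms, for example
\[
\widehat h\widehat g-h_P^\dag g_P^\dag=\widehat h(\widehat g-g_P^\dag)+(\widehat h-h_P^\dag)g_P^\dag.
\]
We then bound each product in $L^2$ by a sup-norm times a strong-metric error. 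The assumed uniform sup-norm bounds, holding with probability tending to one, control the sup-norm factors. The strong-metric consistency in \cref{eq:consistency} sends every error to zero, including $e^s_{\alpha^h}$ and $e^s_{\alpha^g}$, for which only consistency is required.

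\textbf{Main obstacle.} The delicate point is that not every estimated coordinate is assumed bounded; for example, $\widehat g$, $\widehat\alpha^h$ and $\widehat\alpha^g$ do not appear in the sup-norm list. The decompositions must therefore always place the sup-norm on a coordinate that is bounded. For instance, we write
\[
\widehat r_\perp\widehat\alpha^h-r_{P,\perp}\alpha_P^{h,\dag}=\widehat r_\perp(\widehat\alpha^h-\alpha_P^{h,\dag})+(\widehat r_\perp-r_{P,\perp})\alpha_P^{h,\dag},
\]
which uses only the bounds on $\widehat r_\perp$ and $\alpha_P^{h,\dag}$. The term $\widehat h(\widehat g-g_P^\dag)$ uses the bound on $\widehat h$. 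Finally, $(\widehat a_\perp-a_{P,\perp})h_P^\dag$ uses the bound on $h_P^\dag$, and $\widehat a_\perp(\widehat h-h_P^\dag)$ uses the bound on $\widehat a_\perp$. To restrict to the event where these bounds hold, we truncate on that event, which has probability tending to one.

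\textbf{Variance consistency.} We write
\[
\widehat\sigma^2=\E_n[\chi(W;\widehat\eta)^2]-\widehat\Psi^2.
\]
We have already shown $\widehat\Psi\to_p\Psi(P)$. For the second moment, we use $\E_n[(\chi(\cdot;\widehat\eta)-\chi_P)^2]\to_p0$. This follows because its conditional mean is $\E_P[\Delta^2]=o_p(1)$, so conditional Markov's inequality applies. Combining this with the weak law of large numbers $\E_n[\chi_P^2]\to\E_P[\chi_P^2]$ and Cauchy--Schwarz gives $\widehat\sigma^2\to_p\sigma_P^2$. The continuous mapping theorem then yields $\widehat\sigma\to_p\sigma_P$.
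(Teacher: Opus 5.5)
Your proposal is correct and follows the paper's proof essentially step for step. You use the same decomposition into an influence-function term, an empirical-process term and a bias term. The empirical-process term is handled by conditional Chebyshev via $\E_P[\Delta^2]=o_p(1)$, with the same sup-norm times strong-metric splittings. The bias term is controlled by \Cref{thm: bias_char} together with $T_P\alpha_P^{h,\dag}=g_P^\dag$ and $T_P^*\alpha_P^{g,\dag}=h_P^\dag$. The only cosmetic difference is in variance consistency: you write $\widehat\sigma^2=\E_n[\chi(W;\widehat\eta)^2]-\widehat\Psi^2$ and use Cauchy--Schwarz, whereas the paper uses the reverse triangle inequality in $L^2(P_n)$. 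Both rest on the same fact $P_n\{\chi(\cdot;\widehat\eta)-\chi_P\}^2=o_p(1)$.
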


\subsection{Sufficient conditions for nuisance learners}
We next record sufficient conditions under which the nuisance learners from \Cref{sec:minimax_learners} satisfy \Cref{cond:inference_nuisance}. Let $\beta_h$ and $\beta_g$ be defined as in \Cref{assum:betasource,assum:betasource_g}, and define
\[
\kappa_h\doteq\frac{\min\{\beta_h,1\}}{\min\{\beta_h+1,2\}},
\qquad
\kappa_g\doteq\frac{\min\{\beta_g,1\}}{\min\{\beta_g+1,2\}}.
\]
Recall that under \Cref{assum: source_condition_riesz}, $\beta_h \geq 1$ if $a_{P,\perp}\neq 0$, and $\beta_g\geq 1$ if $r_{P,\perp}\neq 0$. Let $\delta_{h,n}$, $\delta_{g,n}$, $\delta_{\alpha^h,n}$, $\delta_{\alpha^g,n}$, $\delta_{r_\perp,n}$, and $\delta_{a_\perp,n}$ denote the critical radii appearing in the nuisance-learning results for $\widehat h$, $\widehat g$, $\widehat\alpha^h$, $\widehat\alpha^g$, $\widehat r_\perp$, and $\widehat a_\perp$, respectively, as defined in \cref{thm:minimaxlearner,thm:minimaxlearner_weak_riesz_feasible,thm:orthogonal_residual_fixed}.

\Cref{thm:minimaxlearner} and its symmetric analogue give
\[
    e_h^w=O_p(\delta_{h,n}),\qquad
    e_h^s=O_p(\delta_{h,n}^{\kappa_h}),\qquad
    e_g^w=O_p(\delta_{g,n}),\qquad
    e_g^s=O_p(\delta_{g,n}^{\kappa_g}).
\]
For "active" coordinates of the $\alpha$-nuisances (i.e. $\alpha^h$ if $r_{P,\perp}\neq 0$ and $\alpha^g$ if $a_{P,\perp}\neq 0$), \Cref{thm:minimaxlearner_weak_riesz_feasible} and its symmetric analogue give
\begin{align*}
    e_{\alpha^h}^w
    &=
    O_p\left(
    \left\{\delta_{\alpha^h,n}^2+\lambda_{\alpha^h,n}
    +\delta_{g,n}^{2\kappa_g}\right\}^{1/2}
    \right),\\
    e_{\alpha^g}^w
    &=
    O_p\left(
    \left\{\delta_{\alpha^g,n}^2+\lambda_{\alpha^g,n}
    +\delta_{h,n}^{2\kappa_h}\right\}^{1/2}
    \right).
\end{align*}
Moreover, $e_{\alpha^h}^s=o_p(1)$ if
\begin{align}
    \label{eq:suff_cond_lambda_alpha_h}
    \lambda_{\alpha^h,n}\downarrow0,
    \qquad
    \frac{\delta_{\alpha^h,n}^2+\delta_{g,n}^{2\kappa_g}}
    {\lambda_{\alpha^h,n}}\to0,
\end{align}
and, symmetrically, $e_{\alpha^g}^s=o_p(1)$ if
\begin{align}
    \label{eq:suff_cond_lambda_alpha_g}
    \lambda_{\alpha^g,n}\downarrow0,
    \qquad
    \frac{\delta_{\alpha^g,n}^2+\delta_{h,n}^{2\kappa_h}}
    {\lambda_{\alpha^g,n}}\to0.
\end{align}
Note that \cref{eq:suff_cond_lambda_alpha_h} and \cref{eq:suff_cond_lambda_alpha_g} are exactly bias-variance trade-off conditions. The $\lambda$'s have to decay sufficiently quickly for bias of the estimators to be small, while they can't decay too fast to control the estimators'
variance. Finally, \Cref{cor:rperp_plugin} and its symmetric analogue give
\[
    e_{r_\perp}^s
    =
    O_p(\delta_{r_\perp,n}+\delta_{h,n}),
    \qquad
    e_{a_\perp}^s
    =
    O_p(\delta_{a_\perp,n}+\delta_{g,n}).
\]
Thus \cref{eq:consistency} follows if \cref{eq:suff_cond_lambda_alpha_h} and \cref{eq:suff_cond_lambda_alpha_g} hold, and $\delta_{h,n}, \delta_{g,n}, \delta_{r_{\perp}, n}$ and $\delta_{a_{\perp},n}$ are $o(1)$. The first product condition \cref{eq:product_err_1} is implied by
\[
    \sqrt n\min\left\{
    \delta_{g,n}\delta_{h,n}^{\kappa_h},~
    \delta_{g,n}^{\kappa_g}\delta_{h,n}
    \right\}
    \to0.
\]
For the second product condition \cref{eq:product_err_2}, define
\begin{align*}
    q^2_{\alpha^h,n}
    &\doteq
    \begin{cases}
    \delta_{\alpha^h,n}^2+\lambda_{\alpha^h,n}
    +\delta_{g,n}^{2\kappa_g}
    , & r_{P,\perp}\neq0,\\
    0, & r_{P,\perp}=0,
    \end{cases}\\
    q^2_{\alpha^g,n}
    &\doteq
    \begin{cases}
    \delta_{\alpha^g,n}^2+\lambda_{\alpha^g,n}
    +\delta_{h,n}^{2\kappa_h}
    , & a_{P,\perp}\neq0,\\
    0, & a_{P,\perp}=0.
    \end{cases}
\end{align*}
Then \cref{eq:product_err_2} is implied by
\[
    \sqrt n\left[
    q_{\alpha^h,n}(\delta_{r_\perp,n}+\delta_{h,n})
    +
    q_{\alpha^g,n}(\delta_{a_\perp,n}+\delta_{g,n})
    \right]\to0.
\]

\paragraph{Simple sufficient condition.}
Consider the case where we do not a priori assume either residual $r_{P, \perp}, a_{P, \perp}$ is zero and where a single critical radius $\delta_n$ bounds all of
$\delta_{h,n}$, $\delta_{g,n}$, $\delta_{\alpha^h,n}$, $\delta_{\alpha^g,n}$,
$\delta_{r_\perp,n}$, and $\delta_{a_\perp,n}$.
Define 
$\underline\kappa\doteq\min\{\kappa_h,\kappa_g\}$.
If
\[
    \sqrt n\,\delta_n^{1+\underline\kappa}\to0,
\]
then we have that \Cref{cond:inference_nuisance} holds  by choosing $\lambda_{\alpha^h,n}=\lambda_{\alpha^g,n}=\lambda_n=o(1)$ with $\lambda_n=\omega(\delta_n^{2\underline\kappa})$, $\lambda_{n}=o(n^{-1}\delta_n^{-2})$. If $\beta_h,\beta_g\ge1$, then
$\kappa_h=\kappa_g=1/2$ due to the saturation effect, so this amounts to requiring $\delta_n=o(n^{-1/3})$. 


\section{Conclusions}

In low-dimensional linear IV analysis, two-stage least squares (2SLS) does not require there to be an exact solution $\beta$ to $\E[Z X^\top]\beta=\E[ZY]$ \citep{theil1953repeated,basmann1957generalized,sargan1958estimation,anderson2005origins}. When there is no exact solution, it just targets the least-squares solution $\E[Z X^\top]^\dag\E[ZY]$ to this (population) equation \citep{andrews2019structure,andrews2025purpose}, while inference requires adjustment to covariance estimates \citep{hall2003large,maasoumi1982behavior}. 
When moving to NPIV, we have to estimate nonparametric functions to solve infinite-dimensional linear equations and to debias functionals to ensure valid inference on finite-dimensional parameters \citep{bennett2025inference,bennett2023sourceconditiondoublerobust,ai2003efficient,severini2012efficiency}. Along the way, much of the NPIV-functionals literature has imposed existence of solutions, whether to $T_Ph=r_P$ or $T_P^*g=a_P$, that were never needed for low-dimensional 2SLS. In this paper, we sought to remove these requirements in order to harmonize modern debiased inference on functionals of NPIV with the classic least-squares approach to low-dimensional linear IV. Doing so requires additional debiasing terms to account for nonzero residuals in the equations to be solved, which is the semiparametric analogue to the covariance corrections needed in finite-dimensional methods of moments \citep{hall2003large,maasoumi1982behavior}, except here this also changes the debiased estimator itself.

\section*{Acknowledgements}
The authors would like to thank Ben Deaner for helpful discussions. 

\bibliographystyle{plainnat}
\bibliography{references}   

\newpage
\appendix

\section{Proofs for functional bias}\label{sec:functional_bias_proofs}

\begin{proof}[Proof of \cref{lem:Riesz_representer_equiv_assum}]
We prove the statement for $\alpha^h$; the proof for $\alpha^g$ is identical
with $T_P$ replaced by $T_P^*$. The equivalence between
$a_{P,\parallel}\in\range(T_P^*T_P)$ and
$\mathcal A_P^h\neq\emptyset$ follows from the first-order condition. Indeed,
the objective in
Eq.~\eqref{eq: alpha_h_P_riesz_regression} is convex and Fréchet differentiable, so
$\alpha\in\Hcal$ minimizes it if and only if
\begin{align*}
    \langle T_P^*T_P\alpha-a_{P,\parallel},v\rangle_{L^2(P_X)}=0, \qquad \forall v\in\Hcal.
\end{align*}
Thus $\mathcal A_P^h$ is nonempty if and only if
$a_{P,\parallel}\in\range(T_P^*T_P)$, and any solution satisfies
\begin{align*}
    T_P^*T_P\alpha_P^h=a_{P,\parallel},
\end{align*}
Since $T_P^*g_P^\dag=a_{P,\parallel}$, we have
\begin{align*}
    T_P^*(T_P\alpha_P^h-g_P^\dag)=0.
\end{align*}
Also $T_P\alpha_P^h\in\range(T_P)$ and
$g_P^\dag\in\kernel(T_P^*)^\perp=\overline{\range(T_P)}$. Hence
\begin{align*}
    T_P\alpha_P^h-g_P^\dag \in \kernel(T_P^*) \cap \kernel(T_P^*)^\perp  = \{0\}.
\end{align*}
This proves that either of the first two statements implies
$g_P^\dag\in\range(T_P)$. Conversely, if
$g_P^\dag=T_P\alpha$ for some $\alpha\in\Hcal$, then
\begin{align*}
    a_{P,\parallel}=T_P^*g_P^\dag=T_P^*T_P\alpha,
\end{align*}
so $a_{P,\parallel}\in\range(T_P^*T_P)$. This proves the three
equivalences for $\alpha^h$, and the displayed identities above prove the
remaining claim for every $\alpha_P^h\in\mathcal A_P^h$.
\end{proof}


\section[Proofs for nuisance learners]{Proofs for \cref{sec:minimax_learners}}\label{sec:minimax_proofs}
\subsection{Mathematical background}
\label{sec:minimax_proofs_bg}
We follow \citet[Chapter VII]{reed1972methods} to present the following generalization of spectral theorem for self-adjoint compact bounded operators. It will enable us to analyse the setting with non-compact $T:\mathcal{H}\to \mathcal{G}$, which arises in e.g. proximal causal inference with continuous treatment \citep{deaner2018proxy} and NPIV with observed covariates \citep{horowitz2011applied, shen2025nonparametricinstrumentalvariableregression}.
\begin{defn}[Projection-valued measure]
\label{def: projection_valued_measure}
    Let $H$ be a Hilbert space and let $P(H)$ denote the set of orthogonal projections in $\mathcal{L}(H)$. A finite projection valued measure $\Pi$ on $H$ is a map
    \begin{align*}
        \mathcal{B}(\mathbb{R}) &\to P(H)\\
        A& \mapsto \Pi_A
    \end{align*}
    from the $\sigma$-algebra $\mathcal{B}(\mathbb{R})$ of Borel subsets of $\mathbb{R}$ to the set of projections, such that the following conditions hold:
    \begin{enumerate}
        \item $\Pi_{\emptyset}  =0$, $\Pi_{\mathbb{R}} = \mathrm{Id}$,
        \item For some constant $R>0$, we have $\Pi_{[-R,R]} = \mathrm{Id}$,
        \item If $A_n, n\geq 1$ is an arbitrary sequence of pairwise disjoint Borel subsets of $\mathbb{R}$, let
        \begin{align*}
            A = \bigcup_{n\geq 1}A_n \in  \mathcal{B}(\mathbb{R}),
        \end{align*}
        and then we have $\Pi_{A} = \sum_{n\geq 1}\Pi_{A_n}$, where the series converges in the strong operator topology of $H$.
    \end{enumerate}
\end{defn}

We state the following theorem in terms of projection-valued measures:
\begin{thm}[Spectral theorem for bounded self-adjoint operators]
\label{thm: spectral_general}
    Let $H$ be a separable Hilbert space and let $T = T^{\ast}$ be a bounded self-adjoint operator on $H$.
    Recall that the spectrum $\sigma(T)$ of a self-adjoint operator is defined as the set
    \begin{align*}
        \sigma(T) \doteq \{ \lambda \in \mathbb{R}\mid \lambda \mathrm{Id} - T \text{ is not invertible}.\}
    \end{align*}
    There exists a unique projection-valued measure $\Pi_T$ such that
    \begin{align*}
        T = \int_{\sigma(T)}\lambda \;d\Pi_T(\lambda).
    \end{align*}
    If $f$ is any continuous function on $\sigma(T)$, we have
    \begin{align*}
        f(T) = \int_{\sigma(T)}f(\lambda)d\Pi_T(\lambda).
    \end{align*}
\end{thm}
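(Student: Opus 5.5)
We follow the standard route of \citet[Chapter VII]{reed1972methods}: build a functional calculus for $T$ in three stages (polynomials, then continuous functions, then bounded Borel functions), and define $\Pi_T(A)\doteq\mathbbm{1}_A(T)$.

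\textbf{Continuous functional calculus.} For a real polynomial $p$, $p(T)$ is defined algebraically and is self-adjoint. The spectral mapping theorem gives $\sigma(p(T))=p(\sigma(T))$. Since $p(T)$ is self-adjoint, its norm equals its spectral radius, so
\[
\|p(T)\|_{\mathrm{op}}=\sup_{\lambda\in\sigma(T)}|p(\lambda)|.
\]
The spectrum $\sigma(T)$ is a compact subset of $[-\|T\|_{\mathrm{op}},\|T\|_{\mathrm{op}}]$. By Weierstrass, polynomials are dense in $C(\sigma(T))$, so $p\mapsto p(T)$ extends uniquely to an isometric $*$-homomorphism $f\mapsto f(T)$ from $C(\sigma(T))$ into $\mathcal L(H)$. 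It is positivity preserving: if $f\ge0$, then $f(T)=(\sqrt f(T))^2\ge0$.

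\textbf{Spectral measures and Borel calculus.} Fix $\psi\in H$. The map $f\mapsto\langle\psi,f(T)\psi\rangle$ is a positive linear functional on $C(\sigma(T))$. The Riesz--Markov theorem therefore gives a unique finite Borel measure $\mu_\psi$ with
\[
\langle\psi,f(T)\psi\rangle=\int f\,d\mu_\psi .
\]
For a bounded Borel $f$, define $f(T)$ via polarization: the form $\langle\phi,f(T)\psi\rangle$ is recovered from the quadratic forms $\int f\,d\mu_{\phi\pm\psi}$ and $\int f\,d\mu_{\phi\pm i\psi}$. This form is bounded and sesquilinear, so Riesz gives a bounded operator $f(T)$. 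I would then check that $f\mapsto f(T)$ is still a $*$-homomorphism with $\|f(T)\|_{\mathrm{op}}\le\|f\|_\infty$. The key tool is a monotone-class argument: the class of bounded Borel $f$ for which multiplicativity holds contains $C(\sigma(T))$ and is closed under bounded pointwise limits, because of dominated convergence in each $\mu_\psi$. I would also prove the convergence property: if $f_n\to f$ pointwise with $\sup_n\|f_n\|_\infty<\infty$, then $f_n(T)\to f(T)$ strongly. This follows from
\[
\|(f_n-f)(T)\psi\|^2=\int|f_n-f|^2\,d\mu_\psi\to0 .
\]

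\textbf{Projection-valued measure and uniqueness.} Set $\Pi_A\doteq\mathbbm{1}_{A\cap\sigma(T)}(T)$. Since $\mathbbm{1}_A^2=\mathbbm{1}_A=\overline{\mathbbm{1}_A}$, each $\Pi_A$ is an orthogonal projection. Clearly $\Pi_\emptyset=0$, and $\Pi_{[-R,R]}=\mathrm{Id}$ for $R=\|T\|_{\mathrm{op}}$, because the indicator is identically $1$ on $\sigma(T)$. Countable additivity in the strong operator topology follows from the strong convergence property applied to the partial sums $\sum_{n\le N}\mathbbm{1}_{A_n}$, which are bounded by $1$ and increase pointwise to $\mathbbm{1}_A$. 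The identities $f(T)=\int f\,d\Pi_T$ for continuous $f$, and in particular $T=\int\lambda\,d\Pi_T$, hold in the weak sense because
\[
\langle\psi,\Pi_A\psi\rangle=\mu_\psi(A).
\]
For uniqueness, suppose $\Pi'$ is another projection-valued measure with $T=\int\lambda\,d\Pi'$. Multiplicativity of the integral gives $\int p\,d\Pi'=p(T)$ for every polynomial $p$. Hence the finite measures $\langle\psi,\Pi'_{(\cdot)}\psi\rangle$ and $\mu_\psi$ agree on all polynomials, and so, by Weierstrass, on all continuous functions on a compact set. They are therefore equal for every $\psi$, and polarization gives $\Pi'=\Pi_T$.

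\textbf{Main obstacle.} The delicate step is the Borel extension: showing that $f\mapsto f(T)$ stays multiplicative, so that each $\Pi_A$ really is a projection and $\Pi_A\Pi_B=\Pi_{A\cap B}$. Continuity alone does not give this. I would handle it with the monotone-class argument above, in two passes: first fix $g$ continuous and vary $f$ over bounded Borel functions, then fix $f$ Borel and vary $g$. Separability of $H$ is not essential here, but it keeps the measure-theoretic bookkeeping of the family $\{\mu_\psi\}$ routine.
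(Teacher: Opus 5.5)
Your proposal is correct, and it follows the same route as the paper: the paper gives no proof of its own and simply cites Reed--Simon, Chapter~VII, whose construction you reproduce (continuous functional calculus, spectral measures $\mu_\psi$ via Riesz--Markov, bounded Borel calculus with the two-pass monotone-class argument for multiplicativity, and $\Pi_A$ defined as the indicator of $A\cap\sigma(T)$ applied to $T$). Your uniqueness argument uses the representation $T=\int_{\mathbb R}\lambda\,d\Pi'(\lambda)$, which is the correct formulation: with the integral restricted to $\sigma(T)$ and $\Pi'$ ranging over projection-valued measures on all Borel subsets of $\mathbb R$, as literally written in the statement, uniqueness fails (take $T=0$ and $\Pi'$ the point mass at $1$).
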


\begin{lem}[Positive square root]
\label{lem: positive_square_root}
    Let $H$ be a separable Hilbert space and let $A=A^{\ast}\in \mathcal{L}(H)$ be positive. Then there is a unique positive operator $A^{1/2}\in \mathcal{L}(H)$ such that $(A^{1/2})^2=A$.
\end{lem}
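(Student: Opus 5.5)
This is a standard result, and I would prove it by exhibiting an explicit square root through the spectral calculus of \Cref{thm: spectral_general}, then establishing uniqueness separately. For existence, positivity of $A$ means $\langle Av,v\rangle\ge0$ for all $v\in H$. From this I would first show $\sigma(A)\subseteq[0,\|A\|_{\mathrm{op}}]$. If $\lambda<0$, then
\[
\|(A-\lambda\mathrm{Id})v\|\,\|v\|\ge\langle (A-\lambda\mathrm{Id})v,v\rangle\ge|\lambda|\|v\|^2 .
\]
Hence $A-\lambda\mathrm{Id}$ is bounded below. Being self-adjoint, it then has dense range, because the orthogonal complement of its range is its kernel, which is trivial. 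Bounded below together with dense range gives a closed, hence full, range, so $A-\lambda\mathrm{Id}$ is invertible. With the spectrum located, $f(\lambda)=\sqrt{\lambda}$ is continuous on $\sigma(A)$, and I define
\[
A^{1/2}\doteq\int_{\sigma(A)}\sqrt\lambda\,d\Pi_A(\lambda).
\]
By the homomorphism property of the continuous functional calculus, $(A^{1/2})^2=\int\lambda\,d\Pi_A=A$. Self-adjointness follows because $\sqrt\lambda$ is real. Positivity follows from
\[
\langle A^{1/2}v,v\rangle=\int\sqrt\lambda\,d\langle\Pi_A v,v\rangle\ge0,
\]
since $\langle\Pi_A(\cdot)v,v\rangle$ is a finite positive measure. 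If one prefers to avoid leaning on the projection-valued calculus, there is an alternative route: approximate $\sqrt\lambda$ uniformly on $[0,\|A\|]$ by polynomials $p_k$ (Weierstrass). The operators $p_k(A)$ are then Cauchy in operator norm, because $\|p(A)\|\le\sup_{\sigma(A)}|p|$, and their limit is the desired root.

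For uniqueness, let $B\ge0$ be another square root, so $B^2=A$. Then $B$ commutes with $A=B^2$, hence with every polynomial $p_k(A)$, and therefore with the norm limit $A^{1/2}$. Write $C=A^{1/2}$ and fix $v$; set $w=(C-B)v$. Using commutativity,
\[
\langle Cw,w\rangle+\langle Bw,w\rangle=\langle (C+B)(C-B)v,w\rangle=\langle (C^2-B^2)v,w\rangle=0 .
\]
Both terms are nonnegative, so each vanishes. For a positive operator $D$, $\langle Dw,w\rangle=0$ implies $Dw=0$, by Cauchy--Schwarz applied to the semi-inner product $\langle D\cdot,\cdot\rangle$. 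So $Cw=Bw=0$, and therefore
\[
\|(C-B)v\|^2=\langle (C-B)w,v\rangle=0.
\]

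The main obstacle is the commutation step, namely that $B$ commutes with $A^{1/2}$. This is exactly why I would build $A^{1/2}$ as a norm limit of polynomials in $A$, so that anything commuting with $A$ automatically commutes with it. Locating the spectrum in $[0,\infty)$ is the other point that needs some care; the rest is routine.
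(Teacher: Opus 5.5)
Your proof is correct. Existence is the same as in the paper: define $A^{1/2}=\int_{\sigma(A)}\sqrt{\lambda}\,d\Pi_A(\lambda)$ and read off $(A^{1/2})^2=A$ and positivity from the functional calculus. The only difference there is that you actually verify $\sigma(A)\subseteq[0,\|A\|_{\mathrm{op}}]$ via the bounded-below argument, which the paper simply asserts.

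Uniqueness is where you take a different route. The paper argues in one line that $A^{1/2}=(B^2)^{1/2}=(\lambda\mapsto\sqrt{\lambda^2})(B)=B$. That needs $\sigma(B)\subseteq[0,\infty)$ and the composition rule $f(g(B))=(f\circ g)(B)$ of the continuous functional calculus, and hence the spectral mapping theorem $\sigma(B^2)=\{\mu^2:\mu\in\sigma(B)\}$. These facts are standard but are not part of the version of the spectral theorem the paper states.

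You instead use the classical commutation argument. Realizing $A^{1/2}$ as a norm limit of polynomials $p_k(A)$ forces any $B$ with $B^2=A$ to commute with $A^{1/2}$. A quadratic-form computation plus Cauchy--Schwarz for positive forms then finishes. This limit agrees with the spectral integral because $\|p_k(A)-A^{1/2}\|_{\mathrm{op}}\le\sup_{\sigma(A)}|p_k-\sqrt{\cdot}|$, which is worth stating since you use both constructions.

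Your route is more elementary and self-contained, and never needs the spectrum of $B$. As a by-product, it shows that $A^{1/2}$ commutes with every bounded operator commuting with $A$. The paper's route is shorter but leans on more functional-calculus machinery.

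One shared convention: your Cauchy--Schwarz step and the final identity $\|(C-B)v\|^2=\langle (C-B)w,v\rangle$ use $B=B^*$, which the paper likewise asserts without argument. This is automatic over $\mathbb{C}$. Over the real $L^2$ spaces used here, it is customary to build self-adjointness into the definition of a positive operator.
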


\begin{proof}
    Since $A$ is positive, $\sigma(A)\subset [0,\infty)$. By Theorem \ref{thm: spectral_general}, define
    \begin{align*}
        A^{1/2}\doteq\int_{\sigma(A)}\sqrt{\lambda}\;d\Pi_A(\lambda).
    \end{align*}
    Then $A^{1/2}$ is positive and, by the continuous functional calculus,
    \begin{align*}
        (A^{1/2})^2=\int_{\sigma(A)}\lambda\;d\Pi_A(\lambda)=A.
    \end{align*}
    If $B$ is another positive square root of $A$, then $B=B^{\ast}$, $\sigma(B)\subset [0,\infty)$, and
    \begin{align*}
        A^{1/2}=(B^2)^{1/2}=\left(\lambda\mapsto \sqrt{\lambda^2}\right)(B)=B
    \end{align*}
    by the continuous functional calculus. Hence the positive square root is unique.
\end{proof}

\begin{thm}[Polar decomposition]
\label{thm: polar_decomposition}
    Let $H_1, H_2$ be separable Hilbert spaces and let $T : H_1\to H_2$ be a bounded operator. Then there is a unique partial isometry $U\in \mathcal{L}(H_1,H_2)$ such that
    \begin{align*}
        T=U|T|,\qquad |T|\doteq(T^{\ast}T)^{1/2},\qquad \kernel(U)=\kernel(T).
    \end{align*}
    The initial space of $U$ is $\overline{\range(|T|)}=\kernel(T)^{\perp}$ and the final space is $\overline{\range(T)}$.
\end{thm}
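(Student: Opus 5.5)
The plan is to derive the polar decomposition from the positive square root (\Cref{lem: positive_square_root}), using the standard isometry identity $\||T|x\|=\|Tx\|$. First apply \Cref{lem: positive_square_root} to the positive self-adjoint operator $A=T^{\ast}T\in\mathcal L(H_1)$; positivity holds because $\langle T^{\ast}Tx,x\rangle=\|Tx\|^2\ge0$. This gives $|T|=(T^{\ast}T)^{1/2}$, which is self-adjoint and satisfies
\begin{align*}
\||T|x\|^2=\langle |T|^2x,x\rangle=\langle T^{\ast}Tx,x\rangle=\|Tx\|^2 \qquad \text{for all } x\in H_1 .
\end{align*}
Consequently $\kernel(|T|)=\kernel(T)$. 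Since $|T|$ is self-adjoint, $\overline{\range(|T|)}=\kernel(|T|)^{\perp}=\kernel(T)^{\perp}$, which identifies the claimed initial space.

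Next, construct $U$. On $\range(|T|)$ set $U(|T|x)\doteq Tx$. This is well defined: if $|T|x=|T|x'$, then $\|T(x-x')\|=\||T|(x-x')\|=0$. The norm identity shows $U$ is isometric on $\range(|T|)$, so it extends uniquely by continuity to an isometry on $\overline{\range(|T|)}=\kernel(T)^\perp$. Define $U\equiv0$ on $\kernel(T)=\range(|T|)^{\perp}$. Then $U$ is bounded, $T=U|T|$ holds by construction, and $U$ is a partial isometry with initial space $\kernel(T)^\perp$ and $\kernel(U)=\kernel(T)$.

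For the final space, the image under $U$ of the closed subspace $\kernel(T)^\perp$ is closed, because $U$ is isometric there. It contains $\range(T)=U(\range(|T|))$, and it is contained in $\overline{\range(T)}$ by continuity. Hence the final space equals $\overline{\range(T)}$.

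Uniqueness is the only step needing care. Suppose $V$ is another partial isometry with $T=V|T|$ and $\kernel(V)=\kernel(T)$. Then $V=U$ on $\range(|T|)$, and by continuity on its closure $\kernel(T)^\perp$. Both $U$ and $V$ vanish on $\kernel(T)$, so $V=U$. The square root $|T|$ itself is unique by \Cref{lem: positive_square_root}. I do not anticipate a real obstacle. The main point to check is that the extension to the closure and the kernel condition together pin $U$ down completely; separability is not used beyond its role in \Cref{lem: positive_square_root}.
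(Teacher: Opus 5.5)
Your proposal is correct and follows essentially the same route as the paper. Both obtain $|T|$ from \Cref{lem: positive_square_root}, use $\||T|x\|=\|Tx\|$ to get $\kernel(|T|)=\kernel(T)$, define $U$ by $|T|x\mapsto Tx$ on $\range(|T|)$, extend it isometrically to the closure and by zero on $\kernel(T)$, and get uniqueness from agreement on $\range(|T|)$ together with vanishing on the kernel; your explicit closedness argument that the final space equals $\overline{\range(T)}$ is a small detail the paper leaves implicit.
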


\begin{proof}
    By Lemma \ref{lem: positive_square_root}, $|T|=(T^{\ast}T)^{1/2}$ exists. For every $x\in H_1$,
    \begin{align*}
        \||T|x\|^2=\langle |T|^2x,x\rangle=\langle T^{\ast}Tx,x\rangle=\|Tx\|^2.
    \end{align*}
    Hence $\kernel(|T|)=\kernel(T)$. Define
    \begin{align*}
        U_0:\range(|T|)&\to \range(T),\\
        |T|x&\mapsto Tx.
    \end{align*}
    This is well-defined by $\kernel(|T|)=\kernel(T)$, and it is an isometry by the identity above. Therefore $U_0$ extends uniquely to an isometry
    \begin{align*}
        V:\overline{\range(|T|)}\to \overline{\range(T)}.
    \end{align*}
    Since $|T|=|T|^{\ast}$,
    \begin{align*}
        H=\overline{\range(|T|)}\oplus \kernel(|T|).
    \end{align*}
    Define $U$ by
    \begin{align*}
        U(x+y)\doteq Vx,\qquad x\in \overline{\range(|T|)},\quad y\in \kernel(|T|).
    \end{align*}
    Then $U$ is a partial isometry with initial space $\overline{\range(|T|)}$ and final space $\overline{\range(T)}$. Moreover,
    \begin{align*}
        U|T|x=Tx
    \end{align*}
    for every $x\in H$, so $T=U|T|$, and $\kernel(U)=\kernel(|T|)=\kernel(T)$.

    If $W$ is another such partial isometry, then $W|T|=T=U|T|$, so $W=U$ on $\range(|T|)$, hence on $\overline{\range(|T|)}$. Since both vanish on $\kernel(|T|)$, we get $W=U$ on $H_1$.
\end{proof}
\begin{cor}
\label{cor: equiv_hpdag_source}
    Under \Cref{assum:betasource}, there exists $w_P\in \mathcal{H}$ such that
    \begin{align*}
        h_P^{\dag} = \left(T_P^{\ast}T_P\right)^{\frac{\beta_h}{2}}w_P.
    \end{align*}
\end{cor}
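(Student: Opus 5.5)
Under \Cref{assum:betasource} we can write $r_{P,\parallel}=(T_PT_P^{\ast})^{(\beta_h+1)/2}v$ for some $v\in\Gcal$. The plan is to transport this representation through the polar decomposition of \Cref{thm: polar_decomposition}, writing $T_P=U|T_P|$ with $|T_P|=(T_P^{\ast}T_P)^{1/2}$. We then identify $h_P^\dag=T_P^{\dag}r_{P,\parallel}$ explicitly.

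\textbf{Step 1: intertwining.} We first establish the identity $(T_PT_P^{\ast})^{s}=U(T_P^{\ast}T_P)^{s}U^{\ast}$ for every $s>0$. To do so, note that $T_PT_P^{\ast}=U|T_P|^2U^{\ast}$. Since $U^{\ast}U$ is the projection onto $\kernel(T_P)^{\perp}=\overline{\range(|T_P|)}$, it acts as the identity next to $|T_P|$. Hence for every polynomial $p$ with $p(0)=0$ we get $p(T_PT_P^{\ast})=Up(T_P^{\ast}T_P)U^{\ast}$. The identity for $\lambda\mapsto\lambda^{s}$ then follows by Weierstrass approximation on $[0,\|T_P\|_{\mathrm{op}}^2]$, using polynomials vanishing at $0$, together with the continuous functional calculus of \Cref{thm: spectral_general}. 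From this identity we obtain
\[
    r_{P,\parallel}=U(T_P^{\ast}T_P)^{(\beta_h+1)/2}U^{\ast}v=U|T_P|\,(T_P^{\ast}T_P)^{\beta_h/2}U^{\ast}v=T_P(T_P^{\ast}T_P)^{\beta_h/2}w_P,
\]
where we set $w_P\doteq U^{\ast}v\in\Hcal$.

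\textbf{Step 2: identify the minimum-norm solution.} Let $\tilde h\doteq(T_P^{\ast}T_P)^{\beta_h/2}w_P$. By Step 1, $T_P\tilde h=r_{P,\parallel}$, so $\tilde h\in\Hcal_P$.

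When $\beta_h>0$, we have $\tilde h\in\overline{\range(T_P^{\ast}T_P)}=\kernel(T_P)^{\perp}$. To see this, observe that for $k\in\kernel(T_P)$ the spectral measure of $T_P^{\ast}T_P$ concentrates $k$ at $\{0\}$, so $\langle\tilde h,k\rangle=\langle w_P,(T_P^{\ast}T_P)^{\beta_h/2}k\rangle=0$. By \Cref{lem: param_inv}, $\Hcal_P=\tilde h+\kernel(T_P)$, and the unique element of this affine set lying in $\kernel(T_P)^{\perp}$ is its minimum-norm element. Therefore $\tilde h=h_P^\dag$.

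\textbf{Step 3: the case $\beta_h=0$.} Here the operator power is interpreted as the identity, and the conclusion $h_P^\dag=w_P$ for some $w_P\in\Hcal$ is trivial; one can simply take $w_P=h_P^\dag$.

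\textbf{Main obstacle.} The step needing the most care is the intertwining identity for fractional powers in the non-compact setting. The key issue is that $U^{\ast}U$ equals the identity only on $\kernel(T_P)^{\perp}$, not on all of $\Hcal$. This is why the approximation argument must use polynomials vanishing at $0$, so that the kernel contributions drop out.
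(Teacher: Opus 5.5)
Your proposal is correct and follows the paper's proof. Both use the polar decomposition $T_P=U|T_P|$, the intertwining identity $(T_PT_P^{\ast})^{s}=U(T_P^{\ast}T_P)^{s}U^{\ast}$ and the choice $w_P=U^{\ast}v$, and both identify $h_P^\dag$ as the unique solution of $T_Ph=r_{P,\parallel}$ lying in $\kernel(T_P)^\perp$. Your polynomial-approximation argument for the intertwining identity and your separate treatment of $\beta_h=0$ fill in details the paper leaves implicit; in particular, its claim that $(T_P^{\ast}T_P)^{\beta_h/2}w_P\in\kernel(T_P)^\perp$ needs either $\beta_h>0$ or the convention that the zeroth power is the projection onto $\kernel(T_P)^\perp$.
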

\begin{proof}
    By \Cref{assum:betasource}, there exists $v_P\in \mathcal{G}$ such that
    \begin{align*}
        r_{P,\parallel} = \left(T_PT_P^{\ast}\right)^{\frac{\beta_h+1}{2}}v_P.
    \end{align*}We write the polar decomposition of $T_P = U|T_P|$ by \Cref{thm: polar_decomposition}. By \Cref{thm: spectral_general}, we have $\left(T_PT_P^{\ast}\right)^s = \left(U|T_P||T_P|U^{\ast}\right)^s = U\left(T_P^{\ast}T_P\right)^s U^{\ast}$ for any $s>0$.
    Recall that $h_P^{\dag}$ is the minimum norm solution of
    \begin{align*}
        T_P h = r_{P, \parallel} = \left(T_PT_P^{\ast}\right)^{\frac{\beta_h+1}{2}}v_P = U|T_P|\left(T_P^{\ast}T_P\right)^{\frac{\beta_h}{2}}\underbrace{U^{\ast}v_P}_{=: w_P}.
    \end{align*}
    Since $h_P^{\dag}\in \kernel(T_P)^{\perp}$ and $\left(T_P^{\ast}T_P\right)^{\frac{\beta_h}{2}}w_P\in \kernel(T_P)^{\perp}$, the conclusion follows from $h_P^{\dag} - \left(T_P^{\ast}T_P\right)^{\frac{\beta_h}{2}}w_P \in \kernel(T_P)\cap \kernel(T_P)^{\perp}$.
\end{proof}

\subsection{Proofs for \Cref{sub:minimax_learners_of_h_p_dag}}
\begin{proof}[Proof of \cref{lem:bias-tikhonov}]
We adapt the proof of \citet[Lemma 3]{bennett2023sourceconditiondoublerobust} and generalize to general bounded operator $T$ (without assuming compactness). For any $h\in\Hcal_P$ that satisfies the normal equation, we have\begin{align}\label{eq:normal}T_P^*(T_Ph-r_P)=0.\end{align}
Therefore $T_P h - r_P \in \kernel(T_P^{\ast}) = \overline{\range(T_P)}^{\perp}$, hence
\begin{align}
    \label{eq: normal_orth}
    (\forall h\in \mathcal{H}_P)(\forall g\in \overline{\range(T_P)}),\quad \langle T_P h -r_P, g\rangle = 0.
\end{align}
By adding the constant $-\|T_P h_P^\dag-r_P\|^2$ to the objective defining $h_P^\lambda$ in \cref{eqn:populatoin_objective}, we find
\begin{align*}
    h_P^{\lambda} &= \argmin_{h\in \mathcal{H}}\lambda \|h\|^2_{\mathcal{H}} + \left\langle T_P(h-h_P^{\dag}), T_P(h+h_P^{\dag}) - 2r_P\right\rangle_{L^2(P_Z)}\\
    &= \argmin_{h\in \mathcal{H}} \lambda \|h\|^2_{\mathcal{H}} + \left\|T_P(h-h_P^{\dag})\right\|_{L^2(P_Z)}^2 + 2\underbrace{\left \langle T_P(h-h_P^{\dag}), T_Ph_P^{\dag} - r_P\right\rangle_{L^2(P_Z)}}_{=0},
\end{align*}
where we use \cref{eq: normal_orth}. Taking the Fréchet derivative of the above objective with respect to $h$, we have
\begin{align*}
    h^{\lambda}_P = \left(T_P^{\ast}T_P + \lambda \mathrm{Id}_{\mathcal{H}}\right)^{-1}T_P^{\ast}T_Ph_P^{\dag}.
\end{align*}
By \Cref{cor: equiv_hpdag_source} and \Cref{thm: spectral_general}, we have
\begin{align*}
    h_P^{\lambda} - h_P^{\dag} = -\lambda \left(T_P^{\ast}T_P + \lambda \mathrm{Id}_{\mathcal{H}}\right)^{-1}h_P^{\dag} = \left(\int_{\sigma(T_P^{\ast}T_P)}\frac{-\lambda\mu^{\frac{\beta_h}{2}}}{\mu + \lambda}d\Pi_{T_P^{\ast}T_P}(\mu)\right)w_P,
\end{align*}
for some $w_P\in \mathcal{H}$. Hence, by \Cref{thm: spectral_general},
    \begin{align*}
        \left\|h_P^{\lambda} - h_P^{\dag}\right\|_{L^2(P_X)}^2
        &\leq
        \sup_{\mu\in \sigma(T_P^{\ast}T_P)}
        \left(\frac{\lambda \mu^{\frac{\beta_h}{2}}}{\mu+\lambda}\right)^2
        \|w_P\|_{L^2(P_X)}^2.
    \end{align*}
    Since $T_P^{\ast}T_P$ is positive, $\sigma(T_P^{\ast}T_P)\subset [0,\infty)$. The remainder of the proof follows as in the proof of \citet[Lemma 3]{bennett2023sourceconditiondoublerobust}.
\end{proof}

\begin{proof}[Proof of \cref{thm:minimaxlearner}]
Let any $h\in\Hcal,g\in\overline{\range(T_P)}$ be given. Then
\begin{align*}
\E_P[\widetilde{m}(W;g)-h(X)g(Z)]
&=\ip{r_P-T_Ph}{g}\\
&=\ip{r_P-T_Ph_P^\dag}{g}+\ip{T_P(h_P^\dag-h)}{g}\\
&=\ip{T_P(h_P^\dag-h)}{g},
\end{align*}
where the last equality is by \cref{eq:normal}. This gives the equivalent of Eq.(10) in \citet{bennett2023sourceconditiondoublerobust}. The proof then follows the proof of \citet[Theorem 4]{bennett2023sourceconditiondoublerobust}, using \cref{lem:bias-tikhonov} instead of \citet[Lemma 3]{bennett2023sourceconditiondoublerobust}.
\end{proof}

\subsection{Proofs for \Cref{sub:minimax_learners_of_the_weak_norm_riesz_representer}}
We first prove two preparatory results and an algebraic identity, and then provide the proof for \Cref{thm:minimaxlearner_weak_riesz_feasible}. 

\begin{proof}[Proof of \Cref{lem:ppl_reg_bias_alpha}]
The proof follows an analogous calculation to \Cref{lem:bias-tikhonov}. 
We introduce the shorthand
\begin{align*}
    b_{\alpha^h,w}(\lambda)&\doteq \|T_P(\alpha_{P,\lambda}^h-\alpha_P^{h,\dag})\|_{L^2(P_Z)}^2\\
    b_{\alpha^h,s}(\lambda)&\doteq \|\alpha_{P,\lambda}^h-\alpha_P^{h,\dag}\|_{L^2(P_X)}^2.
\end{align*}
Since $a_{P,\parallel} \in \mathrm{range}(T_P^{\ast}T_P)$, the minimum norm element $\alpha_P^{h, \dag} $ that satisfies 
\begin{align*}
    T_P \alpha_P^{h, \dag} = g_{P}^{\dag}
\end{align*}
exists, and it satisfies $\alpha_P^{h, \dag} \in \kernel(T_P)^{\perp}$ by virtue of being the minimum norm element. We have,
\begin{align*}
    \alpha_{P,\lambda}^{h} &= \left(T_P^{\ast}T_P + \lambda \mathrm{Id}_{\mathcal{H}}\right)^{-1}T_P^{\ast}T_P \alpha_P^{h, \dag}\\
    \alpha_{P,\lambda}^{h} - \alpha_{P}^{h, \dag} &= -\lambda \left(T_P^{\ast}T_P + \lambda \mathrm{Id}_{\mathcal{H}}\right)^{-1}\alpha_P^{h, \dag}.
\end{align*}
By \Cref{thm: spectral_general}, applied to the positive self-adjoint operator $T_P^{\ast}T_P$, we have
\begin{align*}
    \alpha_{P,\lambda}^{h}-\alpha_P^{h,\dag}
    =\left(\int_{\sigma(T_P^{\ast}T_P)}\frac{-\lambda}{\mu+\lambda}\;d\Pi_{T_P^{\ast}T_P}(\mu)\right)\alpha_P^{h,\dag}.
\end{align*}
Thus
\begin{align*}
    b_{\alpha^h,s}(\lambda)
    &=\left\|\alpha_{P,\lambda}^{h}-\alpha_P^{h,\dag}\right\|_{L^2(P_X)}^2
    =\int_{\sigma(T_P^{\ast}T_P)}\frac{\lambda^2}{(\mu+\lambda)^2}\;d\left\langle \Pi_{T_P^{\ast}T_P}(\mu)\alpha_P^{h,\dag},\alpha_P^{h,\dag}\right\rangle_{L^2(P_X)},\\
    b_{\alpha^h,w}(\lambda)
    &=\left\|T_P(\alpha_{P,\lambda}^{h}-\alpha_P^{h,\dag})\right\|_{L^2(P_Z)}^2
    =\int_{\sigma(T_P^{\ast}T_P)}\frac{\mu\lambda^2}{(\mu+\lambda)^2}\;d\left\langle \Pi_{T_P^{\ast}T_P}(\mu)\alpha_P^{h,\dag},\alpha_P^{h,\dag}\right\rangle_{L^2(P_X)},
\end{align*}
where the second equality follows from applying \Cref{thm: spectral_general} to the continuous functions $\mu\mapsto \lambda^2/(\mu+\lambda)^2$ and $\mu\mapsto \mu\lambda^2/(\mu+\lambda)^2$. 

We now take $\lambda = \lambda_n \downarrow 0$. We apply dominated convergence theorem with respect to the measure space $\sigma(T_P^{\ast}T_P)$, the scalar-valued measure 
\begin{align*}
    B\mapsto \left\langle \Pi_{T_P^{\ast}T_P}(B)\alpha_P^{h,\dag},\alpha_P^{h,\dag}\right\rangle_{L^2(P_X)},
\end{align*}
and the sequence of functions $\sigma(T_P^{\ast}T_P)\to \mathbb{R}$ given by
\begin{align*}
    \mu\mapsto \frac{\lambda_n^2}{(\mu + \lambda_n)^2}.
\end{align*}
We denote the scalar-valued measure by $\upsilon$. The key step is the following. Since $\alpha_P^{h,\dag}\in\kernel(T_P)^\perp$, the measure $\upsilon$ has no mass at $\mu=0$. For all $\mu\in \sigma(T_P^{\ast}T_P)$ and $\mu\neq 0$, we have $\frac{\lambda^2}{(\mu+\lambda)^2} \to 0$. Moreover, we have 
\begin{align*}
    \left|\frac{\lambda_n^2}{(\mu + \lambda_n)^2}\right| \leq 1.
\end{align*}
Hence, by dominated convergence theorem, we have $b_{\alpha^h,s}(\lambda_n) \to 0$ as $n\to \infty$. 

On the other hand, by another application of dominated convergence theorem, we have
\begin{align}
\label{eq: balpha_h_w_lambda_zero}
    \frac{b_{\alpha^h,w}(\lambda_n)}{\lambda_n}
    =\int_{\sigma(T_P^{\ast}T_P)}\frac{\mu\lambda_n}{(\mu+\lambda_n)^2}\;d\left\langle \Pi_{T_P^{\ast}T_P}(\mu)\alpha_P^{h,\dag},\alpha_P^{h,\dag}\right\rangle_{L^2(P_X)}\to 0,
\end{align}
as $n\to \infty$, since $\mu\lambda/(\mu+\lambda)^2\to 0$ for every $\mu\geq 0$ and $0\leq \mu\lambda/(\mu+\lambda)^2\leq 1/4$. Therefore
\begin{align*}
    \frac{b_{\alpha^h,w}(\lambda_n)}{\lambda_n} = o(1).
\end{align*}
\end{proof}

We now present a concentration of measure result, which will be used in the proof of \Cref{thm:minimaxlearner_weak_riesz_feasible}. 

\begin{lem}[Localized concentration for the $\alpha^h$-learner]
\label{lem:conc_alpha_feasible}
Fix $\lambda=\lambda_{\alpha^h,n}$ and write
\[
    \alpha_\lambda \doteq \alpha_{P,\lambda}^h,
    \qquad
    \alpha_0 \doteq \alpha_P^{h,\dagger}.
\]
Condition on the first-stage sample used to construct $\widehat g$. Suppose
$\widehat g(Z)$ is almost surely bounded by a constant $M<\infty$, and suppose
that
\[
    \sup_{\alpha\in\mathcal H_n}\|\alpha\|_\infty\leq M,
    \qquad
    \sup_{g\in\mathcal G_n}\|g\|_\infty\leq M,
    \qquad
    \|\alpha_\lambda\|_\infty\leq M .
\]
Assume also that
\[
    \alpha_0,\alpha_\lambda\in\mathcal H_n,
    \qquad
    0\in\mathcal G_n,
    \qquad
    T_P(\alpha_0-\alpha)\in\mathcal G_n,
    \qquad \forall \alpha\in\mathcal H_n.
\]
For $\alpha\in\mathcal H_n$, define
\[
    q_\alpha \doteq T_P(\alpha_0-\alpha).
\]
Let $\delta_n$ satisfy
\[
    \delta_n = \Omega\left(\sqrt{\frac{\log\log n+\log(1/\zeta)}{n}}\right),
\]
and suppose that, conditionally on the first-stage sample, $\delta_n$ is a
common upper bound on the critical radii of
\[
    \mathcal G_n,\qquad
    \mathcal H_n\cdot\mathcal G_n,\qquad
    \mathcal H_n-\alpha_\lambda .
\]
Then, after enlarging $\delta_n$ by a universal constant depending only on $M$,
there exists an event $\mathcal E$ with conditional probability at least
$1-\zeta$ such that, on $\mathcal E$, the following inequalities hold
simultaneously:
\begin{align}
\label{eq:conc_alpha_q}
\left|(\mathbb E_n-\mathbb E)\left[
2\widehat g(Z)q_\alpha(Z)-2\alpha(X)q_\alpha(Z)-q_\alpha(Z)^2
\right]\right|
&\le
\delta_n\|q_\alpha\|_{L^2(P_Z)}+\delta_n^2,
&&\forall \alpha\in\mathcal H_n,\\
\label{eq:conc_g_alpha_lambda}
\left|(\mathbb E_n-\mathbb E)\left[
2\widehat g(Z)g(Z)-2\alpha_\lambda(X)g(Z)-g(Z)^2
\right]\right|
&\le
\delta_n\|g\|_{L^2(P_Z)}+\delta_n^2,
&&\forall g\in\mathcal G_n,\\
\label{eq:conc_alpha_sq}
\left|(\mathbb E_n-\mathbb E)\left[
\alpha(X)^2-\alpha_\lambda(X)^2
\right]\right|
&\le
\delta_n\|\alpha-\alpha_\lambda\|_{L^2(P_X)}+\delta_n^2,
&&\forall \alpha\in\mathcal H_n.
\end{align}
\end{lem}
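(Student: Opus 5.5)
Conditional on the first-stage sample, $\widehat g$ is a fixed bounded function. Each of the three displays is therefore a uniform deviation bound over a star-shaped class, and we plan to obtain it from the localized concentration inequality for Lipschitz losses of vector-valued function classes, \Cref{lem:lipschitz_wainwright}. In each case we identify a function class $\mathcal F$ whose critical radius is controlled by $\delta_n$. We then write the empirical-process integrand as a loss $\ell(f(W))$ that is Lipschitz (with constant depending only on $M$) on the bounded range and vanishes at $f=0$. The lemma then yields $|(\mathbb E_n-\mathbb E)\ell(f)|\le C_M(\delta_n\|f\|+\delta_n^2)$ uniformly over $\mathcal F$ with probability $1-\zeta/3$. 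We take a union bound over the three events and absorb $C_M$ into $\delta_n$.

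For \eqref{eq:conc_alpha_q}, we would use the vector-valued class $\{(q_\alpha,\alpha q_\alpha):\alpha\in\mathcal H_n\}$. The first coordinate lies in $\mathcal G_n$ by the closedness assumption $T_P(\alpha_0-\alpha)\in\mathcal G_n$, and the second lies in $\mathcal H_n\cdot\mathcal G_n$. The loss $(u,v)\mapsto 2\widehat g(z)u-2v-u^2$ is $O(M)$-Lipschitz on $[-M,M]\times[-M^2,M^2]$ and vanishes at the origin. Here $\widehat g$ enters only through the loss, which is allowed since it is fixed conditionally. The subtle point is the localization norm. The lemma localizes in the $L^2$ norm of the vector $(q_\alpha,\alpha q_\alpha)$, and $\|\alpha q_\alpha\|_{L^2}\le M\|q_\alpha\|_{L^2}$ by boundedness of $\alpha$. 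Hence the vector norm is at most $(1+M)\|q_\alpha\|_{L^2(P_Z)}$, which gives the stated form. For \eqref{eq:conc_g_alpha_lambda}, we would argue identically with the class $\{(g,\alpha_\lambda g):g\in\mathcal G_n\}$, using $\alpha_\lambda\in\mathcal H_n$ so that $\alpha_\lambda g\in\mathcal H_n\cdot\mathcal G_n$ and $\|\alpha_\lambda g\|\le M\|g\|$.

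For \eqref{eq:conc_alpha_sq}, we would write $\alpha^2-\alpha_\lambda^2=(\alpha-\alpha_\lambda)^2+2\alpha_\lambda(\alpha-\alpha_\lambda)$. This is the loss $u\mapsto u^2+2\alpha_\lambda(x)u$ applied to $f=\alpha-\alpha_\lambda\in\mathcal H_n-\alpha_\lambda$. It is $4M$-Lipschitz on $[-2M,2M]$ and vanishes at zero, so the scalar version of the lemma applies with the critical radius of $\mathcal H_n-\alpha_\lambda$.

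The main obstacle we expect is verifying the hypotheses of \Cref{lem:lipschitz_wainwright} for these vector classes. These are star-shapedness (we pass to star hulls; the critical radius is defined on them already) and the critical radius of the vector class being bounded by the coordinate critical radii. The latter is handled by the way that lemma is stated in \citet{foster2023orthogonal}: it requires bounds per coordinate class. The hypotheses $0\in\mathcal G_n$ and the lower bound on $\delta_n$ are what let the failure probability be $\zeta$ rather than depend on $n$.
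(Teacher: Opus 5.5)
Your proposal is correct and follows the paper's proof: three applications of \Cref{lem:lipschitz_wainwright}, to the vector classes $\{(q_\alpha,\alpha q_\alpha):\alpha\in\mathcal H_n\}$ and $\{(g,\alpha_\lambda g):g\in\mathcal G_n\}$ and to $\mathcal H_n-\alpha_\lambda$ (your $u\mapsto u^2+2\alpha_\lambda(x)u$ reparametrization is equivalent to the paper's loss $u^2$ with reference $\alpha_\lambda$). As in the paper, you bound the vector localization norm by a constant times $\|q_\alpha\|$ or $\|g\|$ and finish with a union bound. One small correction: $\alpha_0\in\mathcal H_n$ (via $q_{\alpha_0}=0$) and $0\in\mathcal G_n$ are what place the reference element $(0,0)$ in the first and second classes respectively, whereas the failure probability $\zeta$ comes from the lower bound on $\delta_n$ alone.
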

\begin{proof}
Throughout the proof we condition on the first-stage sample, so that
$\widehat g$ is fixed. We apply \Cref{lem:lipschitz_wainwright} three times.

For \cref{eq:conc_alpha_q}, define the vector-valued class
\[
    \mathcal V_1
    \doteq
    \left\{
    W\mapsto
    \big(\alpha(X)q_\alpha(Z),q_\alpha(Z)\big):
    \alpha\in\mathcal H_n
    \right\},
    \qquad
    q_\alpha=T_P(\alpha_0-\alpha).
\]
Its first coordinate projection satisfies
\[
    \mathcal V_1|_1
    =
    \{\alpha q_\alpha:\alpha\in\mathcal H_n\}
    \subseteq
    \mathcal H_n\cdot\mathcal G_n,
\]
because $q_\alpha\in\mathcal G_n$. Its second coordinate projection satisfies
\[
    \mathcal V_1|_2
    =
    \{q_\alpha:\alpha\in\mathcal H_n\}
    \subseteq
    \mathcal G_n.
\]
Moreover, since $\alpha_0\in\mathcal H_n$ and
$q_{\alpha_0}=T_P(\alpha_0-\alpha_0)=0$, the reference element $(0,0)$ belongs to
$\mathcal V_1$.
Apply \Cref{lem:lipschitz_wainwright} with reference element $(0,0)$ and loss function
\[
    \ell_1(u_1,u_2;W)\doteq 2\widehat g(Z)u_2-2u_1-u_2^2 .
\]
The loss $\ell_1$ is Lipschitz in $(u_1,u_2)$ on the restricted bounded range.
Hence, with conditional probability at least $1-\zeta/3$,
\[
\left|(\mathbb E_n-\mathbb E)\left[
2\widehat g(Z)q_\alpha(Z)-2\alpha(X)q_\alpha(Z)-q_\alpha(Z)^2
\right]\right|
\lesssim
\delta_n
\left\|(\alpha q_\alpha,q_\alpha)\right\|_{L^2(P),2}
+\delta_n^2 .
\]
Since $\alpha$ is uniformly bounded, we have
\[
    \left\|(\alpha q_\alpha,q_\alpha)\right\|_{L^2(P),2}
    \lesssim
    \|q_\alpha\|_{L^2(P_Z)}.
\]
Absorbing the constant into $\delta_n$ gives \cref{eq:conc_alpha_q}.

For \cref{eq:conc_g_alpha_lambda}, define
\[
    \mathcal V_2
    \doteq
    \left\{
    W\mapsto
    \big(\alpha_\lambda(X)g(Z),g(Z)\big):
    g\in\mathcal G_n
    \right\}.
\]
Its first coordinate projection satisfies
\[
    \mathcal V_2|_1
    =
    \{\alpha_\lambda g:g\in\mathcal G_n\}
    \subseteq
    \mathcal H_n\cdot\mathcal G_n,
\]
because $\alpha_\lambda\in\mathcal H_n$, and its second coordinate projection is
$\mathcal V_2|_2=\mathcal G_n$. Since $0\in\mathcal G_n$, the reference element
$(0,0)$ belongs to $\mathcal V_2$. Applying \Cref{lem:lipschitz_wainwright} to $\mathcal V_2$ with
reference $(0,0)$ and the loss function $\ell_1$ yields, with (conditional)
probability at least $1-\zeta/3$,
\[
\left|(\mathbb E_n-\mathbb E)\left[
2\widehat g(Z)g(Z)-2\alpha_\lambda(X)g(Z)-g(Z)^2
\right]\right|
\lesssim
\delta_n
\left\|(\alpha_\lambda g,g)\right\|_{L^2(P),2}
+\delta_n^2 .
\]
Since $\alpha_\lambda$ is uniformly bounded, we have $\left\|(\alpha_\lambda g,g)\right\|_{L^2(P),2} \lesssim \|g\|_{L^2(P_Z)}$. 
Absorbing the constant into $\delta_n$ gives \cref{eq:conc_g_alpha_lambda}.

For \cref{eq:conc_alpha_sq}, apply \Cref{lem:lipschitz_wainwright} to the
scalar class $\mathcal H_n$ with reference $\alpha_\lambda$ and loss
$\ell_2(u)=u^2$. The coordinatewise function class is $\mathcal H_n-\alpha_\lambda$, 
whose critical radius is assumed to be bounded by $\delta_n$. Since $u\mapsto
u^2$ is Lipschitz on the bounded range of $\mathcal H_n$, with (conditional)
probability at least $1-\zeta/3$,
\[
\left|(\mathbb E_n-\mathbb E)\left[
\alpha(X)^2-\alpha_\lambda(X)^2
\right]\right|
\lesssim
\delta_n\|\alpha-\alpha_\lambda\|_{L^2(P_X)}
+\delta_n^2 .
\]
Absorbing the constant into $\delta_n$ gives \cref{eq:conc_alpha_sq}. The desired conclusion follows by taking a union bound over the failure probabilities of the three events. 
\end{proof}

\paragraph{Algebraic identity.} We have the identity, for all $\alpha\in \Hcal_n$, we have
    \begin{equation}
        \label{eq:identity}
        \begin{aligned}
            &\left\|T_P\alpha - g_P^{\dag}\right\|^2_{2} + \lambda \|\alpha\|^2_{2} - \left\|T_P \alpha_{\lambda} - g_P^{\dag}\right\|_2^2 - \lambda \|\alpha_{\lambda}\|^2_{2}\\
        &= \|T_P(\alpha- \alpha_{\lambda})\|^2_{2} + \lambda \|\alpha - \alpha_{\lambda}\|^2_{2}.
        \end{aligned}
    \end{equation}
    To show this identity, recall
\begin{align*}
    \alpha_{P,\lambda}^h &= \argmin_{\alpha\in \mathcal{H}} \lambda \|\alpha\|^2_{L^2(P_X)} + \|T_P\alpha - g_P^{\dag}\|^2_{L^2(P_Z)}.
\end{align*}
By computing the first order Fréchet derivative, we find
\begin{align}
\label{eq: frechet_zero_lambda_alpha_Plambda}
    \lambda \alpha_{P,\lambda}^h + T_P^{\ast}T_P\alpha_{P,\lambda}^h - T_P^{\ast}g_P^{\dag} = 0.
\end{align}
Therefore, for any $\alpha\in \mathcal{H}$, we have
\begin{align*}
    &\left\|T_P\alpha - g_P^{\dag}\right\|^2_{L^2(P_Z)} + \lambda \|\alpha\|^2_{L^2(P_X)} - \left\|T_P\alpha_{P,\lambda}^h - g_P^{\dag}\right\|^2_{L^2(P_Z)} - \lambda \|\alpha^h_{P,\lambda}\|^2_{L^2(P_X)}\\
    &= \left\|T_P (\alpha - \alpha_{P,\lambda}^h)\right\|^2_{L^2(P_Z)} + \lambda \left\|\alpha - \alpha_{P,\lambda}^h\right\|^2_{L^2(P_X)}\\
    &+\underbrace{2 \left\langle \alpha - \alpha_{P,\lambda}^h, T_P^{\ast}T_P\alpha_{P,\lambda}^h - T_P^{\ast}g_P^{\dag}\right\rangle_{L^2(P_X)} + 2\lambda \left\langle \alpha - \alpha_{P,\lambda}^h, \alpha_{P,\lambda}^h\right\rangle_{L^2(P_X)}}_{= 0},
\end{align*}
where we use \cref{eq: frechet_zero_lambda_alpha_Plambda}.

\begin{proof}[Proof of \cref{thm:minimaxlearner_weak_riesz_feasible}]
Throughout the proof, we condition on the independent samples used to construct $\widehat{g}(\lambda_g)$. We use the shorthand
\[
\begin{alignedat}{3}
    \widehat g &\doteq \widehat g(\lambda_g), 
    &\qquad \lambda &\doteq \lambda_{\alpha^h,n}, 
    &\qquad \widehat\alpha &\doteq \widehat\alpha^h(\widehat g,\lambda),\\
    \alpha_P &\doteq \alpha_P^{h,\dag}, 
    &\qquad \alpha_\lambda &\doteq \alpha_{P,\lambda}^h, 
    &\qquad \delta_n &\doteq \delta_{\alpha^h,n}.
\end{alignedat}
\]
We use $\|\cdot\|_2$ to denote relevant $L^2$-norms. We use $\|\cdot\|_{2,n}$ to denote relevant $L^2(P_n)$-norms.

We recall the definition of $\widehat{\alpha}$:
    \begin{align}
    \label{eq:defn_alpha_hat}
        \widehat{\alpha} = \argmin_{\alpha\in \Hcal_n}\max_{g\in \Gcal_n} \mathbb{E}_n\left[2\widehat{g}(Z)g(Z) - 2\alpha(X)g(Z) - g(Z)^2 + \lambda \alpha(X)^2\right].
    \end{align}
    We introduce another useful shorthand
    \begin{align*}
        \widetilde{g} \doteq T_P(\alpha_P - \widehat{\alpha}).
    \end{align*}
    We have
    \begin{align*}
        &\left\|T_P(\alpha_P - \widehat{\alpha})\right\|^2_2\\
        &= 2\lra{g_P^{\dag}, \widetilde{g}}_2 - 2\lra{ T_P\widehat{\alpha}, \widetilde{g}}_2 - \lra{ \widetilde{g}, \widetilde{g}}_2\\
        &= 2\lra{\widehat{g}, \widetilde{g}}_2 + 2\lra{g_P^{\dag} - \widehat{g}, \widetilde{g}}_2 - 2\lra{T_P\widehat{\alpha} , \widetilde{g}}_2 - \lra{\widetilde{g}, \widetilde{g}}_2\\
        &= 2\lra{g_P^{\dag} - \widehat{g}, \widetilde{g}}_2 + \mathbb{E}\left[2\widehat{g}(Z)\widetilde{g}(Z) - 2\widehat{\alpha}(X)\widetilde{g}(Z) - \widetilde{g}(Z)^2\right],
    \end{align*}
    where in the last step we use the property of orthogonal projection $T_P$. Since $\widetilde g=q_{\widehat\alpha}$, \cref{eq:conc_alpha_q} gives, on $\mathcal E$,
    \begin{align}
        \left\|T_P(\alpha_P - \widehat{\alpha})\right\|^2_2 &\leq \delta_n \left\|\widetilde{g}\right\|_{2} + \delta_n^2 + 2\lra{g_P^{\dag} - \widehat{g}, \widetilde{g}}_2 \tag{A}\label{eq:decomp_a_hatalpha}\\
        &+ \mathbb{E}_n\left[2\widehat{g}(Z) \widetilde{g}(Z) - 2 \widehat{\alpha}(X)\widetilde{g}(Z) - {\widetilde{g}(Z)}^2\right] \tag{B}\label{eq:decomp_b_hatalpha}.
    \end{align}
    First, by the closedness \Cref{assum:wellspecified_weakriesz}, we have $\widetilde{g}\in \Gcal_n$. We have, on $\mathcal E$,
    \begin{align*}
        \cref{eq:decomp_b_hatalpha} &\leq \sup_{g\in \Gcal_n}\E_n\left[2\widehat{g}(Z)g(Z) - 2\widehat{\alpha}(X)g(Z) - g(Z)^2\right]\\
        &\stackrel{(a)}{\leq} \sup_{g\in \Gcal_n}\E_n\left[2\widehat{g}(Z)g(Z) - 2\alpha_{\lambda}(X)g(Z) - g(Z)^2 + \lambda \alpha_{\lambda}(X)^2 - \lambda \widehat{\alpha}(X)^2\right]\\
        &\stackrel{(b)}{\leq} \sup_{g\in \Gcal_n}\left\{\E\left[2\widehat{g}(Z)g(Z) - 2\alpha_{\lambda}(X)g(Z) - g(Z)^2 \right] + \delta_n \|g\|_{2} + \delta_n^2\right\} + \E_n\left[\lambda \alpha_{\lambda}(X)^2 - \lambda \widehat{\alpha}(X)^2\right]\\
        &\leq \sup_{g\in \Gcal_n}\E\left[2\widehat{g}(Z)g(Z) - 2\alpha_{\lambda}(X)g(Z) - \frac{1}{2}g(Z)^2\right] + \frac{3\delta_n^2}{2} + \E_n\left[\lambda \alpha_{\lambda}(X)^2 - \lambda \widehat{\alpha}(X)^2\right]\\
        &\leq \sup_{g\in L^2(P_Z)}\E\left[2\widehat{g}(Z)g(Z) - 2\alpha_{\lambda}(X)g(Z) - \frac{1}{2}g(Z)^2\right] + \frac{3\delta_n^2}{2} + \E_n\left[\lambda \alpha_{\lambda}(X)^2 - \lambda \widehat{\alpha}(X)^2\right]\\
        &\stackrel{(c)}{=} 2\left\|\widehat{g} - T_P\alpha_{\lambda}\right\|^2_{2} + \frac{3\delta_n^2}{2} + \E_n\left[\lambda \alpha_{\lambda}(X)^2 - \lambda \widehat{\alpha}(X)^2\right].
    \end{align*}
    In the above derivations, we use \cref{eq:defn_alpha_hat} in $(a)$, we use \cref{eq:conc_g_alpha_lambda} in $(b)$, in $(c)$, after enlarging the supremum to $L^2(P_Z)$, the supremum is attained at $g=2(\widehat g-T_P\alpha_\lambda)$. We have, by differences of squares, 
    \begin{align}
    \label{eq:diff_sq_alpha_id}
        \left\|\widehat{g} - T_P\alpha_{\lambda}\right\|^2_2 - \left\|g_P^{\dag} - T_P\alpha_{\lambda}\right\|^2_{2} = \lra{\widehat{g} + g_P^{\dag} - 2T_P\alpha_{\lambda}, \widehat{g} - g_P^{\dag}}_2.
    \end{align}
    Therefore,
    \begin{align}
        &\left\langle g_P^{\dag} - \widehat{g}, \widetilde{g}\right\rangle_2 + \left\|\widehat{g} - T_P\alpha_{\lambda}\right\|^2_2\nonumber\\
        &= \left\langle g_P^{\dag} - \widehat{g}, \widetilde{g}\right\rangle_2 + \left\|g_P^{\dag} - T_P\alpha_{\lambda}\right\|^2_{2} + \lra{\widehat{g} + g_P^{\dag} - 2T_P\alpha_{\lambda}, \widehat{g} - g_P^{\dag}}_2\nonumber\\
        &= \left\langle g_P^{\dag} - \widehat{g}, \widetilde{g} + 2T_P\alpha_{\lambda} - \widehat{g} - g_P^{\dag}\right\rangle_2 + \left\|g_P^{\dag} - T_P\alpha_{\lambda}\right\|^2_{2}\nonumber\\
        &= \left\langle g_P^{\dag} - \widehat{g}, \cancel{T_P\alpha_P} - T_P\widehat{\alpha} + 2T_P\alpha_{\lambda} - \widehat{g} - \cancel{g_P^{\dag}}\right\rangle_2 + \left\|g_P^{\dag} - T_P\alpha_{\lambda}\right\|^2_{2}\nonumber\\
        &= \left\langle g_P^{\dag} - \widehat{g},  - T_P\widehat{\alpha} + 2T_P\alpha_{\lambda} - \widehat{g}\right\rangle_2 + \left\|g_P^{\dag} - T_P\alpha_{\lambda}\right\|^2_{2}\label{eq:diff_sq_alpha_id_v2}.
    \end{align}
    Hence, on $\mathcal E$,
    \begin{align}
        &\|T_P(\alpha_P - \widehat{\alpha})\|^2_{2}\nonumber\\
        &\leq \delta_n \left\|\widetilde{g}\right\|_{2} + \frac{5}{2}\delta_n^2 + 2\lra{g_P^{\dag} - \widehat{g}, \widetilde{g}}_2 + 2\left\|\widehat{g} - T_P\alpha_{\lambda}\right\|^2_{2} +  \E_n\left[\lambda \alpha_{\lambda}(X)^2 - \lambda \widehat{\alpha}(X)^2\right]\nonumber\\
        &\stackrel{(a)}{=} \delta_n \left\|\widetilde{g}\right\|_{2} + \frac{5}{2}\delta_n^2 + 2\lra{g_P^{\dag} - \widehat{g},  - T_P\widehat{\alpha} + 2T_P\alpha_{\lambda} - \widehat{g}}_2+ 2\left\|T_P(\alpha_P - \alpha_{\lambda})\right\|^2_{2}\nonumber\\ &\qquad  + \E_n\left[\lambda \alpha_{\lambda}(X)^2 - \lambda \widehat{\alpha}(X)^2\right]\nonumber\\
        &\stackrel{(b)}{=} 2\left\|T_P(\alpha_P - \alpha_{\lambda})\right\|^2_{2} + \frac{5}{2}\delta_n^2 + \E_n\left[\lambda \alpha_{\lambda}(X)^2 - \lambda \widehat{\alpha}(X)^2\right] + \delta_n \left\|\widetilde{g}\right\|_{2}\nonumber\\
        &\qquad + 2\lra{g_P^{\dag} - \widehat{g},  T_P(\alpha_{\lambda} - \widehat{\alpha})}_2 + 2\lra{g_P^{\dag} - \widehat{g}, T_P(\alpha_{\lambda} - \alpha_P)}_2 + 2\lra{g_P^{\dag} - \widehat{g},  g_P^{\dag} - \widehat{g}}_{2}\nonumber\\
        &\stackrel{(c)}{\leq} 3\left\|T_P(\alpha_P - \alpha_{\lambda})\right\|^2_{2} + 3\left\|g_P^{\dag} - \widehat{g}\right\|^2_{2} + \frac{5}{2}\delta_n^2 + \E_n\left[\lambda \alpha_{\lambda}(X)^2 - \lambda \widehat{\alpha}(X)^2\right] + \delta_n \left\|\widetilde{g}\right\|_{2}\nonumber\\
        &\qquad + 2\lra{2\left(g_P^{\dag} - \widehat{g}\right),  \frac{1}{2}\left(T_P(\alpha_{\lambda} - \widehat{\alpha})\right)}_2\nonumber\\
        &\stackrel{(d)}{\leq}3\left\|T_P(\alpha_P - \alpha_{\lambda})\right\|^2_{2} + 7\left\|g_P^{\dag} - \widehat{g}\right\|^2_{2} + \frac{5}{2}\delta_n^2 \nonumber\\ &\qquad + \E_n\left[\lambda \alpha_{\lambda}(X)^2 - \lambda \widehat{\alpha}(X)^2\right] + \delta_n \left\|\widetilde{g}\right\|_{2} + \frac{1}{4}\left\|T_P(\alpha_{\lambda} - \widehat{\alpha})\right\|^2_{2} , \label{eq:tp_alphap_alphahat}
    \end{align}
    In the above derivations, we use \cref{eq:diff_sq_alpha_id_v2} in $(a)$, in $(b)$ we split the term $\left\langle g_P^{\dag} - \widehat{g}, 2T_P\alpha_{\lambda} - T_P\widehat{\alpha} - \widehat{g}\right\rangle_2$ in three terms, recovering the squared norm $2\left\|g_P^{\dag} - \widehat{g}\right\|^2_{2}$, in $(c)$ we complete the squares to see
    \begin{align*}
        \|T_P(\alpha_P - \alpha_{\lambda})\|^2_{2} + \left\|g_P^{\dag} - \widehat{g}\right\|^2_{2} + 2 \lra{g_P^{\dag} - \widehat{g}, T_P(\alpha_{\lambda} - \alpha_P)}_2\geq 0,
    \end{align*}and in $(d)$ we further complete the squares to find 
    \begin{align*}
        2\lra{2\left(g_P^{\dag} - \widehat{g}\right),  \frac{1}{2}\left(T_P(\alpha_{\lambda} - \widehat{\alpha})\right)}_2 \leq 4\left\|g_P^{\dag} - \widehat{g}\right\|^2_{2} + \frac{1}{4}\left\|T_P(\alpha_{\lambda} - \widehat{\alpha})\right\|^2_{2}. 
    \end{align*}
     We find, on $\mathcal E$,
    \begin{align*}
        &\|T_P(\widehat{\alpha}- \alpha_{\lambda})\|^2_{2} + \lambda \|\widehat{\alpha} - \alpha_{\lambda}\|^2_{2}\\
        &\stackrel{(a)}{=} \|T_P (\widehat{\alpha} - \alpha_P)\|^2_{2} + \lambda \|\widehat{\alpha}\|^2_2 - \|T_P(\alpha_{\lambda} - \alpha_P)\|^2_{2} - \lambda \|\alpha_{\lambda}\|^2_{2} \\
        &\stackrel{(b)}{\leq} 2\left\|T_P(\alpha_P - \alpha_{\lambda})\right\|^2_{2} + 7\left\|g_P^{\dag} - \widehat{g}\right\|^2_{2} + \frac{5}{2}\delta_n^2\\ &\qquad +\lambda (\mathbb{E}- \E_n)\left[ \widehat{\alpha}(X)^2- \alpha_{\lambda}(X)^2\right] + \delta_n \left\|\widetilde{g}\right\|_{2} + \frac{1}{4}\left\|T_P(\alpha_{\lambda} - \widehat{\alpha})\right\|^2_{2}\\
        &\stackrel{(c)}{\leq} 2\left\|T_P(\alpha_P - \alpha_{\lambda})\right\|^2_{2} + 7\left\|g_P^{\dag} - \widehat{g}\right\|^2_{2} + \frac{9}{2}\delta_n^2\\ &\qquad +\lambda (\mathbb{E}- \E_n)\left[ \widehat{\alpha}(X)^2- \alpha_{\lambda}(X)^2\right] + \frac{1}{8}\|T_P(\alpha_P - \widehat{\alpha})\|^2_{2} + \frac{1}{4}\left\|T_P(\alpha_{\lambda} - \widehat{\alpha})\right\|^2_{2}\\
        &\stackrel{(d)}{\leq} \frac{9}{4}\left\|T_P(\alpha_P - \alpha_{\lambda})\right\|^2_{2} + 7\left\|g_P^{\dag} - \widehat{g}\right\|^2_{2} + \frac{9}{2}\delta_n^2\\ &\qquad +\lambda (\mathbb{E}- \E_n)\left[ \widehat{\alpha}(X)^2- \alpha_{\lambda}(X)^2\right] + \frac{1}{2}\left\|T_P(\alpha_{\lambda} - \widehat{\alpha})\right\|^2_{2},
    \end{align*}
    where in the above derivations, we apply the identity \cref{eq:identity} for $\widehat{\alpha}$ in $(a)$, we apply \cref{eq:tp_alphap_alphahat} in $(b)$, we complete the square in $(c)$ to write
    \begin{align*}
        \delta_n \|\widetilde{g}\|_{2}\leq \frac{1}{8}\|T_P(\alpha_P - \widehat{\alpha})\|^2_{2} + 2\delta_n^2,
    \end{align*}
    and we apply the triangular inequality to $\|T_P(\alpha_P - \widehat{\alpha})\|^2_{2}$ in $(d)$. By \cref{eq:conc_alpha_sq} with $\widehat\alpha$, on $\mathcal E$, 
    \begin{align*}
    (\mathbb{E}_{n}- \mathbb{E})\left[\alpha_{\lambda}(X)^2 - \widehat{\alpha}(X)^2\right] \leq \delta_{n}\left\|\alpha_{\lambda} - \widehat{\alpha}\right\|_{2} + \delta_{n}^2 \leq \frac{\delta_{n}^2}{2} + \frac{1}{2}\left\|\alpha_{\lambda} - \widehat{\alpha}\right\|_{2}^2 + \delta_{n}^2.
\end{align*}
Hence on $\mathcal E$: 
\begin{align*}
    &\|T_P(\widehat{\alpha}- \alpha_{\lambda})\|^2_{2} + \lambda \|\widehat{\alpha} - \alpha_{\lambda}\|^2_{2}\\
    &\leq \frac{9}{4}\left\|T_P(\alpha_P - \alpha_{\lambda})\right\|^2_{2} + 7\left\|g_P^{\dag} - \widehat{g}\right\|^2_{2} + \frac{9}{2}\delta_n^2\\ &\qquad +
    \frac{\lambda}{2}\delta_n^2 + \frac{\lambda}{2}\|\alpha_{\lambda} - \widehat{\alpha}\|^2_{2} + \lambda \delta_n^2 + \frac{1}{2}\left\|T_P(\alpha_{\lambda} - \widehat{\alpha})\right\|^2_{2}. 
\end{align*}
Rearranging and using $\lambda \leq 1$, we have
\begin{align*}
    \|T_P(\widehat{\alpha}- \alpha_{\lambda})\|^2_{2} + \lambda \|\widehat{\alpha} - \alpha_{\lambda}\|^2_{2} \leq \frac{9}{2}\left\|T_P(\alpha_P - \alpha_{\lambda})\right\|^2_{2} + 14\left\|g_P^{\dag} - \widehat{g}\right\|^2_{2} + 12\delta_n^2.  
\end{align*}
By \Cref{lem:ppl_reg_bias_alpha}, we have on $\mathcal{E}$, 
\begin{align*}
    \left\|T_P(\widehat{\alpha} - \alpha_P)\right\|^2_{2} &\leq 20\|T_P(\alpha_P - \alpha_{\lambda})\|^2_{2} + 28\left\|g_P^{\dag} - \widehat{g}\right\|^2_{2} + 24\delta_n^2\\
    &= O\left(\lambda + \delta_n^2 + \left\|g_P^{\dag} - \widehat{g}\right\|^2_{2}\right).
\end{align*}
We also have
\begin{align*}
     \|\widehat{\alpha} - \alpha_P\|^2_{2} \leq 2\|\alpha_{\lambda} - \alpha_P\|^2_{2} + \frac{2}{\lambda}\left\{9\left\|T_P(\alpha_P - \alpha_{\lambda})\right\|^2_{2} + 14\left\|g_P^{\dag} - \widehat{g}\right\|^2_{2} + 12\delta_n^2\right\}. 
\end{align*}
Since $\left\|T_P(\alpha_P - \alpha_{\lambda})\right\|^2_{2} = o(\lambda)$ and $\|\alpha_{\lambda} - \alpha_P\|^2_{2} = o(1)$, the right hand side is $o(1)$ if 
\begin{align*}
    \frac{2}{\lambda}\left\{14\left\|g_P^{\dag} - \widehat{g}\right\|^2_{2} + 12\delta_n^2\right\} = o(1),
\end{align*}
as $n\to\infty$. 
\end{proof}

\subsection{Proofs for \Cref{sub:learners_of_orthogonal_residuals}}
\begin{proof}[Proof of \cref{thm:orthogonal_residual_fixed}]
We define the shorthand
\begin{align*}
    q_0 &\doteq r_P-T_Ph_1\\
    \widehat q &\doteq \widehat r_\perp(h_1)\\
    \ell_q(W) &\doteq q(Z)^2-2\big\{\widetilde m(W;q)-h_1(X)q(Z)\big\}. 
\end{align*}
We have,
\begin{align*}
    &\E_P[\ell_q(W)-\ell_{q_0}(W)]\\
    &=\E_P\left[q(Z)^2 - q_0(Z)^2 - 2\{\widetilde{m}(W;q) - h_1(X)q(Z)\} + 2\{\widetilde{m}(W;q_0) - h_1(X)q_0(Z)\}\right]\\
    &= \E_P \left[q(Z)^2 - q_0(Z)^2 - 2r_P(Z)q(Z) + 2r_P(Z)q_0(Z) + 2h_1(X)(q(Z) - q_0(Z))\right]\\
    &= \E_P \left[q(Z)^2 - q_0(Z)^2 - 2q_0(Z)q(Z) + 2(r_P(Z) - T_Ph_1(Z))q_0(Z)\right]\\
    &= \|q - q_0\|^2_{L^2(P_Z)}. 
\end{align*}
By \Cref{eq:rperp_learner}, $\widehat r_\perp(h_1)$ is a minimizer of  $\E_n\ell_q(W)$ over $\Gcal_n^{r}$, namely
\begin{align*}
    \widehat{q} = \argmin_{q\in \Gcal_{r,n}} \E_{n}[\ell_q(W)]. 
\end{align*}
Hence we have
\begin{align*}
    \|\widehat{q} - q_0\|_{L^2(P_Z)}^2 &= \E_P\left[\ell_{\widehat{q}}(W) - \ell_{q_0}(W)\right]\\
    &= \underbrace{\E_{n}\left[\ell_{\widehat{q}}(W) - \ell_{q_0}(W)\right]}_{\leq 0 \text{ as }q_0\in \Gcal_{r,n}} + (\E_P - \E_{n})\left[\ell_{\widehat{q}}(W) - \ell_{q_0}(W)\right]\\
    &\leq \big|(\E_P - \E_{n})\left[\ell_{\widehat{q}}(W) - \ell_{q_0}(W)\right]\big|. 
\end{align*}
Thus it suffices to bound $\big|(\E_P - \E_{n})\left[\ell_{\widehat{q}}(W) - \ell_{q_0}(W)\right]\big|$. Since $q(Z)$ is uniformly bounded by $O(1)$ over $\Gcal_n^{r}$, and $h_1(X)$ is uniformly bounded, we have that $q(Z)^2 + 2h_1(X)q(Z)$ is $O(1)$-Lipschitz in $q(Z)$. On the other hand, \Cref{assum:mscont} yields
\[
    \|\widetilde m(W;q)-\widetilde m(W;q_0)\|_{L^2(P)}
    \lesssim
    \|q-q_0\|_{L^2(P_Z)}.
\]
Thus \Cref{lem:lipschitz_wainwright} gives, uniformly over $q\in\Gcal_{r,n}$, with probability at least $1-\zeta$,
\[
    \left|(\E_n-\E_P)[\ell_q(W)-\ell_{q_0}(W)]\right| =
    O\left(\delta_{r_\perp,n}\|q-q_0\|_{L^2(P_Z)}+ \delta_{r_\perp,n}^2\right).
\]
Setting $q = \widehat{q}$ in the above inequality, we obtain the following inequality for $\|\widehat{q} - q_0\|_{L^2(P_Z)}$
\begin{align*}
   \|\widehat{q} - q_0\|^2_{L^2(P_Z)} \leq \left|(\E_n-\E_P)[\ell_{\widehat{q}}(W)-\ell_{q_0}(W)]\right| =
    O\left(\delta_{r_\perp,n}\|\widehat{q}-q_0\|_{L^2(P_Z)}+ \delta_{r_\perp,n}^2\right).
\end{align*}
Solving this inequality yields
\[
    \|\widehat r_\perp(h_1)-(r_P-T_Ph_1)\|_{L^2(P_Z)}^2 = O(\delta_{r_\perp,n}^2).
\]
\end{proof}

\begin{proof}[Proof of \cref{cor:rperp_plugin}]
We condition on the first-stage sample used to build $\widehat h(\lambda_h)$. By
\Cref{thm:orthogonal_residual_fixed},
\[
    \|\widehat r_\perp(\widehat h(\lambda_h))-
    \{r_P-T_P\widehat h(\lambda_h)\}\|_{L^2(P_Z)}^2
    =
    O_p(\delta_{r_\perp,n}^2).
\]
Since $r_{P,\perp}=r_P-T_Ph_P^\dag$, \Cref{thm:minimaxlearner} gives
\[
    \|T_P(\widehat h(\lambda_h)-h_P^\dag)\|_{L^2(P_Z)}^2
    =
    O_p\left(\delta_{h,n}^2+
    \lambda_h^{\min\{\beta_h+1,2\}}\right).
\]
The result follows from applying the triangle inequality and then marginalizing over the
first-stage sample. 
\end{proof}

\paragraph{Notation.}
For a vector-valued class $\mathcal F\subseteq\{f:\mathcal X\to\mathbb R^d\}$,
write
\[
    \mathcal F|_t:=\{f_t:f\in\mathcal F\},\qquad t=1,\ldots,d,
\]
where $f_t$ is the $t$-th coordinate of $f$. For $f:\mathcal X\to\mathbb R^d$,
set
\[
    \|f\|_{2,2}:=\left(\mathbb E\|f(X)\|_{\ell_2}^2\right)^{1/2}. 
\]
For a scalar class $\mathcal A$, define
\[
    \operatorname{star}(\mathcal A):=\{t a:a\in\mathcal A,\ 0\le t\le1\}, \qquad \mathcal A-a^\star:=\{a-a^\star:a\in\mathcal A\}.
\]
We write $\mathcal R(\mathcal A,r)$ when $n$ is clear. 
The following lemma is \citet[Lemma 14]{foster2023orthogonal}. 
\begin{lem}[Localized concentration]\label{lem:lipschitz_wainwright}
Consider a function class $\cF$, with $\sup_{f\in \cF} \|f\|_{\infty}\leq 1$, and pick any $f^{\star}\in \cF$. Let $\delta_n^2\geq \frac{4\,d\,\log(41\log(2c_2 n))}{c_2 n}$ be any solution to the inequalities:
\begin{equation}
\forall t\in \{1,\ldots, d\}: \mathcal{R}(\ensuremath{\mathrm{star}}(\mathcal{F}|_t - f_t^{\star}),\delta) \leq \delta^2.
\end{equation}
Moreover, assume that the loss $\ell$ is $L$-Lipschitz in its first argument with respect to the $\ell_2$ norm. Then for some universal constants $c_5, c_6$, with probability $1-c_5 \exp(-c_6 n \delta_n^2)$,
\begin{align}\label{eqn:lipschitz_loss_gen}
\left| \mathbb{P}_n (\mathcal{L}_{f} - \mathcal{L}_{f^{\star}}) - \mathbb{P} (\mathcal{L}_{f} - \mathcal{L}_{f^{\star}}) \right| \leq~& 18 L\, d\, \delta_n \{ \|f - f^{\star}\|_{2,2} + \delta_n\},\quad \forall f \in \cF.
\end{align}
\end{lem}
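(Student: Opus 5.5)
The plan is the standard localized-Rademacher argument in \citet[Chapter 14]{wainwright2019high}, adapted to vector-valued classes through a vector contraction inequality. Without loss of generality take $f^\star$ as reference and set $g=f-f^\star$, so that $g$ ranges over $\mathcal F-f^\star$ with $\|g\|_\infty\le 2$. Since $\ell$ is $L$-Lipschitz in its first argument with respect to $\ell_2$, the centred losses $\mathcal L_f-\mathcal L_{f^\star}$ satisfy $|\mathcal L_f-\mathcal L_{f^\star}|\le L\|f-f^\star\|_{\ell_2}$ pointwise. It therefore suffices to control, for each radius $r$,
\[
Z_n(r)\doteq\sup_{f\in\mathcal F:\ \|f-f^\star\|_{2,2}\le r}\left|(\mathbb P_n-\mathbb P)(\mathcal L_f-\mathcal L_{f^\star})\right|.
\]

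\textbf{Step 1 (fixed radius).} I would first bound $\E Z_n(r)$. Symmetrization gives $\E Z_n(r)\le 2\,\E\sup|\frac1n\sum_i\epsilon_i(\mathcal L_f-\mathcal L_{f^\star})(W_i)|$. Maurer's vector contraction inequality then removes the Lipschitz loss at the cost of a factor $\sqrt2 L$ and $d$ independent Rademacher sequences. This leaves $\sum_{t=1}^d\E\sup|\frac1n\sum_i\epsilon_{it}(f_t-f^\star_t)(W_i)|$ over the localized set. Because $\|f-f^\star\|_{2,2}\le r$ forces $\|f_t-f_t^\star\|_2\le r$ for every $t$, each summand is at most $\mathcal R(\mathrm{star}(\mathcal F|_t-f^\star_t),r)$. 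Star-shapedness makes $\delta\mapsto\mathcal R(\cdot,\delta)/\delta$ nonincreasing, so for $r\ge\delta_n$ we get $\mathcal R\le r\delta_n$. Hence $\E Z_n(r)\lesssim L\,d\,r\,\delta_n$. Talagrand's (or Bousquet's) inequality for bounded empirical processes, using variance proxy $L^2r^2$ and envelope $O(L)$, then gives
\[
Z_n(r)\lesssim L\,d\,r\,\delta_n+Lr\sqrt{u/n}+Lu/n
\]
with probability at least $1-e^{-u}$.

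\textbf{Step 2 (peeling).} To obtain the bound uniformly in $f$ with the adaptive factor $\|f-f^\star\|_{2,2}+\delta_n$, I would peel over dyadic shells $r_k=2^k\delta_n$ for $k=0,\dots,K$. Since everything is bounded by a constant, $K=O(\log(1/\delta_n))=O(\log n)$ shells suffice. Functions with $\|f-f^\star\|_{2,2}\le\delta_n$ are handled by the $k=0$ shell, which produces the $\delta_n^2$ term. On shell $k$, I choose $u=c\,n\delta_n^2$ and take a union bound over the $K$ shells. The failure probability is then $K e^{-cn\delta_n^2}$, and this is at most $c_5\exp(-c_6n\delta_n^2)$ precisely because of the lower bound $\delta_n^2\gtrsim d\log\log n/n$. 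The $\log\log n$ term in that lower bound pays for the union over shells. On each shell $r_{k-1}<\|f-f^\star\|_{2,2}\le r_k$ we have $r_k\le 2\|f-f^\star\|_{2,2}$, and $\sqrt{u/n}\asymp\delta_n$. So the Step 1 bound becomes $\lesssim L\,d\,\delta_n(\|f-f^\star\|_{2,2}+\delta_n)$. Tracking constants as in \citet{foster2023orthogonal} should yield the stated factor $18$.

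\textbf{Main obstacle.} I expect the delicate step to be the passage from the vector-valued Lipschitz loss to the coordinatewise critical radii. Scalar Ledoux--Talagrand contraction does not apply here, so Maurer's inequality is required. One must also check that the localization $\|f-f^\star\|_{2,2}\le r$ transfers to each coordinate class and that star hulls are preserved. This transfer is what produces the linear factor $d$. A secondary point is making the constants in Talagrand's inequality and in the peeling union bound consistent with the stated threshold $4d\log(41\log(2c_2n))/(c_2n)$. For that I would follow the proof of \citet[Theorem 14.20]{wainwright2019high} line by line.
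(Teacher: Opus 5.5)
Your proposal is correct and follows essentially the argument behind this lemma. The paper gives no proof of its own; it imports the statement as \citet[Lemma 14]{foster2023orthogonal}. That proof is \citet[Theorem 14.20]{wainwright2019high}: symmetrization, Talagrand concentration at a fixed radius, and peeling over dyadic shells, with the $\log\log n$ lower bound on $\delta_n^2$ paying for the union over shells. The only change is that Maurer's vector contraction replaces the scalar Ledoux--Talagrand step, which is where the factor $d$ comes from.

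One small detail to make explicit. Maurer's inequality is stated without absolute values, so you need an extra step. Use that $f^\star$ lies in every localized set, so the process vanishes there, to pass from $\E\sup|\cdot|$ to $2\,\E\sup(\cdot)$. This costs only a harmless constant.
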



\section{Proofs for debiased inference}\label{sec:inference_proofs}

\paragraph{Empirical process notations.} Let $P_n$ denote the empirical measure of $\mathcal{I}_4$, i.e. the sample used for evaluation in the debiased estimator $\widehat{\Psi}$. For a function $f: E_W \to \mathbb{R}$, we define
\begin{align*}
    P_n f &\doteq \frac{1}{n}\sum_{w_i\in \mathcal{I}_4} f(w_i)\\
    P f &\doteq \int f(w)\,P(dw).
\end{align*}

\begin{proof}[Proof of \cref{thm:autodml-clt}]
Recall $\widehat{\Psi}$ defined in \cref{eq:autodml-estimator}
\begin{align*}
    \widehat{\Psi} &= \frac{1}{n}\sum_{i\in \mathcal{I}_4} \chi(W_i;\widehat{\eta})\\ &= P_n \left\{\chi(\cdot;\widehat{\eta})\right\} \\
    &= (P_n - P)\left\{\chi(\cdot;\widehat{\eta})\right\} + P\left\{\chi(\cdot;\widehat{\eta})\right\} \\
    &= (P_n - P)\left\{\chi(\cdot;\widehat{\eta}) - \chi_P\right\} + (P_n - P)\left\{\chi_P\right\} + P\left\{\chi(\cdot;\widehat{\eta})\right\}, 
\end{align*}
where $\chi_P$ is defined in \cref{eq:chi_P_defn}. We therefore have the algebraic expansion 
\begin{align*}
\sqrt n(\widehat\Psi-\Psi(P))
&=
\sqrt n(P_n-P)\{\chi_P\}\tag{IF}\label{eq:vme_if}\\
&\quad+
\sqrt n(P_n-P)\left\{\chi(\cdot;\widehat\eta)-\chi_P\right\}\tag{EQ}\label{eq:vme_eq}\\
&\quad+
\sqrt n\left(P\{\chi(\cdot;\widehat\eta)\}-\Psi(P)\right).\tag{R}\label{eq:vme_r}
\end{align*}
Since $\chi_P(w) = \varphi_P(w) + \Psi(P)$, and $\mathbb{E}_P[\chi_P(W)] = \Psi(P)$, we have
\begin{align*}
    \eqref{eq:vme_if} &= \sqrt{n}\left(\frac{1}{n}\sum_{i\in \mathcal{I}_4}\varphi_P(W_i)\right).
\end{align*}
We will leave \eqref{eq:vme_if} as is and defer applying a Central Limit Theorem towards the end. 
We now control \eqref{eq:vme_eq} using the fact that $\mathcal{I}_4$ is independent of $\mathcal{I}_1, \mathcal{I}_2, \mathcal{I}_3$. In the absence of such an independence assumption, we would need to restrict the complexity of nuisance function class and invoke a uniform law of large numbers \citep{dudley2014uniform}. 
By independence, the expectation over the samples in $\mathcal{I}_4$ is the same as the conditional expectation over $\mathcal{I}_4$ conditioned on $\mathcal{I}_1, \mathcal{I}_2, \mathcal{I}_3$. We denote it by $\mathbb{E}_{\mathcal{I}_4}$.  We have
\begin{align*}
    &\mathbb{E}_{\mathcal{I}_4}\left[\bigg((P_n-P)\left\{\chi(\cdot;\widehat\eta)-\chi_P\right\}\bigg)^2\right] = \frac1n
\Var_P\{\chi(W;\widehat\eta)-\chi_P(W)\} \leq \frac{1}{n} \mathbb{E}_{W}\left[\bigg(\chi(W;\widehat{\eta}) - \chi_P(W)\bigg)^2\right]. 
\end{align*}
Thus conditional Chebyshev inequality gives, for every $\epsilon>0$,
\begin{align}
    \label{eq:cond_chebyshev}
    \Prb\left(
\left|\sqrt n(P_n-P)\{\chi(\cdot;\widehat\eta)-\chi_P\}\right|>\epsilon
\,\middle|\,\widehat\eta
\right)
\le
\frac{
\mathbb{E}_{W}\big[(\chi(W;\widehat{\eta}) - \chi_P(W))^2\big]
}{\epsilon^2}.
\end{align}
We have
\begin{align*}
    &\chi(w;\widehat{\eta}) - \chi_P\\
    &= m(w;\widehat{h}) + \widetilde{m}(w;\widehat{g}) - \widehat{h}(x)\widehat{g}(z) + \widehat{r}_{\perp}(z)\{\widehat{\alpha}^h(x) - \widehat{g}(z)\} + \widehat{a}_{\perp}(x)\{\widehat{\alpha}^g(z) - \widehat{h}(x)\}\\
    &\quad -  m(w; h_P^{\dag}) - \widetilde{m}(w;g_P^{\dag}) + h_P^{\dag}(x)g_P^{\dag}(z) - r_{P, \perp}(z) \left\{ \alpha_P^{h,\dag}(x) - g_P^{\dag}(z)\right\} - a_{P, \perp}(x)\left\{\alpha_{P}^{g,\dag}(z) - h_P^{\dag}(x)\right\}.
\end{align*}
We have, by the above identity and triangle inequality, 
\begin{align}
    &\mathbb{E}_{W}\left[\big(\chi(W;\widehat{\eta}) - \chi_P(W)\big)^2\right]\nonumber\\ 
    &\leq 5\mathbb{E}_W\left[\left(m(W;\widehat{h}) - m(W;h_P^{\dag})\right)^2\right] +5\mathbb{E}_W\left[\left(\widetilde{m}(W;\widehat{g}) - \widetilde{m}(W;g_P^{\dag})\right)^2\right] \tag{A}\label{eq:inf_bded_a}\\
    &+5\mathbb{E}_W\left[\left(\widehat{h}(X)\widehat{g}(Z) - h_P^{\dag}(X)g_P^{\dag}(Z)\right)^2\right] \tag{B}\label{eq:inf_bded_b}\\
    &+5 \mathbb{E}_W\left[\left(\widehat{r}_{\perp}(Z)\{\widehat{\alpha}^h(X) - \widehat{g}(Z)\} - r_{P, \perp}(Z) \left\{ \alpha_P^{h,\dag}(X)- g_P^{\dag}(Z)\right\}\right)^2\right]\tag{C}\label{eq:inf_bded_c}\\
    &+ 5\mathbb{E}_W\left[\left(\widehat{a}_{\perp}(X)\{\widehat{\alpha}^g(Z) - \widehat{h}(X)\} - a_{P, \perp}(X)\left\{\alpha_{P}^{g,\dag}(Z) - h_P^{\dag}(X)\right\}\right)^2\right].\tag{D}\label{eq:inf_bded_d}
\end{align}
By linearity of $m$ and $\widetilde{m}$,  \Cref{assum:mscont}, we have
\begin{align*}
    \eqref{eq:inf_bded_a} &\leq 5C\left\|\widehat{h} - h_P^{\dag}\right\|^2_{L^2(P_X)} + 5C\left\|\widehat{g} - g_P^{\dag}\right\|_{L^2(P_Z)}^2 \\
    &= 5C(e^{s}_h)^2 +5C(e^{s}_g)^2. 
\end{align*}
We next use the identity $a_1 b_1 - a_2b_2 = a_2(b_1 - b_2) + b_1(a_1 -a_2)$ to find
\begin{align*}
    \left(a_1 b_1 - a_2b_2\right)^2 \leq 2 a_2^2(b_1 - b_2)^2 + 2b_1^2 (a_1-a_2)^2. 
\end{align*}
We have
\begin{align*}
    \eqref{eq:inf_bded_b} &\leq 10\left\|\widehat{h}\right\|_{L^{\infty}(P_X)}^2 \left\|\widehat{g} - g^{\dag}_P\right\|_{L^2(P_Z)}^2 + 10 \left\|g^{\dag}_P\right\|^2_{L^{\infty}(P_Z)}\left\|\widehat{h} - h_P^{\dag}\right\|^2_{L^2(P_X)}\\
    \eqref{eq:inf_bded_c} &\leq 20 \left\|\widehat{r}_{\perp}\right\|^2_{L^{\infty}(P_Z)} \left(\left\|\widehat{\alpha}^h - \alpha_P^{h,\dag}\right\|_{L^2(P_X)}^2 +  \left\|\widehat{g}  -g_P^{\dag}\right\|^2_{L^2(P_Z)}\right)\\ &+ 20\left(\left\|\alpha_P^{h,\dag}\right\|_{L^{\infty}(P_X)}^2 + \left\|g_P^{\dag}\right\|_{L^{\infty}(P_Z)}^2\right)\left\|\widehat{r}_{\perp} - r_{P,\perp}\right\|^2_{L^2(P_Z)}\\
    \eqref{eq:inf_bded_d} &\leq 20 \left\|\widehat{a}_{\perp}\right\|^2_{L^{\infty}(P_X)} \left(\left\|\widehat{\alpha}^g - \alpha_P^{g,\dag}\right\|_{L^2(P_Z)}^2 +  \left\|\widehat{h}  -h_P^{\dag}\right\|^2_{L^2(P_X)}\right)\\ &+ 20\left(\left\|\alpha_P^{g,\dag}\right\|_{L^{\infty}(P_Z)}^2 + \left\|h_P^{\dag}\right\|_{L^{\infty}(P_X)}^2\right)\left\|\widehat{a}_{\perp} - a_{P,\perp}\right\|^2_{L^2(P_X)}\\
\end{align*}
Under the assumption that there exists a large constant $C>0$ such that
\begin{align*}
    \max\left\{\left\|\widehat{h}\right\|_{\infty}^2,\left\|h^{\dag}_P\right\|^2_{\infty}, \left\|g^{\dag}_P\right\|^2_{\infty}, \left\|\alpha_P^{h,\dag}\right\|_{\infty}^2, \left\|\alpha_P^{g,\dag}\right\|_{\infty}^2, \left\|\widehat{a}_{\perp}\right\|^2_{\infty},\left\|\widehat{r}_{\perp}\right\|^2_{\infty} \right\} \leq C,
\end{align*}
we have, upon redefining $C$, 
\begin{align*}
   \mathbb{E}_{W}\left[\big(\chi(W;\widehat{\eta}) - \chi_P\big)^2\right] \leq C\left\{(e^{s}_h)^2  + (e^{s}_g)^2 + (e^s_{\alpha^h})^2 + (e^s_{\alpha^g})^2 + (e^{s}_{r_{\perp}})^2 + (e^{s}_{a_{\perp}})^2\right\}. 
\end{align*}
As $n\to \infty$, all the terms on the right hand side are assumed to converge to $0$ in probability by \Cref{cond:inference_nuisance}, where the randomness is induced by $\mathcal{I}_1, \mathcal{I}_2, \mathcal{I}_3$. For any $\delta>0$, we define the event
\begin{align*}
    B_{\delta} \doteq \left\{\mathbb{E}_{W}\left[\big(\chi(W;\widehat{\eta}) - \chi_P(W)\big)^2\right] > \delta\right\}
\end{align*}
with respect to probability measure on samples $\mathcal{I}_i$, $i=1,2,3$. Let $1_{B_{\delta}}$ denote the indicator function of $B_{\delta}$. For every $\epsilon > 0$, we have
\begin{align*}
    P\left(|\eqref{eq:vme_eq}| \geq \epsilon\right) &= \E \left[P(|\eqref{eq:vme_eq}| > \epsilon \mid \widehat{\eta})1_{B_{\delta}}\right] + \E \left[P(|\eqref{eq:vme_eq}| > \epsilon \mid \widehat{\eta})(1- 1_{B_{\delta}})\right]\\
    &\leq P(B_{\delta}) + \frac{\delta}{\epsilon^2},
\end{align*}
where we apply \cref{eq:cond_chebyshev}. We have shown that $\lim_{n\to \infty} P(B_{\delta}) = 0$ for any fixed $\delta$. Hence, for every $\delta>0$, we have
\begin{align*}
    \limsup_{n\to \infty} P\left(|\eqref{eq:vme_eq}| \geq \epsilon\right) \leq \frac{\delta}{\epsilon^2}.
\end{align*}
Hence $|\eqref{eq:vme_eq}| = o_p(1)$. 

We bound \eqref{eq:vme_r} using the functional bias expansion derived in \Cref{thm: bias_char}. From \cref{eq:simple_bias_expansion}, we have
\begin{align*}
    n^{-\frac{1}{2}}|\eqref{eq:vme_r}| &\leq \left|\left\langle
    T_P(\widehat{h}-h_P^\dag),\widehat{g}-g_P^\dag
    \right\rangle_{L^2(P_Z)}\right|\\
    &+ \left|\left\langle
    T_P\widehat{\alpha}^h-\widehat{g},\widehat{r}_\perp-r_{P,\perp}
    \right\rangle_{L^2(P_Z)}\right|\\
    &+ \left|\left\langle
    T_P^*\widehat{\alpha}^g-\widehat{h},\widehat{a}_\perp-a_{P,\perp}
    \right\rangle_{L^2(P_X)}\right|.
\end{align*}
The first term above is bounded by $\min\{e_g^w e_h^s,\ e_g^s e_h^w\}$, by \Cref{rem:rates_weaken}. We further have, by the triangle inequality and the fact that $T_P \alpha_{P}^{h,\dag} = g_P^{\dag}$ and $ T_P^{\ast}\alpha_P^{g,\dag} = h_P^{\dag}$, 
\begin{align*}
    \left\|T_P\widehat\alpha^h-\widehat g\right\|_{L^2(P_Z)} &\leq \left\|T_P(\widehat{\alpha}^h - \alpha_{P}^{h,\dag})\right\|_{L^2(P_Z)} + \left\|g_P^{\dag} - \widehat{g}\right\|_{L^2(P_Z)} = e^w_{\alpha^h} + e^{s}_g\\
    \left\|T_P^{\ast}\widehat{\alpha}^g - \widehat{h}\right\|_{L^2(P_X)} &\leq  \left\|T_P^{\ast}(\widehat{\alpha}^g - \alpha_P^{g,\dag})\right\|_{L^2(P_X)} + \left\|h_P^{\dag}  - \widehat{h}\right\|_{L^2(P_X)} = e_{\alpha^g}^w + e_h^{s}
\end{align*}
Recalling
\begin{align*}
    e^{s}_{r_{\perp}} = \left\|\widehat{r}_{\perp} - r_{P, \perp}\right\|_{L^2(P_Z)}, \quad e^{s}_{a_{\perp}} = \left\|\widehat{a}_{\perp} - a_{P, \perp}\right\|_{L^2(P_X)},
\end{align*}
we obtain by applying Cauchy-Schwarz inequality that
\begin{align*}
n^{-\frac{1}{2}}\left|\eqref{eq:vme_r}\right| \leq &
\min\{e_g^w e_h^s,\ e_g^s e_h^w\} +
(e_{\alpha^h}^w+e_g^s)e_{r_\perp}^s
+
(e_{\alpha^g}^w+e_h^s)e_{a_\perp}^s .
\end{align*}
We further use \cref{eq:product_err_1} and \cref{eq:product_err_2} to conclude that
\begin{align*}
    \left|\eqref{eq:vme_r}\right| = o_p(1). 
\end{align*}
Therefore we have shown that
\begin{align}
\label{eq:negl_rem}
    |\eqref{eq:vme_eq} + \eqref{eq:vme_r}| = o_p(1)
\end{align}
under the Theorem's assumptions. This yields the asymptotic linearity of our estimator $\widehat{\Psi}$, as stated in \cref{eq:asymp_linear}. By assumption, $\sigma_P^2 = \E_P [\varphi_P(W)^2] < \infty$. Therefore we may apply a standard central limit theorem to conclude that 
\begin{align*}
    \eqref{eq:vme_if} \rightsquigarrow \mathcal{N}(0, \sigma_P^2). 
\end{align*}
We further have, 
\begin{align*}
    \sqrt{n}\left(\widehat{\Psi} - \Psi(P)\right) \rightsquigarrow \mathcal{N}(0, \sigma_P^2),
\end{align*}
due to \cref{eq:negl_rem} and an application of Slutsky's theorem. 

It remains to show variance consistency. We have already shown that
$P\{\chi(\cdot;\widehat\eta)-\chi_P\}^2=o_p(1)$. Conditional on the training
samples, $\widehat\eta$ is fixed, so
\[
\E_{\mathcal I_4}\left[\frac1n \sum_{i\in \mathcal{I}_4}\left(\chi(W_i;\widehat\eta)- \chi_P(W_i)\right)^2\,\middle|\,\widehat\eta\right]
=
\mathbb{E}_{W}\left[\big(\chi(W;\widehat{\eta}) - \chi_P(W)\big)^2\right] = o_p(1).
\]
Hence by Markov's inequality, 
\begin{align*}
    P_n\{\chi(\cdot;\widehat{\eta}) - \chi_P\}^2 = \frac1n\sum_{i\in \mathcal{I}_4}\left(\chi(W_i;\widehat\eta)-\chi_P(W_i)\right)^2 =  o_p(1). 
\end{align*}
We apply the reverse triangle inequality with respect to the empirical $L^2(P_n)$-norm, 
\begin{align*}
&\left|
\left[P_n\{\chi(\cdot;\widehat\eta)-P_n\chi(\cdot;\widehat\eta)\}^2\right]^{1/2}-\left[P_n\{\chi_P-P_n\chi_P\}^2\right]^{1/2}\right| \\
&\quad = \left|\left\|\chi(\cdot;\widehat\eta)-P_n\chi(\cdot;\widehat\eta)\right\|_{L^2(P_n)} - \left\|\chi_P-P_n\chi_P\right\|_{L^2(P_n)} \right|\\
&\quad \leq \left\|\chi(\cdot;\widehat\eta)- \chi_P - P_n \left\{\chi(\cdot;\widehat\eta) - \chi_P\right\}\right\|_{L^2(P_n)}\\
&\quad \leq \left\|\chi(\cdot;\widehat{\eta}) - \chi_P\right\|_{L^2(P_n)} = o_p(1),
\end{align*}
where the last inequality uses
\begin{align*}
    P_n\{f-P_nf\}^2=P_nf^2-(P_nf)^2\le P_nf^2. 
\end{align*}
Finally, since $\chi_P\in L^2(P)$, the law of large numbers gives
$P_n\chi_P\to_p P\chi_P$ and $P_n\chi_P^2\to_p P\chi_P^2$. Hence
\begin{align*}
    P_n \{\chi_P-P_n\chi_P\}^2 = P_n\chi_P^2-(P_n\chi_P)^2 \to_p P\chi_P^2-(P\chi_P)^2 = \sigma_P^2. 
\end{align*}
Hence we have proved $\widehat\sigma\to_p\sigma_P$.
\end{proof}

\end{document}